\newtheorem{proposition}{Proposition}
\newcommand{\off}[1]{}
\def\IR{\relax{\rm I\kern-.18em R}}
\def\p{\partial}
\DeclareMathOperator{\Div}{div}
\begin{document}
%
\title{Separation Surfaces in the Spectral TV Domain for Texture Decomposition}
%
%
%

\author{Dikla~Horesh 
        and~Guy~Gilboa,~\IEEEmembership{Member,~IEEE}
\thanks{D. Horesh and G. Gilboa are with the Department of Electrical Engineering, Technion -– Israel Institute of Technology,
Haifa 32000, Israel e-mail: (dikla@campus.technion.ac.il , guy.gilboa@ee.technion.ac.il ).}
}

%
%

\markboth{IEEE TRANSACTIONS ON IMAGE PROCESSING,~Vol.~xx, No.~x, September~2016}%
{Horesh \MakeLowercase{\textit{et al.}}: Separation Surfaces for Texture Decomposition}
%



\maketitle

\begin{abstract}
In this paper we introduce a novel notion of separation surfaces for image decomposition.
A surface is embedded in the spectral total-variation (TV) three dimensional domain and encodes a spatially-varying separation scale.
The method allows good separation of textures with gradually varying pattern-size, pattern-contrast or illumination. 
The recently proposed total variation spectral framework is used to decompose the image into a continuum of textural scales.
A desired texture, within a scale range, is found by fitting a surface to the local maximal responses in the spectral domain. A band above and below the surface, referred to as the \textit{Texture Stratum}, defines for each pixel the adaptive scale-range of the texture. 
Based on the decomposition an application is proposed which can attenuate or enhance textures in the image in a very natural and visually convincing manner.
\end{abstract}

\begin{IEEEkeywords}
Total variation, spectral TV, image decomposition, image enhancement, nonlinear eigenfunction analysis, spatially varying texture.
\end{IEEEkeywords}

%
\IEEEpeerreviewmaketitle

\section{Introduction}
%
%
%
%
\IEEEPARstart{D}{ecomposing} an image into meaningful components is an important and challenging inverse problem in
image processing.
The general concept of structure-texture decomposition is that an image can be regarded as composed of a structural part, corresponding to the main large objects in the image, and a textural part, containing fine details, usually with some periodicity and oscillatory nature.
Image decomposition is a preprocessing stage, which can be essential for many image processing and computer vision tasks such as segmentation \cite{casaca2013spectral}, content based image retrieval \cite{singha2012content} , feature extraction and classification \cite{dua2012wavelet} and restoration and analysis of ancient documents \cite{AncientDocs2013}.
We first briefly recall the main approaches related to image decomposition (focusing on variational methods).

\subsection{Structure-texture and Multiscale Decomposition}
{\bf $\bf u+v$ Model.}
An image $f$ can be decomposed as $f = u+v$, where $u$ represents image cartoon or geometric (piecewise-smooth) component and $v$ represents the oscillatory or textured component of $f$.
This motivated \cite{Meyer[1]} to suggest the $TV-G$ variational model where the minimization yields $u$ with a low total-variation energy and $v$ with a low integral norm, referred to as a $G$-norm, which favors
oscillatory signals. Suggestions to implement Meyer's model were given in \cite{Luminita[1],Aujol[3]}. Many extensions and variations to the model with alternative norms adapted for textures were proposed, such as
 \cite{Luminita[2],agco06,MaurelAujolPeyre_texture2011,Nikolova,DuvalAujolVese_decomposition2009}. The use of nonconvex regularizers was recently proposed in \cite{Atto_nonconvex_decomp2015}.
Employing sparse representation methods for decomposition was first suggested in \cite{Starck_etal2005}.
Two simplistic ways of revealing the textural parts in images are still used for some computer vision tasks.
The most basic one is linear - using a smoothing kernel, such as a Gaussian, and subtracting the smoothed image from the input image.
Naturally edges and textures are mixed. A somewhat more reliable method is to apply edge-preserving denoising, such as bilateral filtering \cite{tom_man_98}, and subtract the result from the input image.


{\bf $\bf u+v+w$ Model.}
In \cite{AujolChambolle} a model decomposing an image into structure $u$, texture $v$, and noise $w$ was proposed using dual norms (negative Sobolev and Besov norms) for the texture and noise parts.
An analysis of the three-part decomposition can be seen in \cite{Gilles_Meyer2010} and a recent approach using non-linear PDE's for structure-texture-edge decomposition is described in \cite{Moreno_decomp2015}.


{\bf Multiscale Model.}
It was realized quite early on that several textures of different scales can appear in an image and should be decomposed separately.
Multiscale decomposition using several $TV-L^2$ (ROF \cite{rof92}) decompositions, was first suggested in \cite{Tadmor_vese}.
Some features interpreted as ``texture'' in a given scale can be interpreted as ``structure'' at a finer scale. As conventional ROF was used the separation was not optimal, mixing some structure in the texture.
Gilles \cite{gilles} combined Meyer's decomposition model \cite{Meyer[1]} with a Littlewood-Paley filter, to extract a certain class of textures in an image. While this works well for synthetic images, it is not ideal for some real world images.
Zhang et al. \cite{zhang2014rolling} proposed a new framework called Rolling Guidance filter. This technique consists of an iterated improved variant of the bilateral filter which is controlled by a larger support linear smoothing kernel. High quality results were shown in \cite{zhang2014rolling}.
We will compare our work also to this state-of-the-art technique and show its limitations, especially when
there are gradual changes in pattern size or contrast.

{\bf Continuous Model.}
The spectral TV decomposition, explained in details below, can be seen as a generalization to the continuum of
multiscale decomposition, with infinitesimal scale precision which can be related to the eigenvalue of the nonlinear eigenvalue problem induced by the regularizer (see details hereafter).
In this case the input image is an integral over all scales. In practice the scale (time) step is finite and a summation of quantized scales is performed.

%
%



\subsection{Contribution and paper outline}
In this paper we present two essential contributions. A new approach of scale-separation is introduced which is based on the TV transform \cite{Gilboa_spectv_SIAM_2014}.
Using a geological analogy, the texture is encoded by a stratum, with a surface as center-line, in the 3D spectral TV domain, see visualization in Fig. \ref{fig:surface}.
It is well adapted to the image and can cope with gradually changing textures with respect to many parameters such as size, contrast and illumination.
Having defined a general desired scale-range, the method is automatic. A second contribution is a texture processing approach which can enhance or attenuate textures in an
easy manner with very vivid and natural looking results.

The outline of the paper is as follows: in Section \ref{sec:pre} we describe the convex nonlinear eigenvalue problem and the spectral TV approach.
Section \ref{sec:multi} presents multiscale decomposition and orientation analysis based on spectral TV. This part
was first presented in a conference \cite{horesh2015multiscale}. We then proceed to the main contribution of the paper, Section \ref{sec:surf},
where a surface-based decomposition is introduced. In Section \ref{sec:app} a texture processing application is proposed,
illustrated by natural color image examples.

\begin{figure}
\begin{tabular}{cc}
\subfloat[Separation surface and stratum visualization]{\includegraphics[height=1.4in]{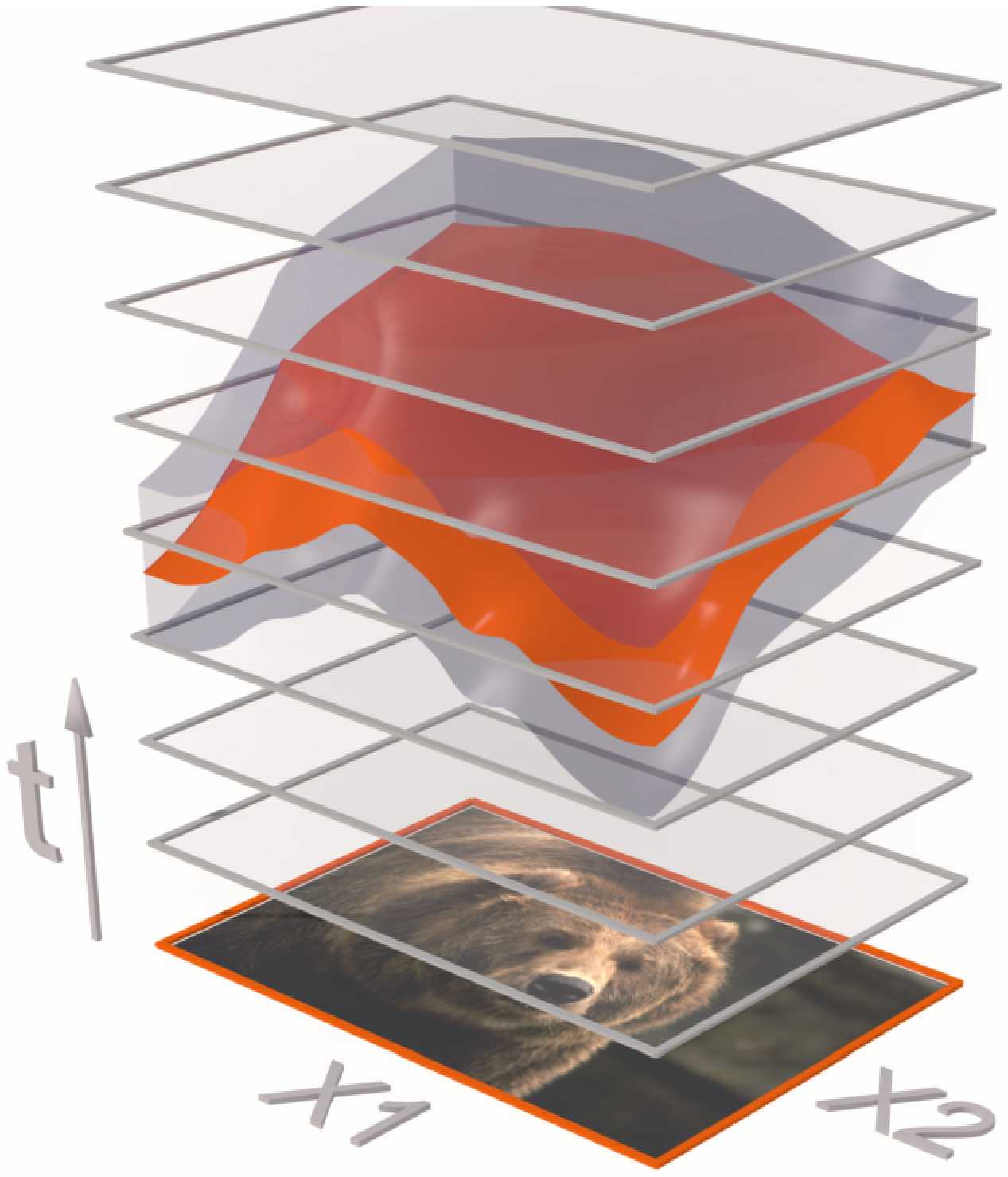}} &
\subfloat[Natural rock strata]{ \includegraphics[height=1.4in]{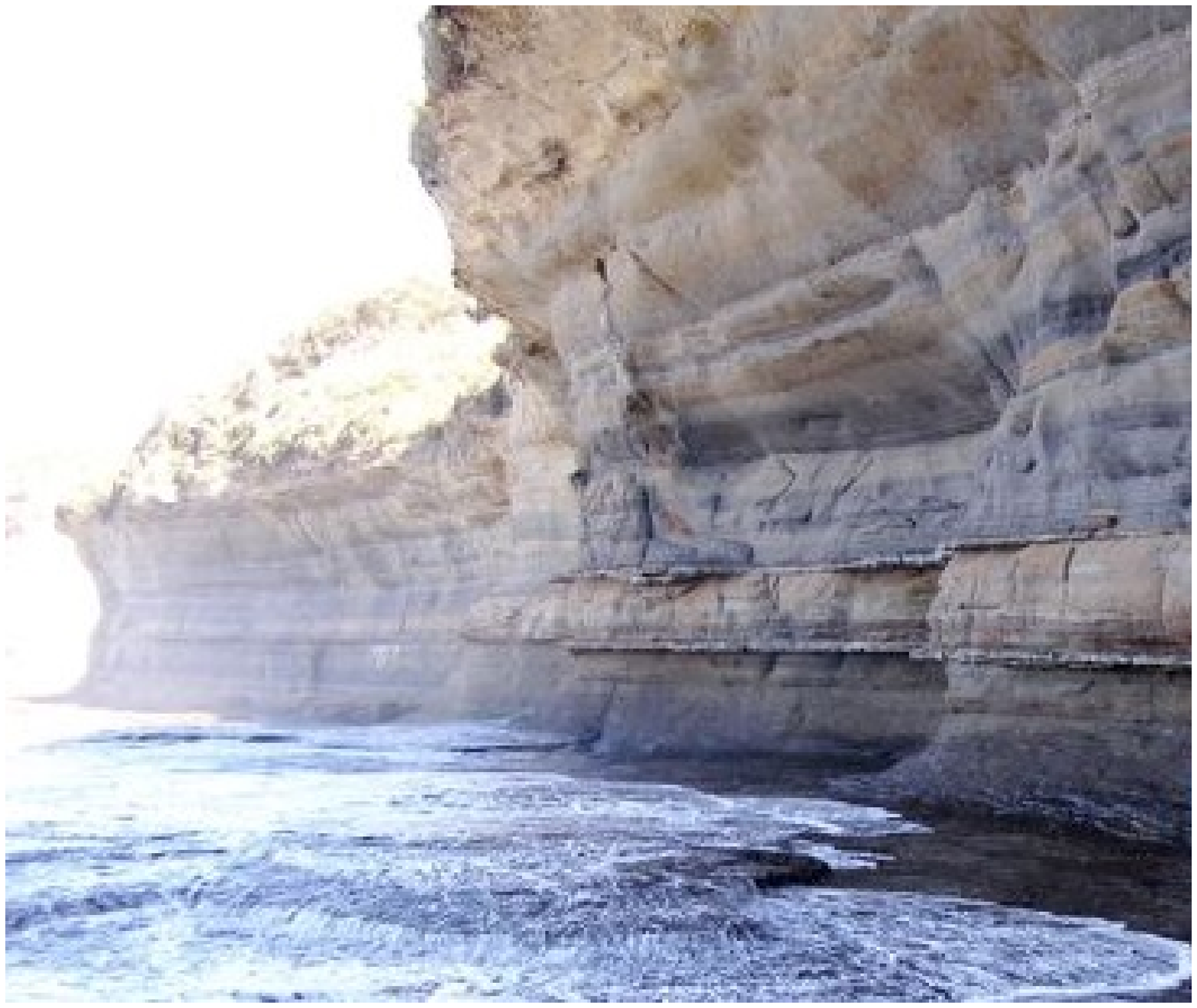}} \\
\end{tabular}
   \caption{Geological analogy of the texture encoded as a stratum in the spectral TV domain.}
   \label{fig:surface}
\end{figure}

\section{Preliminaries}
\label{sec:pre}
In this section we summarize the essential theory concerning non-linear eigenfunctions induced by convex functionals, the spectral TV framework and Gabor filters (used as scale-orientation descriptors).

\subsection{Nonlinear Eigenfunctions}
Classical linear eigenfunction analysis has shown to provide many state-of-the-art algorithms in
signal processing, computer vision and machine-learning. Some examples are segmentation \cite{shi_malik00}, clustering \cite{Ng_Spectral_clustering_2002},
subspace clustering \cite{Liu_subspace_LRR_2013} and dimensionality reduction \cite{Laplacian_eigenmaps_belkin_niyogi_2003}.
Eigenfunctions of an operator can be viewed as the operator's inherent atoms with an intrinsic scale represented by the respective eigenvalue.
Recent studies \cite{Benning_Burger_2013,Gilboa_spectv_SIAM_2014} indicate that a generalized theory can be developed for the convex nonlinear case.

Nonlinear eigenfunctions induced by a convex functional emerge by the following \emph{nonlinear eigenvalue problem}:
\begin{equation}
\label{eq:ef_problem}
 \lambda u \in \p J(u),
\end{equation}
where $J(u)$ is a convex functional and $\p J(u)$ is its subdifferential.
A function $u$ admitting Eq. \eqref{eq:ef_problem} is referred to as an eigenfunction with a corresponding eigenvalue $\lambda$.
We can briefly study the linear case, to get some intuition.

\subsubsection*{A Linear Example}
Let us examine the functional $$J(u) = \frac{1}{2}\int_{\Omega}|\nabla u(x)|^2 dx,$$
where $\nabla$ is the gradient. The convex functional induces an operator through its subgradient.
Here the subgradient (in this case single valued) is $p(u) = -\Delta u$ ($\Delta$ denotes the Laplacian).
The corresponding eigenvalue problem is
$$ -\Delta u = \lambda u .$$
In the one-dimensional case, with appropriate boundary conditions, functions of the form $u = \sin(\omega x)$
are eigenfunctions with corresponding eigenvalues $\lambda = \omega^2$.

Thus Fourier frequencies naturally emerge as solving an eigenvalue problem related to a quadratic smoothing convex functional.

In~\cite{Gilboa_SSVM_2013_SpecTV,Gilboa_spectv_SIAM_2014} an image decomposition and
filtering framework was suggested.
It presents a notion of generalized nonlinear eigenfunctions which are used to define forward and inverse TV transforms.
This can be used to decompose the image into well defined scales and allows a new variety of filtering methods.

\subsection{Spectral TV}
In~\cite{Gilboa_spectv_SIAM_2014} a non-conventional way of defining a transform through a
partial-differential-equation (PDE) is suggested, based on the total-variation (TV) functional:
\begin{equation}
\label{eq:tv}
J(u) = \int_\Omega |D u|,
\end{equation}
where $Du$ denotes the distributional gradient of $u$.
The corresponding gradient descent of the functional, known as total-variation flow \cite{tv_flow},
is formally written as:
\begin{equation}
\label{eq:tv_flow}
\begin{array}{ll}
\frac{\partial u}{\partial t}  = \Div \left(\frac{Du}{|Du|}\right), & \textrm{in } (0,\infty)\times \Omega  \\
\frac{\partial u}{\partial n}=0, & \textrm{on } (0,\infty)\times \partial\Omega  \\
u(0;x)=f(x), & \textrm{in } x \in \Omega,
\end{array}
\end{equation}
where $\Omega$ is the image domain (a bounded set in $\IR^N$ with Lipschitz continuous boundary
$\partial \Omega$).
%
The TV transform is defined by:
\begin{equation}
\label{eq:phi}
\phi(t;x) = u_{tt}(t;x)t,
\end{equation}
where $u_{tt}$ is the second time derivative of the solution $u(t;x)$ of \eqref{eq:tv_flow}.
The inverse transform is:
\begin{equation}
\label{eq:tv_recon}
f(x) = \int_0^\infty \phi(t;x) dt + \bar{f},
\end{equation}
where $\bar{f} = \frac{1}{\Omega}\int_\Omega f(x)dx$ is the mean value of the initial condition.
Filtering is performed using a transfer function $H(t)\in \IR$:
\begin{equation}
\label{eq:tv_filt}
f_H(x) := \int_0^\infty \phi(t;x)H(t) dt + \bar{f}.
\end{equation}
The spectrum $S^f(t)$ of the input signal $f(x)$ corresponds to the $L^1$ amplitude of each scale:
\begin{equation}
\label{eq:S}
S^f(t) = \|\phi(t;x)\|_{L^1} = \int_\Omega |\phi(t;x)|dx.
\end{equation}

Two significant results were shown in~\cite{Gilboa_spectv_SIAM_2014} for this transform:
\begin{itemize}
\item {\bf Eigenfunctions as Atoms:} Let $f(x)$ be a function which admits the nonlinear eigenvalue problem \eqref{eq:ef_problem}, ($f=u$), for the TV functional. Then the transform yields a measure (single impulse), multiplied by $f(x)$, at time $t=1/\lambda$ and is zero for all other $t$: $\phi(t;x)=\delta(t-1/\lambda)f(x)$, where $\delta(\cdot)$ is the Dirac delta function.
\item {\bf Relations to TV-flow:} The TV flow solution $u(t)$ is given by:
\begin{equation}
\label{eq:H_tvflow}
u(t) = \int_0^\infty H^t(\tau)\phi(\tau;x) d\tau + \bar{f};\,\,\,
\end{equation}
$$H^t(\tau) =
\left\{
\begin{array}{ll}
0,& 0 \le \tau < t \\
\frac{\tau-t}{\tau}, & t \le \tau < \infty
\end{array}
\right.
.$$
\end{itemize}

The first result relates to nonlinear spectral theory, which has attracted
increasing interest lately, see e.g.~\cite{Benning_Burger_2013,Nonlin_Lap_spectral_2012} and \cite{Bresson_Tai_Chan_Szlam_Cheeger_2014} in the segmentation and learning context.

The second result shows that the framework is a generalization of standard TV filters and that many other filters related to the functional
can be designed.

An example of different spectral components and of spectral image filtering can be seen in Fig.~\ref{fig:zebra_tex}. A  zebra image is shown with its spectrum in different colors to demonstrate the integration intervals of the $\phi$'s, using \eqref{eq:tv_filt} with $H=1$ in the desired interval and 0 otherwise, appearing respectively in the filtered images. The contrast is enhanced for better visualization.

\begin{figure}
\includegraphics[height=1.1in]{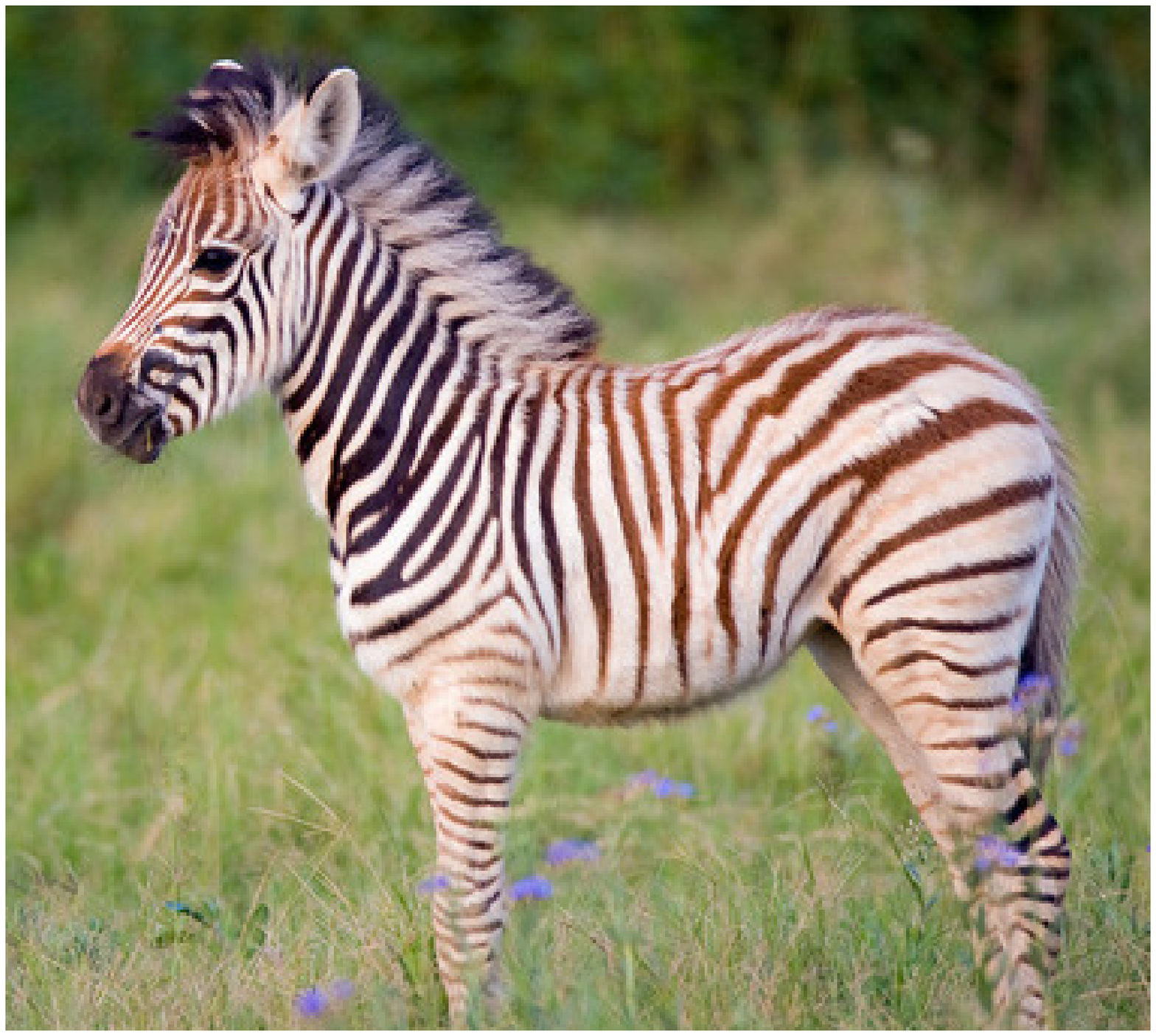}
\includegraphics[height=1.1in]{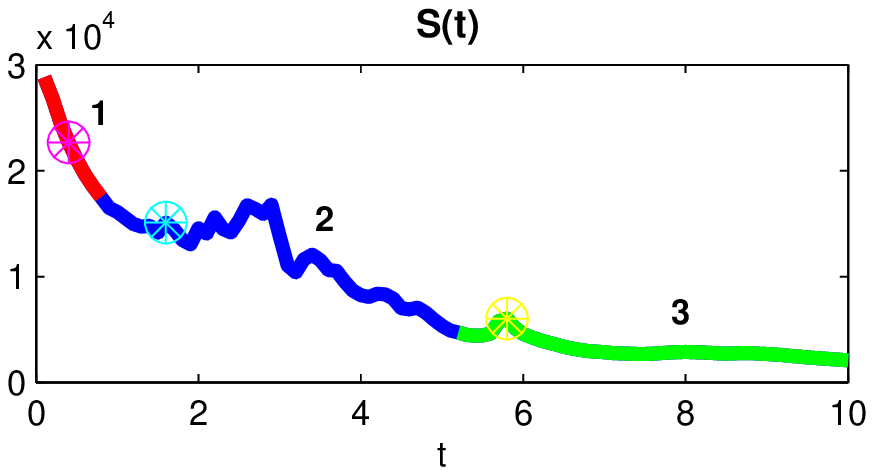}\\
\centering
\begin{tabular}{ccc}
\subfloat[High-pass]{\includegraphics[width=1in]{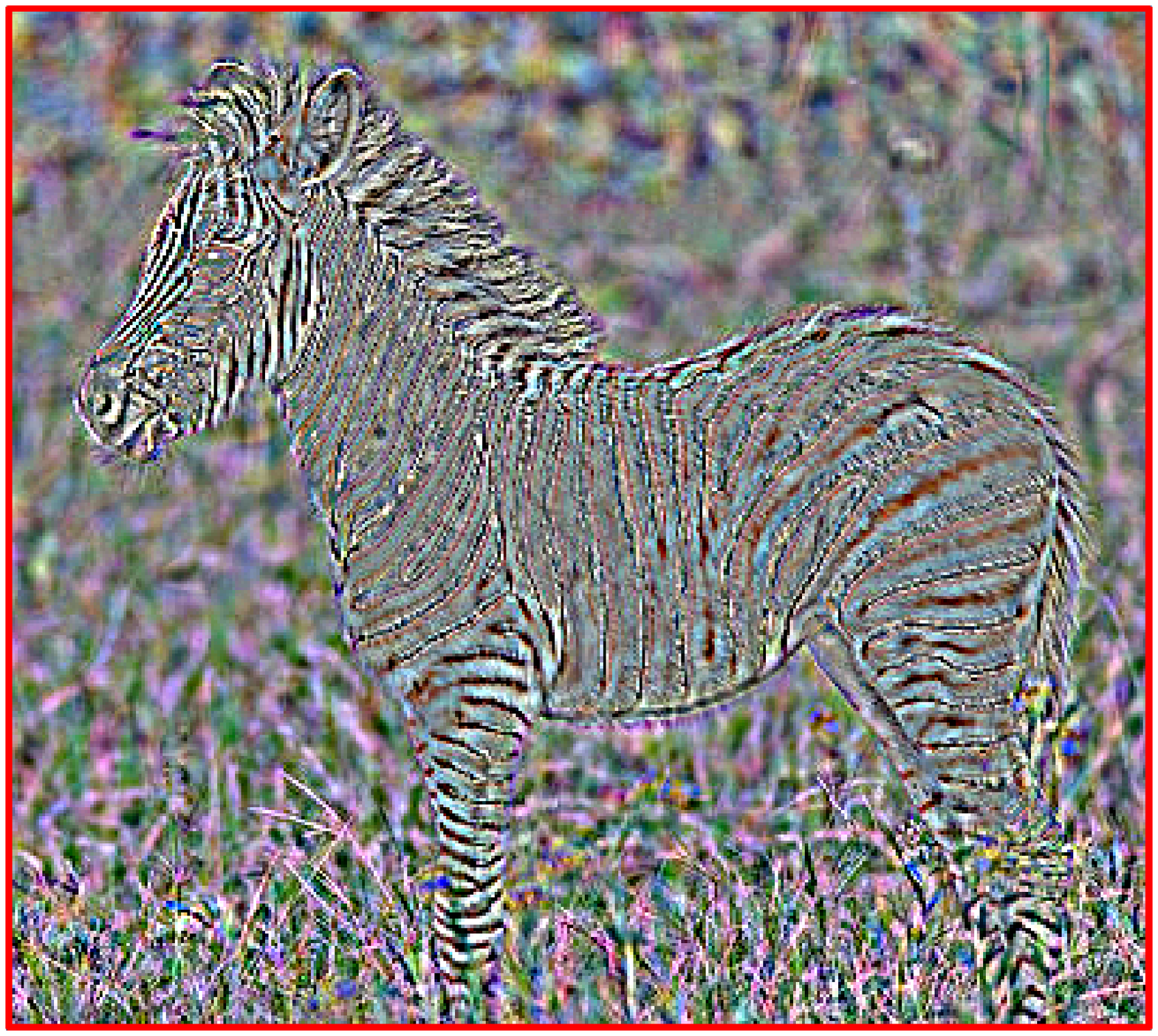}} &
\subfloat[Band-pass]{\includegraphics[width=1in]{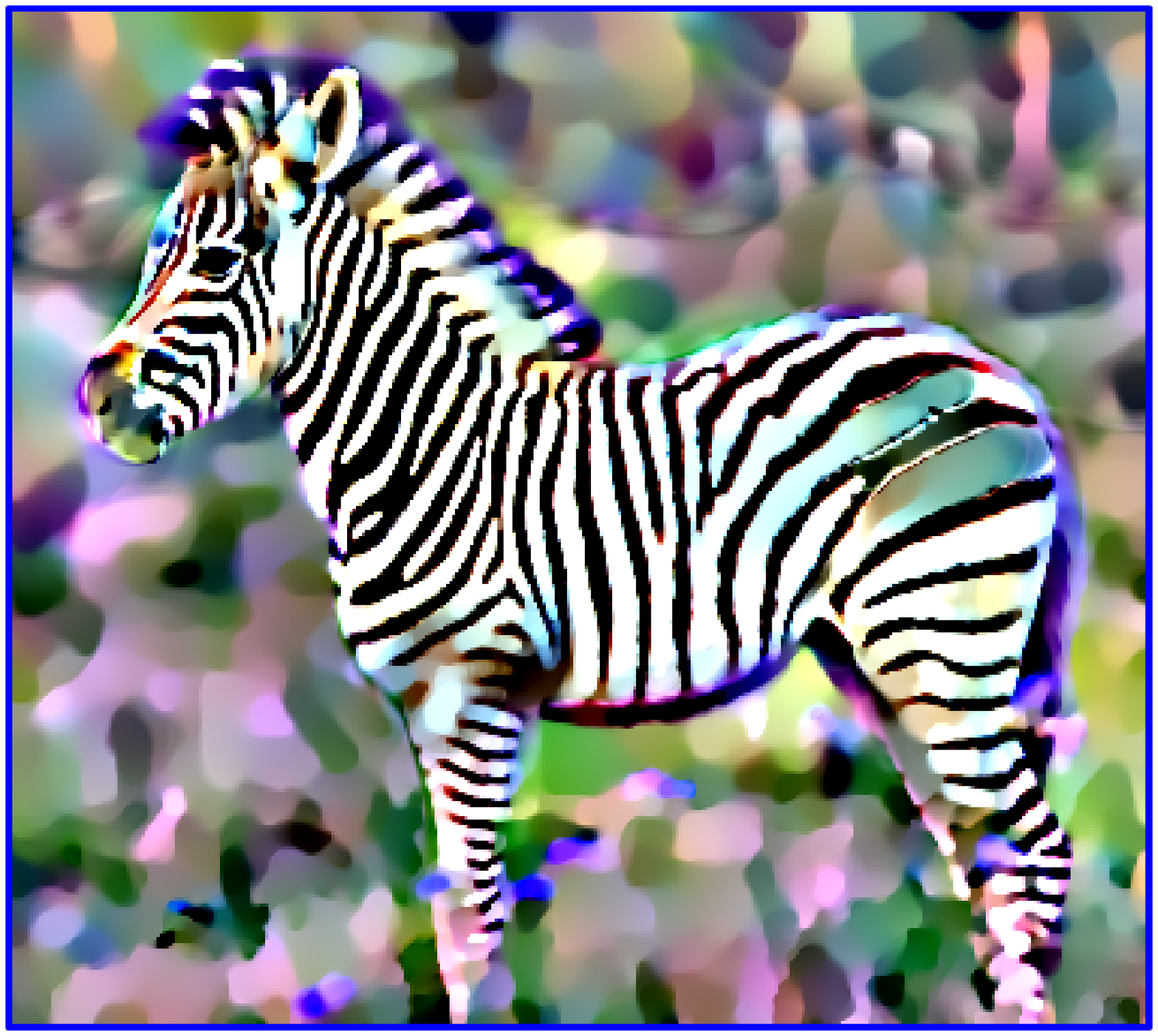}}  &
\subfloat[Low-pass]{\includegraphics[width=1in]{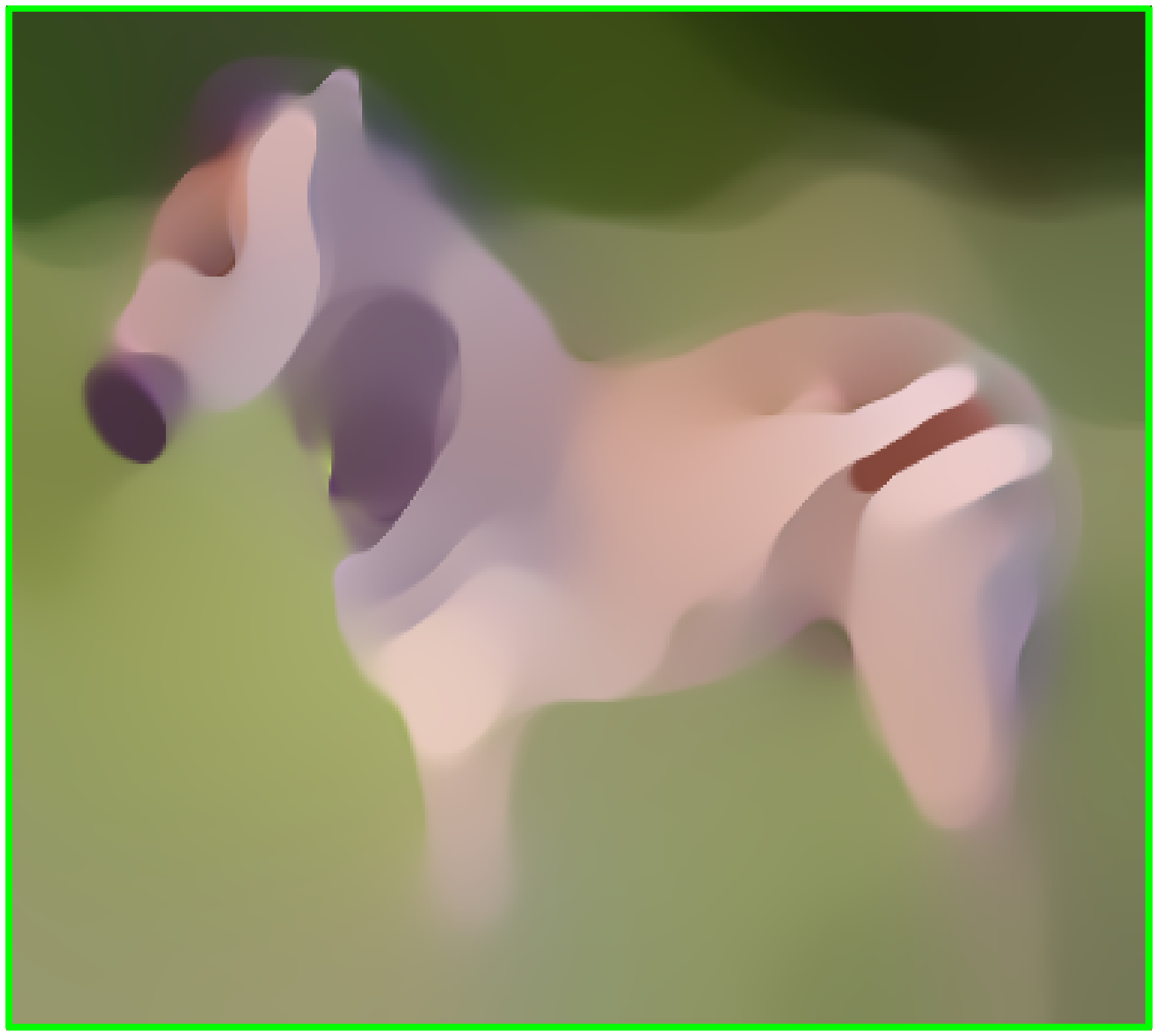}} \\
\subfloat[$t=0.4$]{\includegraphics[width=1in]{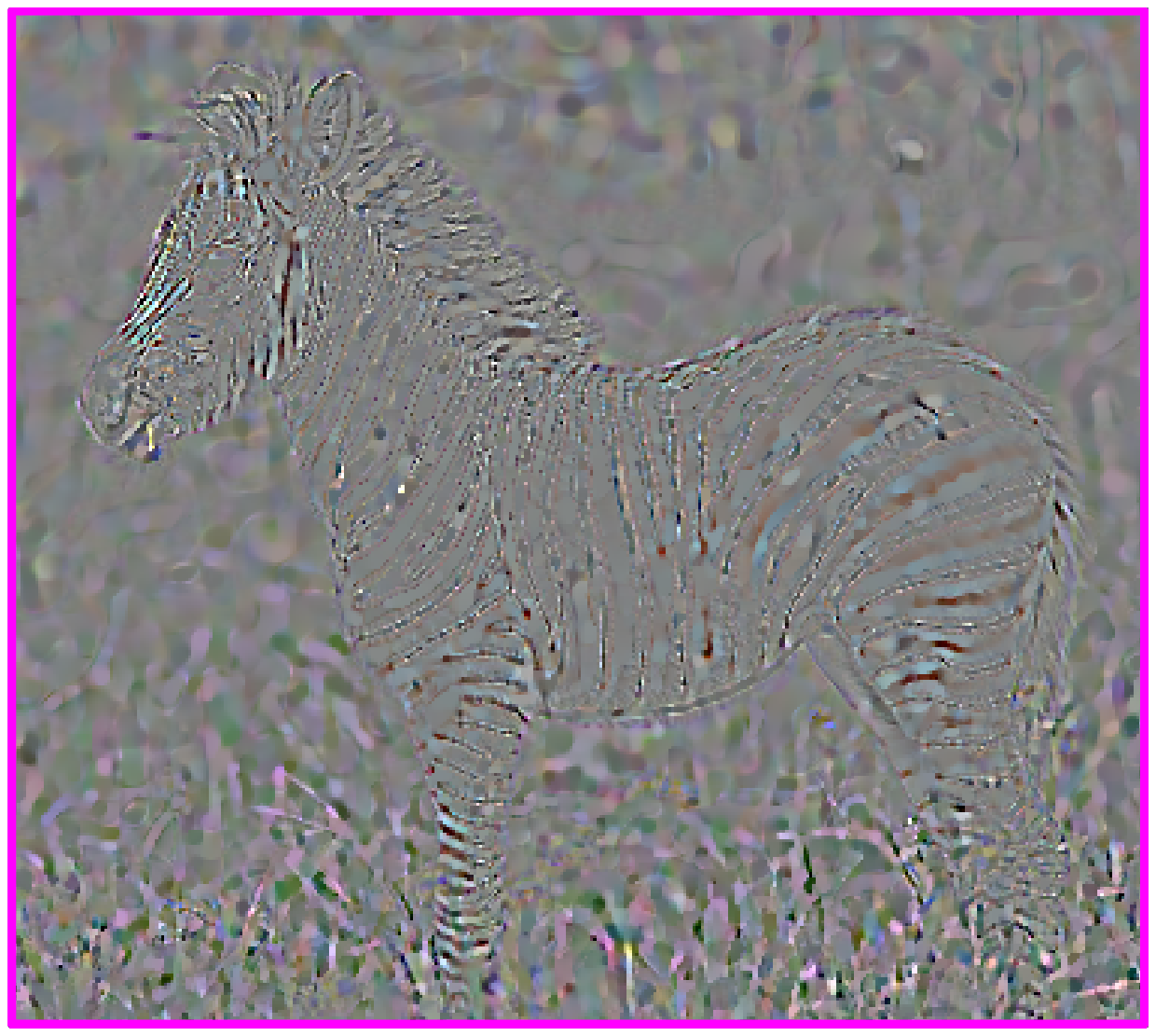}} &
\subfloat[$t=1.6$]{\includegraphics[width=1in]{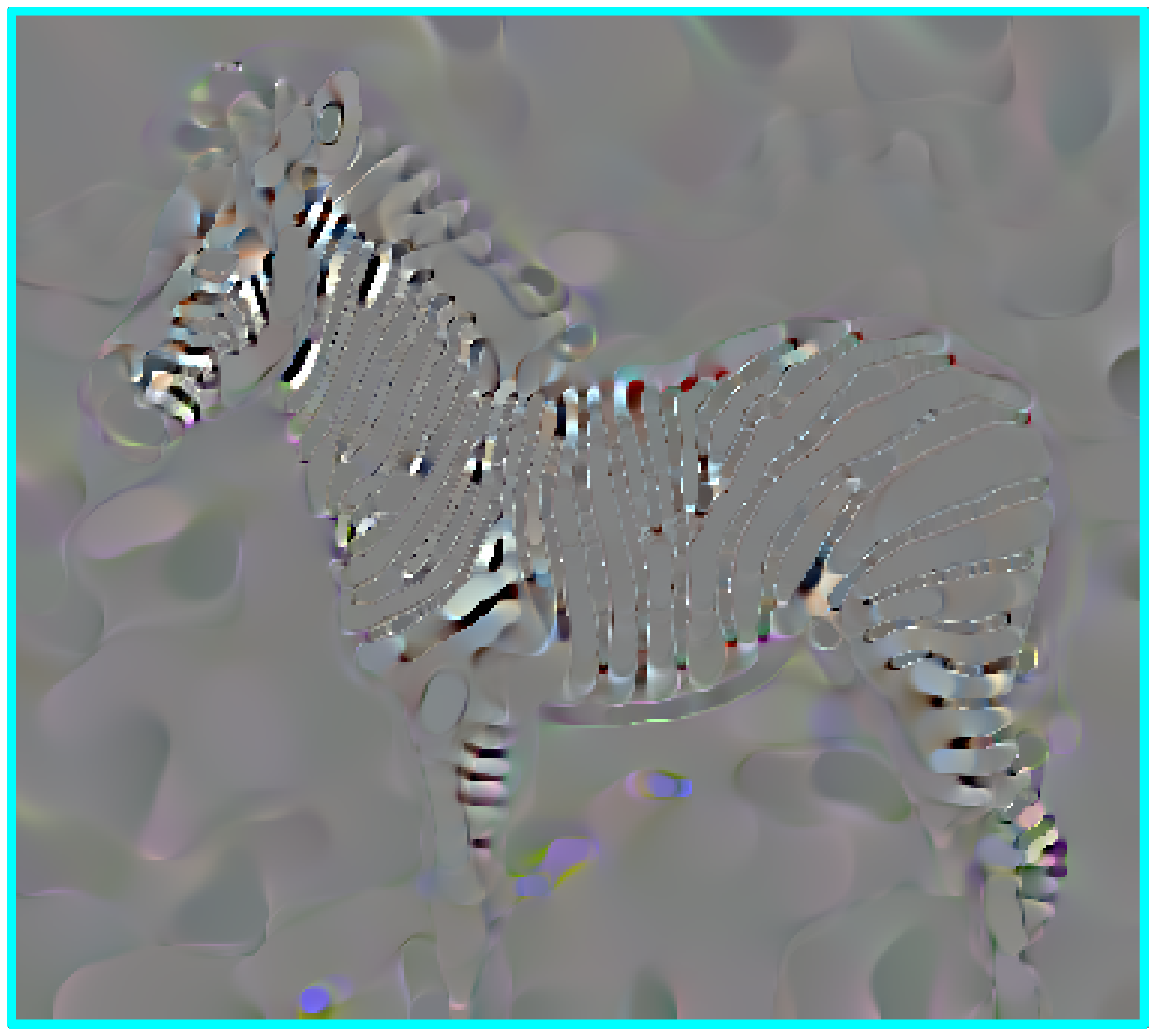}} &
\subfloat[$t=5.8$]{\includegraphics[width=1in]{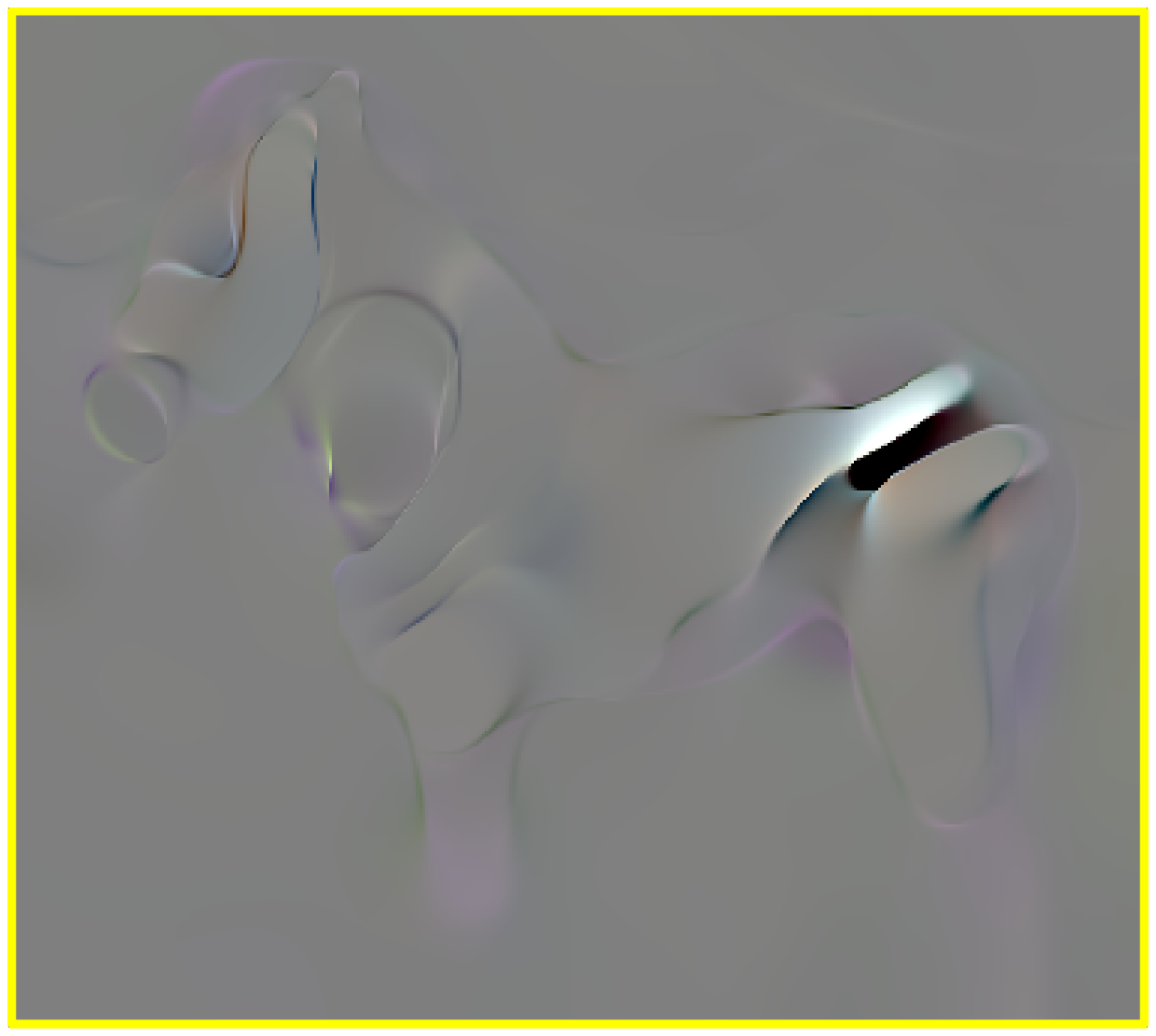}} \\
\end{tabular}
\caption {Zebra image (top left), TV spectrum of the image with separated textures marked in different colors (top right), and spectral decomposition  of the zebra image to textures, displaying integration over time (middle) and certain $\phi$'s (bottom).}
\label{fig:zebra_tex}
\end{figure}

\subsection{Gabor Filters}

A Gabor filter bank is a set of regularly spaced filters that roughly mimic the behavior of the human visual system (HVS) for texture detection.  According to this model, the HVS perceives the image through a set of filtered images, so that each image contains some unique visual information over a narrow range of orientation channel. In that manner, Gabor filtering has been shown to be a good fitting to this model, providing optimal localization of image details in a joint spatial and frequency domain \cite{Jain,daugman1980two}.
The Gabor wavelet definition is
\begin{equation}
g (x,y)=\frac{1}{2 \pi \sigma_x \sigma_y}exp[-\frac{1}{2}(\frac{\tilde{x}^{2}}{{\sigma_x} ^{2}}+\frac{\tilde{y}^{2}}{{\sigma_y} ^{2}})+2\pi j W\tilde{x}]
\end{equation} 
\begin{equation}
\tilde{x} ={x\cos (\frac{\mu \pi}{M})+y\sin(\frac{\mu \pi}{M})} ,
\tilde{y}={-x\sin(\frac{\mu \pi}{M})+y\cos(\frac{\mu \pi}{M})}
\end{equation}
$\mu$ controls the orientation of the filters, with M being the total number of different orientations and $W$ scales the center of the filter in the frequency domain.
\begin{figure}
\centerline{
\includegraphics[width=3.4in]{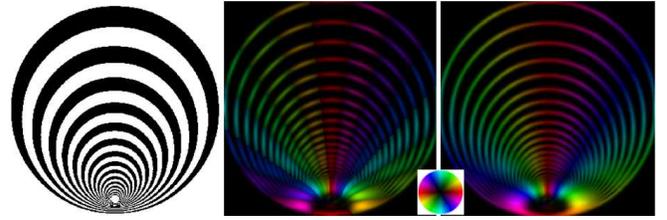}}
\caption {Concentric circles image (left), orientation map of 8 Gabor filters response (middle) and orientation map of 30 Gabor filters response (right).}\label{fig:circles_gabor}
\end{figure}


\section{Multiscale Decomposition Using Spectral TV}
\label{sec:multi}
In this section we show how one can use spectral TV for multiscale decomposition and multiscale orientation analysis.
This part was first presented by the authors in a conference \cite{horesh2015multiscale}.
In this work we extended the common structure and texture decomposition to multi-scale texture separation in order to get all textures, the coarse and fine-scaled. The orientation of the different texture layers was separately characterized using the Gabor filters bank, generating the scale-orientation descriptor. Precise orientation mapping can be useful for
analysis and inner texture actions and synthesis of image with complex textures content.

\subsection{A Necessary Condition for Perfect Separability}
In \cite{spec_one_homog2015} an orthogonality relation between $\phi$, Eq. \eqref{eq:phi}, and $u$ is established:
\begin{equation}
\label{eq:phi_u_orth}
\langle u(t),\phi(t) \rangle = 0, \forall t \in (0,\infty),
\end{equation}
where $\langle \cdot, \cdot \rangle$ denotes the $L^2$ inner product over the domain $\Omega$.
Using the above relation and the one given in \eqref{eq:H_tvflow} a necessary condition for
perfect separability of eigenfunctions can be shown:
\begin{proposition}
\label{prop:sep_cond}
Let $f_1(x)$, $f_2(x)$ admit the eigenvalue problem \eqref{eq:ef_problem}, with $J$ the TV functional \eqref{eq:tv}, and
$\lambda_1$, $\lambda_2$ the corresponding eigenvalues ($\lambda_1>\lambda_2$).
Then for $f = f_1 + f_2$ a necessary condition to have
\begin{equation}
\label{eq:phi_prop}
\phi(t) = \delta(t-\frac{1}{\lambda_1})f_1(x) + \delta(t-\frac{1}{\lambda_2})f_2(x),
\end{equation}
is
$$ \langle f_1, f_2 \rangle = 0. $$
\end{proposition}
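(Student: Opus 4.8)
The plan is to assume the separation \eqref{eq:phi_prop} holds and push this $\phi$ through the two results recalled above --- the TV--flow reconstruction \eqref{eq:H_tvflow} and the orthogonality relation \eqref{eq:phi_u_orth} --- until the claimed condition is forced. A preliminary observation I would record first is that an eigenfunction of the TV functional has zero mean and strictly positive eigenvalue: pairing $\lambda u\in\partial J(u)$ with the constant $1$ and using that every element of $\partial J(u)$ is the divergence of a field tangent to $\partial\Omega$ (hence integrates to zero) gives $\lambda\bar u=0$, while pairing with $u$ and using one-homogeneity of $J$ gives $\lambda\|u\|_{L^2}^2=J(u)$, so $\lambda>0$ for a non-constant $u$. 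Thus $\bar f_1=\bar f_2=0$, hence $\bar f=0$, and $\lambda_1>\lambda_2>0$, so in particular $1/\lambda_1<1/\lambda_2$.

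Next I would substitute \eqref{eq:phi_prop} into the TV--flow formula \eqref{eq:H_tvflow}. Since $H^t(\cdot)$ is supported on $[t,\infty)$, the two Dirac masses in $\phi$ are simply sampled and, for every $t\ge 0$,
$$u(t;x)=H^t(1/\lambda_1)\,f_1(x)+H^t(1/\lambda_2)\,f_2(x),$$
with $H^t(1/\lambda_i)=1-t\lambda_i$ for $t\le 1/\lambda_i$ and $H^t(1/\lambda_i)=0$ otherwise; note this makes $t\mapsto u(t)$ continuous into $L^2(\Omega)$. The decisive evaluation is at the \emph{earlier} impulse time $t_0=1/\lambda_1$: there the $f_1$--coefficient satisfies $H^{t_0}(1/\lambda_1)=1-t_0\lambda_1=0$, while the $f_2$--coefficient is $H^{t_0}(1/\lambda_2)=1-\lambda_2/\lambda_1\in(0,1)$, so
$$u(t_0;x)=\left(1-\frac{\lambda_2}{\lambda_1}\right)f_2(x).$$

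Finally I would invoke the orthogonality relation \eqref{eq:phi_u_orth}. The scalar map $t\mapsto\langle u(t),\phi(t)\rangle$ vanishes identically; under \eqref{eq:phi_prop} it is, read as a measure in $t$, the sum of the atoms $\langle u(1/\lambda_1),f_1\rangle\,\delta(t-1/\lambda_1)$ and $\langle u(1/\lambda_2),f_2\rangle\,\delta(t-1/\lambda_2)$, the pairing against each Dirac mass being well defined because $u$ is continuous in $t$. Its coefficient at $t_0=1/\lambda_1$ must therefore vanish, i.e. $\bigl(1-\lambda_2/\lambda_1\bigr)\langle f_2,f_1\rangle=0$, and since $1-\lambda_2/\lambda_1\ne 0$ this yields $\langle f_1,f_2\rangle=0$. (The atom at the later time $1/\lambda_2$ carries no information, since $u(1/\lambda_2)=0$; this is precisely why the ordering hypothesis $\lambda_1>\lambda_2$ is used, to select $t_0=1/\lambda_1$.) The step I expect to demand the most care is making this last paragraph rigorous --- justifying that \eqref{eq:phi_u_orth}, stated in \cite{spec_one_homog2015} for sufficiently regular flows, may be read off coefficient by coefficient when $\phi$ is purely atomic in time; a clean route is to mollify the flow in $t$, apply the smooth version of \eqref{eq:phi_u_orth}, and pass to the limit, or equivalently to interpret \eqref{eq:phi_u_orth} as an identity of measures in $t$ from the start.
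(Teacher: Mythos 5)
Your proposal is correct and follows essentially the same route as the paper's proof: substitute the assumed atomic $\phi$ into the reconstruction formula \eqref{eq:H_tvflow}, evaluate at the earlier impulse time $t_1=1/\lambda_1$ where the $f_1$-coefficient $H^{t_1}(t_1)$ vanishes so that $u(t_1)=H^{t_1}(t_2)f_2$, and then apply the orthogonality relation \eqref{eq:phi_u_orth} at $t_1$ to force $\langle f_1,f_2\rangle=0$. Your handling of the Dirac atom (reading \eqref{eq:phi_u_orth} as an identity of measures in $t$ and extracting the coefficient at $t_1$) is in fact cleaner than the paper's formal ``$\delta(t=0)$'' factor, but it is the same argument.
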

\begin{proof}
Let us assume Eq. \eqref{eq:phi_prop} holds and $\langle f_2(x),f_1(x) \rangle \ne 0$.
We express $u(t_1)$ using \eqref{eq:H_tvflow}, with $t_1:=\frac{1}{\lambda_1}$ (note that $H^{t_1}(t_1)=0$):
$$ u(t_1) = \int_{t_1}^\infty H^{t_1}(\tau)\phi(\tau;x) d\tau =
H^{t_1}(t_2) f_2(x). $$
From Eq. \eqref{eq:phi_u_orth} we have $$ \langle u(t_1),\phi(t_1) \rangle = 0, $$
and therefore $$ H^{t_1}(t_2)\delta(t=0)\langle  f_2(x),f_1(x) \rangle = 0, $$
which contradicts our assumption.
\end{proof}
\begin{figure}
\begin{tabular}{cc}
\subfloat[Signals]{\includegraphics[width=1.5in]{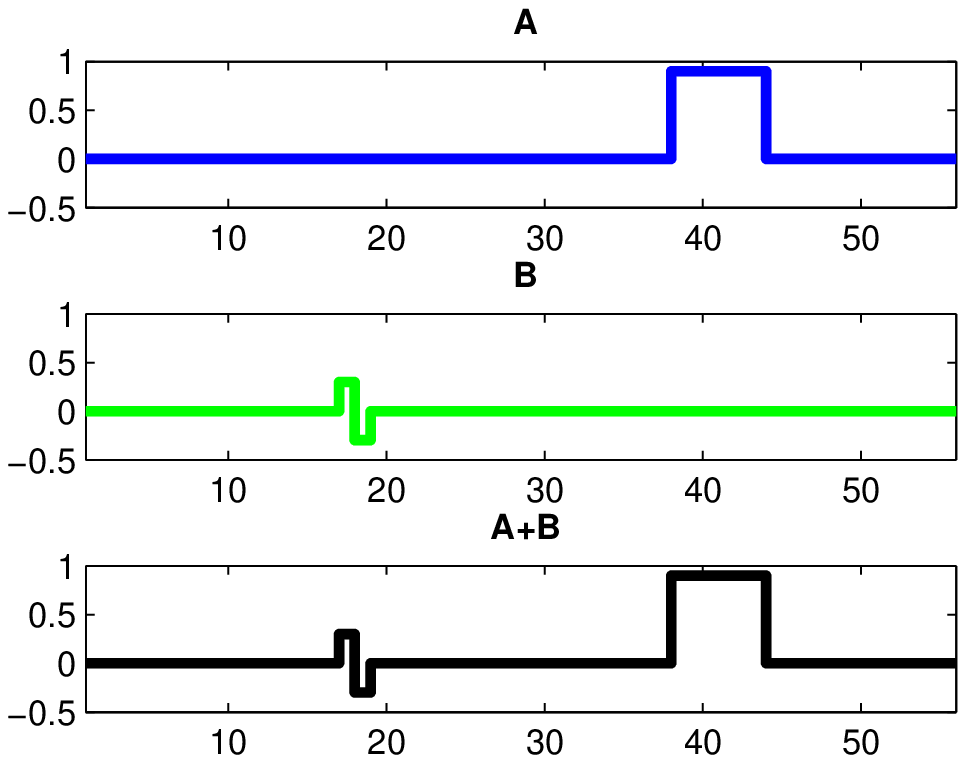}} &
\subfloat[TV spectrums]{\includegraphics[width=1.5in]{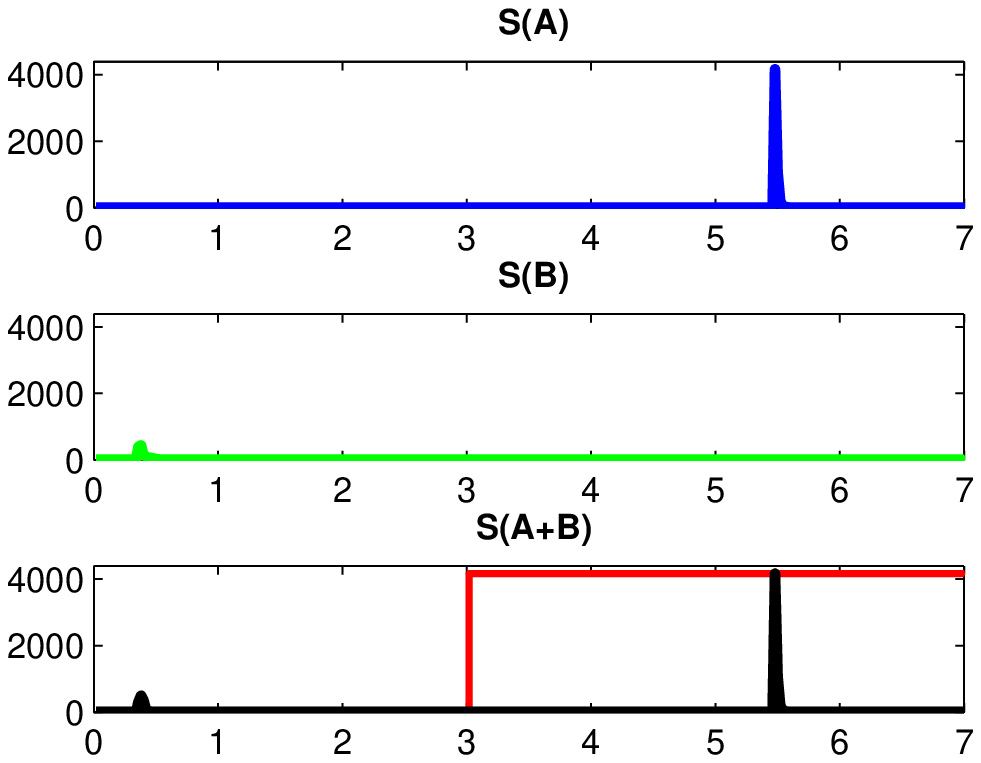}} \\
\subfloat[Spec TV (proposed) result]{\includegraphics[width=1.5in]{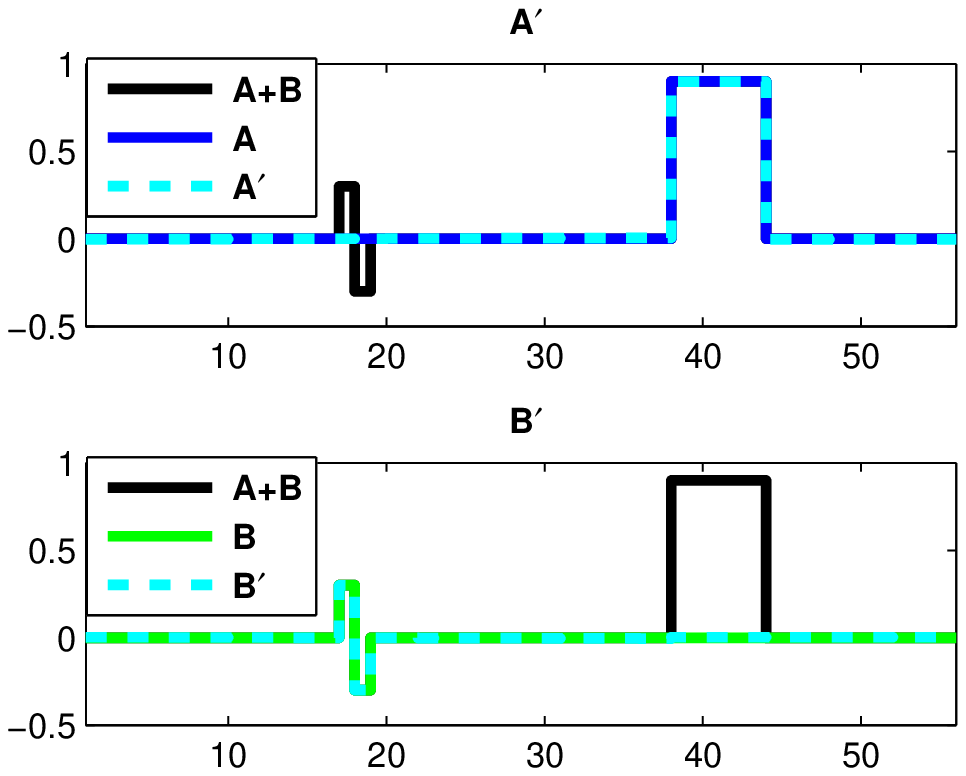}}&
\subfloat[TV result]{\includegraphics[width=1.5in]{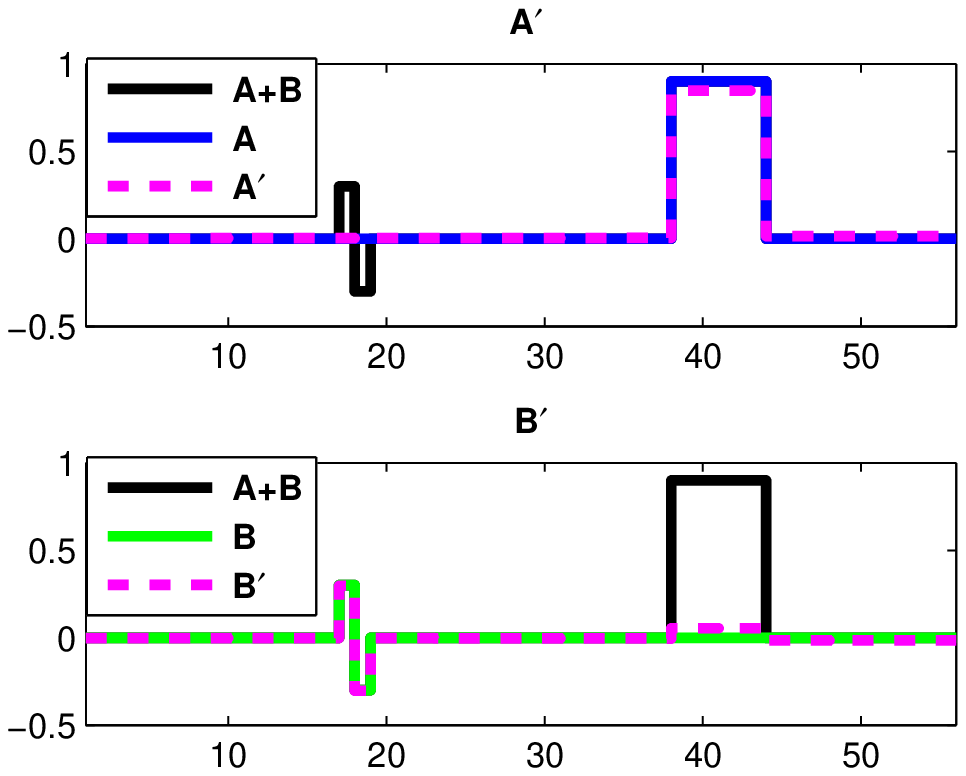}}\\
\end{tabular}
\caption { Separating 1D orthogonal signals.}
\label{fig:simp1}
\end{figure}

\begin{figure}
\centering
\begin{tabular}{cc}
\subfloat[Signals]{\includegraphics[width=1.5in]{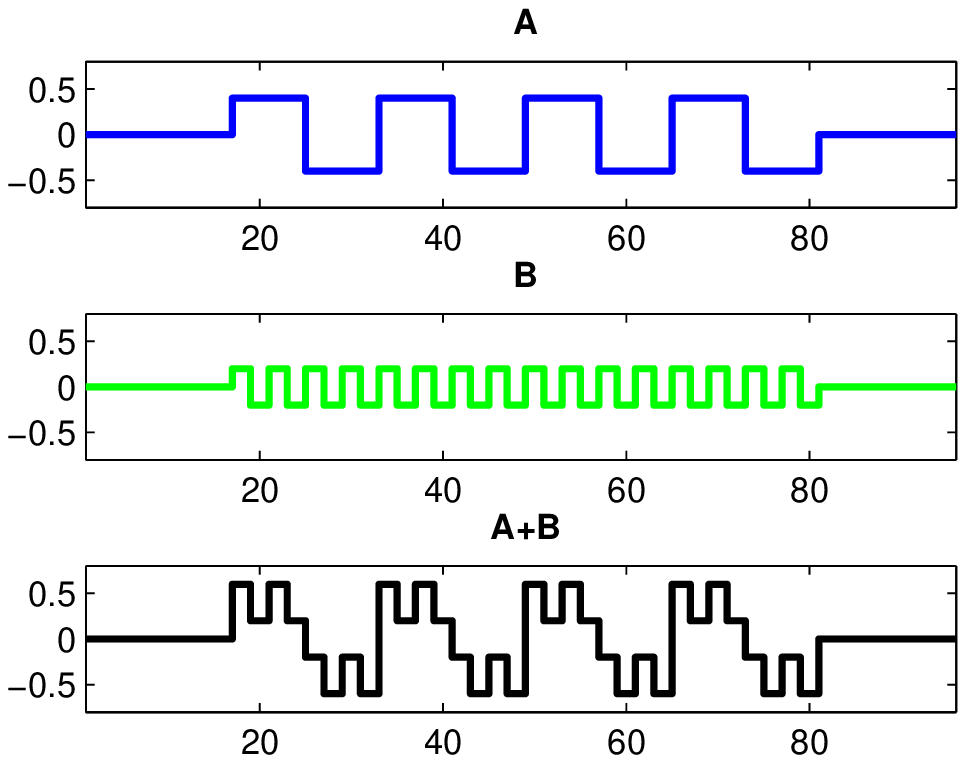}} &
\subfloat[TV spectrums]{\includegraphics[width=1.5in]{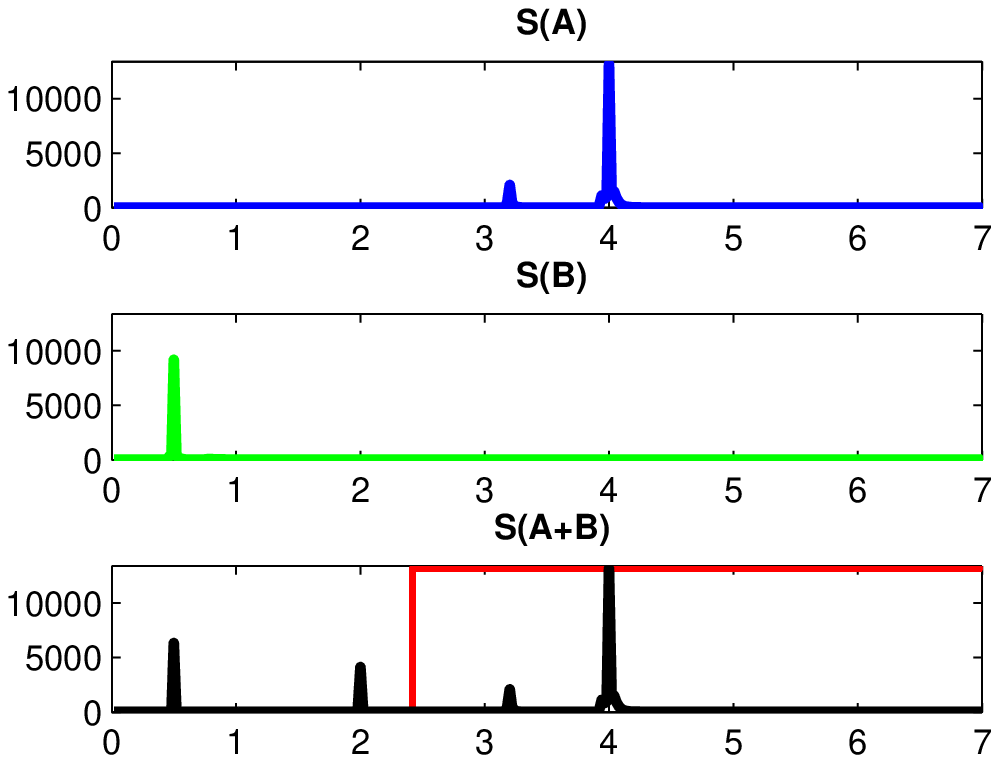}} \\
\subfloat[Spec TV (proposed) result]{\includegraphics[width=1.5in]{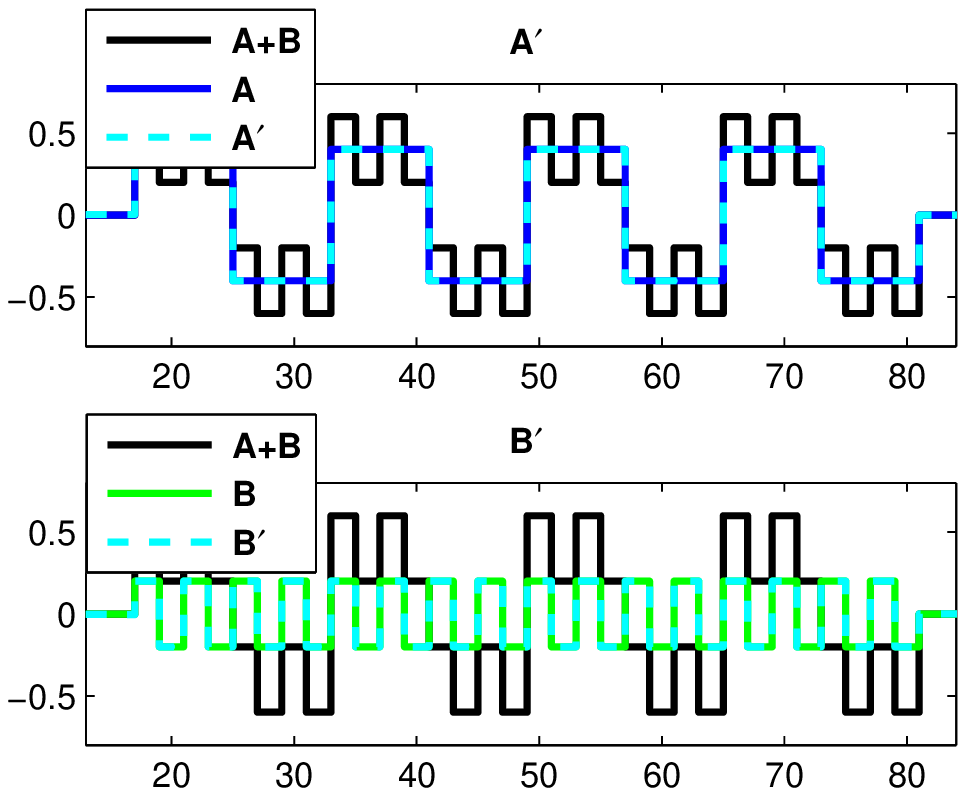}}&
\subfloat[TV result]{\includegraphics[width=1.5in]{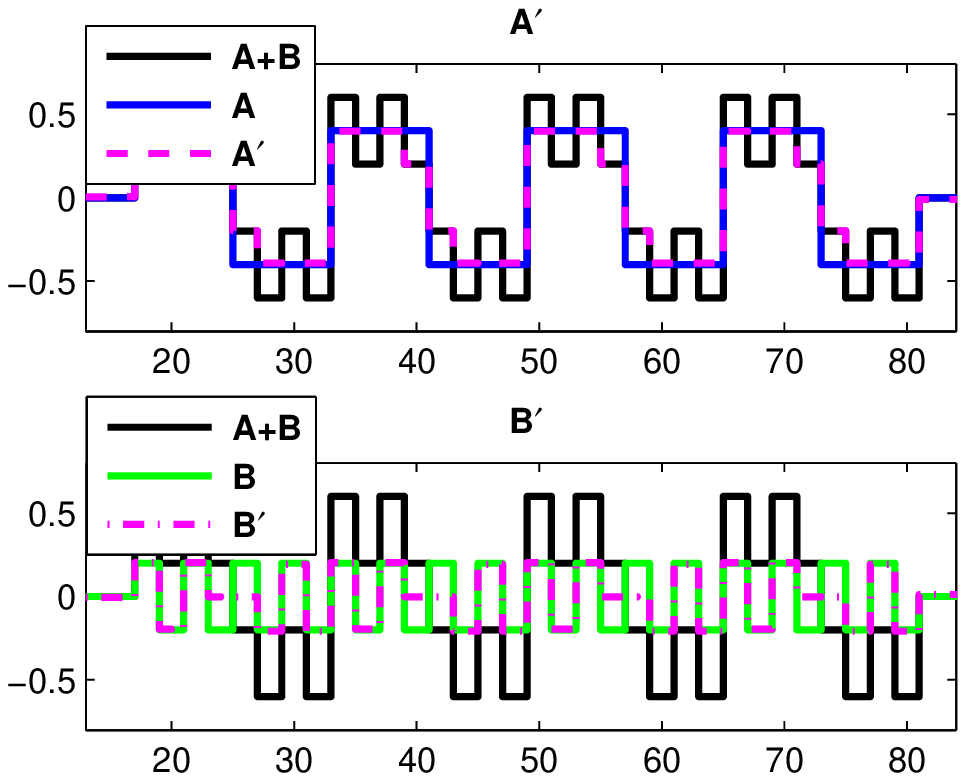}}\\
\end{tabular}
\caption {Separating  1D oscillating uncorrelated signals}\label{fig:os1}
\end{figure}
Note that in the case of perfect separability, Eq. \eqref{eq:phi_prop}, simple spectral filtering \eqref{eq:tv_filt} with $H(t)=1$ for $t<t_c$ and $0$ otherwise, where $t_c \in (1/\lambda_1,1/\lambda_2)$, can perfectly separate $f_1$ and $f_2$.
Examples demonstrating the signals' separability in 1D are shown in Figs. \ref{fig:simp1}-\ref{fig:os1}. Separation of the larger scale signal is performed using \eqref{eq:tv_filt} with $H(t)$ a step signal ($\{0,1\}$ values) as superimposed in red on the combined spectrum plot (bottom of Figures \ref{fig:simp1}-\ref{fig:os1} (b)). In Fig. \ref{fig:simp1} two well-separated orthogonal signals are shown, their spectrum, Eq. \eqref{eq:S}, has one peak for each signal, and the combined spectrum $S^{(A+B)}(t)$ is the sum $S^A(t)+S^B(t)$. Decomposition using spectral TV yields a perfect separation. We can see also the optimal possible result of standard TV regularization for comparison. Note that the standard TV does not yield perfect separation and the decomposition mixes both signals.
In Fig.~\ref{fig:os1}, two orthogonal oscillating signals are combined. Their combined spectrum slightly changes, compared to the original isolated signals, as a result of the overlap. However they can still be perfectly separated using spectral TV, while the standard TV has significant artifacts and the signals are not well separated.
We use the projection algorithm of Chambolle \cite{Chambolle[1]} to implement the TV-flow time steps \cite{Gilboa_spectv_SIAM_2014}.
For more details, see \url{http://guygilboa.eew.technion.ac.il/code/}.

\subsection{Image Decomposition}
Decomposition using spectral TV was applied to textured images. The decomposition can be done to as many different layers as  required, limited numerically only by the chosen time step (the theoretical formulation is continuous in time). An example of such image decomposition can be seen in Fig. \ref{fig:chess_in}. A game board image is shown with its spectrum in different colors to demonstrate the separation points of the decomposition (or integration intervals of the $\phi$'s, using \eqref{eq:tv_filt} with $H=1$ in the desired interval and 0 otherwise) appearing respectively in Fig. \ref{fig:chess_tex}: the wood pattern , the game board lines and the structure with the round game pieces.
In this example, the separation points of the decomposition were manually chosen. However, in \cite{agco06} it was done automatically for structure-texture decomposition using correlation between the texture and the residual structure in each level. In the next section of this work we explore the automatic texture separation.

\subsection{Scale-Orientation Descriptor}
After decomposing the image we generate the scale-orientation descriptor (SOD) for each texture level by fully mapping its orientation, creating a multi-valued orientation descriptor for each pixel. This was achieved using the Gabor filter bank.
The Gabor filter response was calculated in 30 orientations ($M=30, \mu =0, 1, ..., 30$), spanning 180\degree  \, and 4 spatial frequencies for finer and coarser scales. In Fig. \ref{fig:circles_gabor} presented a concentric circles image (left) and the colored orientation map of 8 (middle) and 30 (right) Gabor filters response.
In each scale, the orientation giving the maximum response was selected, and the maximum of the 4 scales was taken to describe the orientation of each pixel.

An example of such scale-orientation descriptor can be seen in Fig. \ref{fig:chess_tex}. In this figure, decomposition of the game board image (Fig. \ref{fig:chess_in}) to three scales is shown, together with the corresponding orientation mapping images. We can observe the great separation of the different layers as seen in the colored orientation figures as processed from the Gabor filters response. The orientation in each pixel is taken from the Gabor scale which gave the strongest response, usually corresponding to the texture coarseness. The result of Gilles decomposition  \cite{gilles} for this image are on Fig. \ref{fig:chess_Gilles}, as can be seen, the textures nicely appear there in different scales but the different textures are not separated.

\begin{figure}
\centerline{
\includegraphics[width=1.3in]{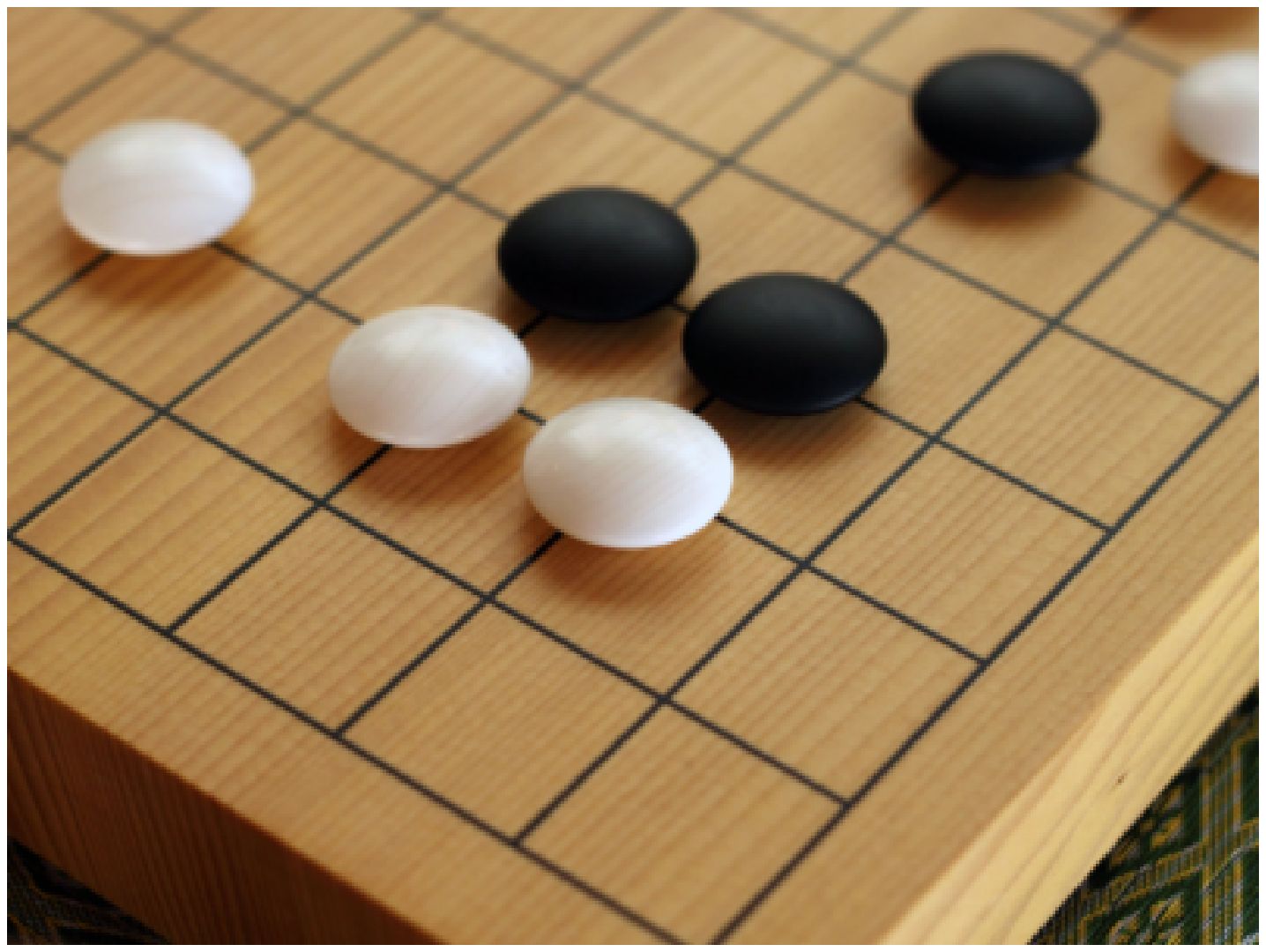}
\includegraphics[height=0.9in,width=2.1in]{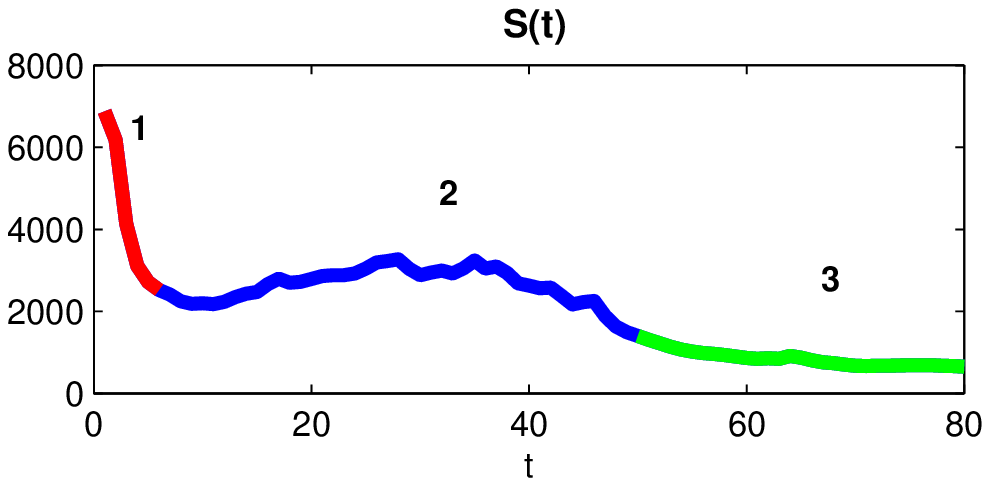}}
\caption {Game board image (left) and TV spectrum of the image with separated textures marked in different colors (right) }\label{fig:chess_in}
\end{figure}
~\begin{figure}
\centerline{
\includegraphics[width=1.1in]{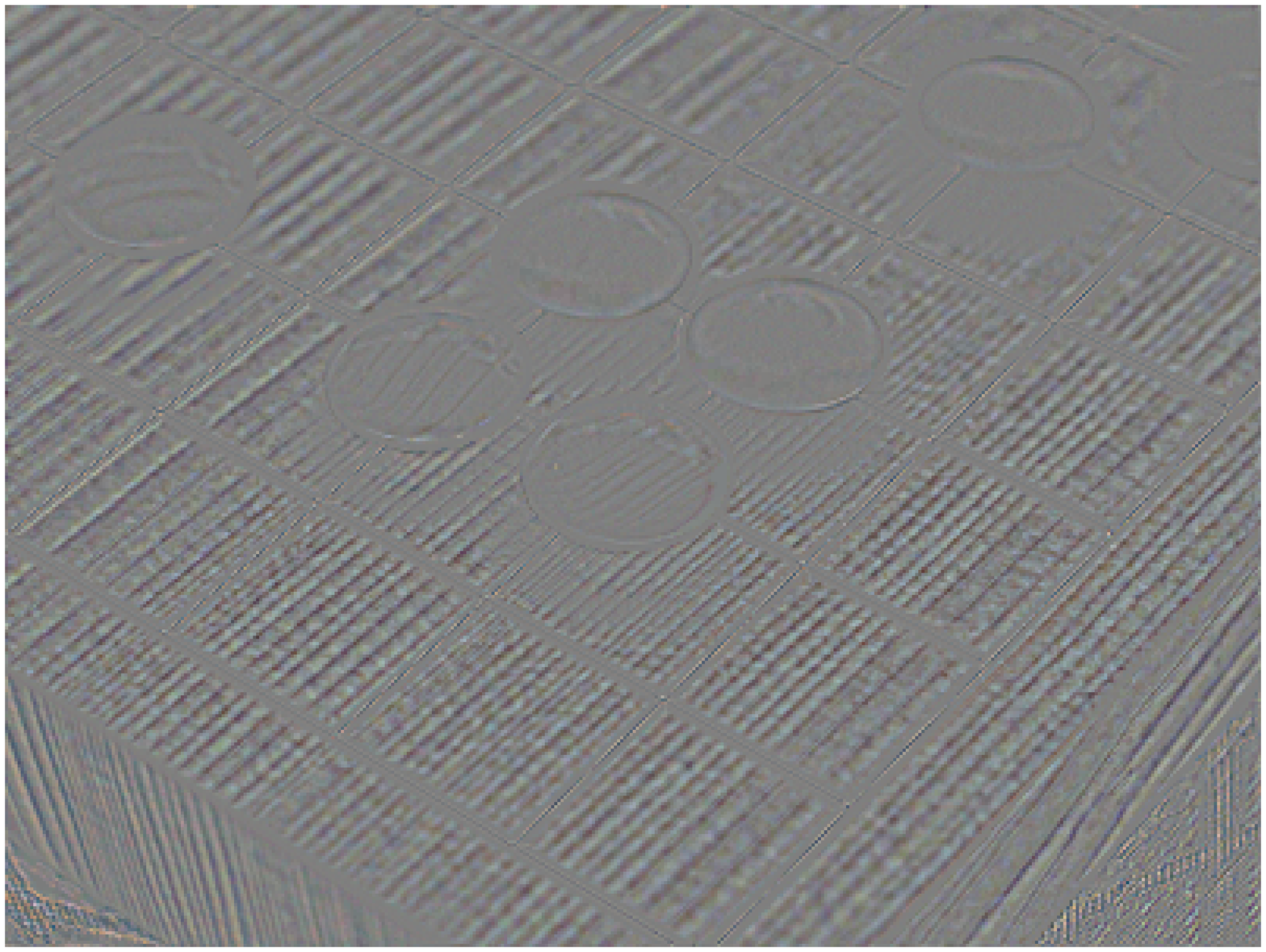}
\includegraphics[width=1.1in]{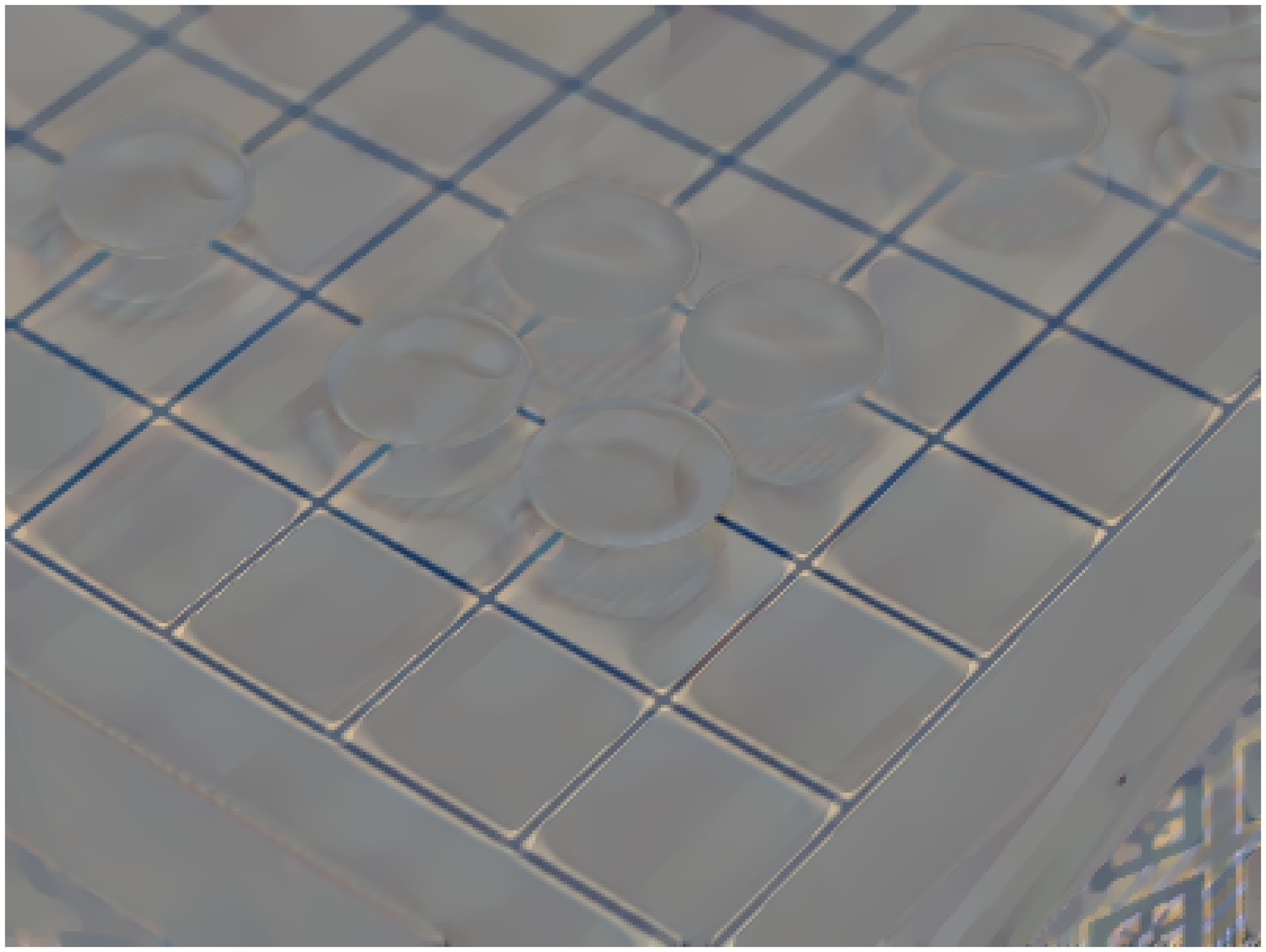}
\includegraphics[width=1.1in]{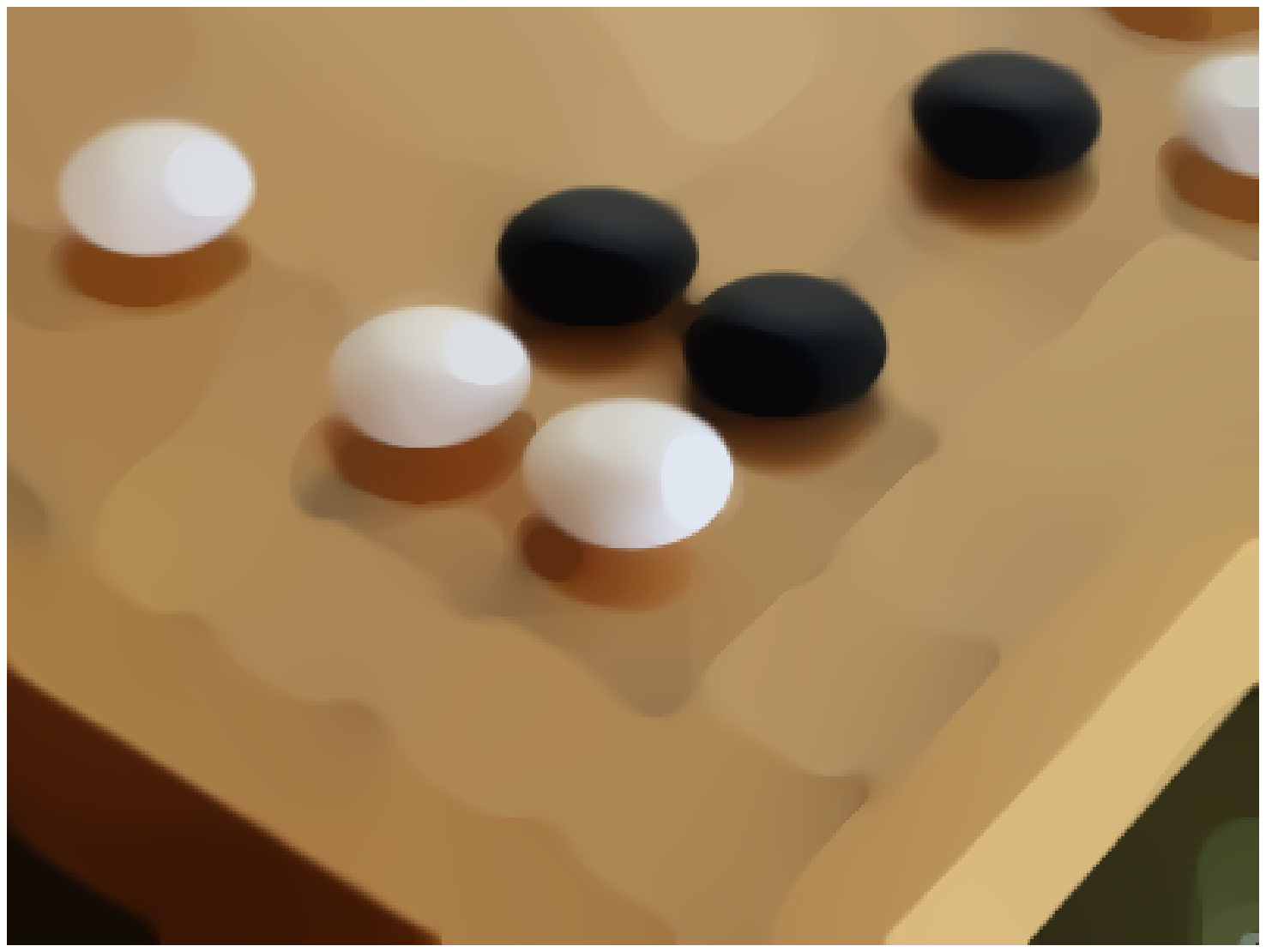}}
\vspace{5pt}
\centerline{
\includegraphics[width=1.1in]{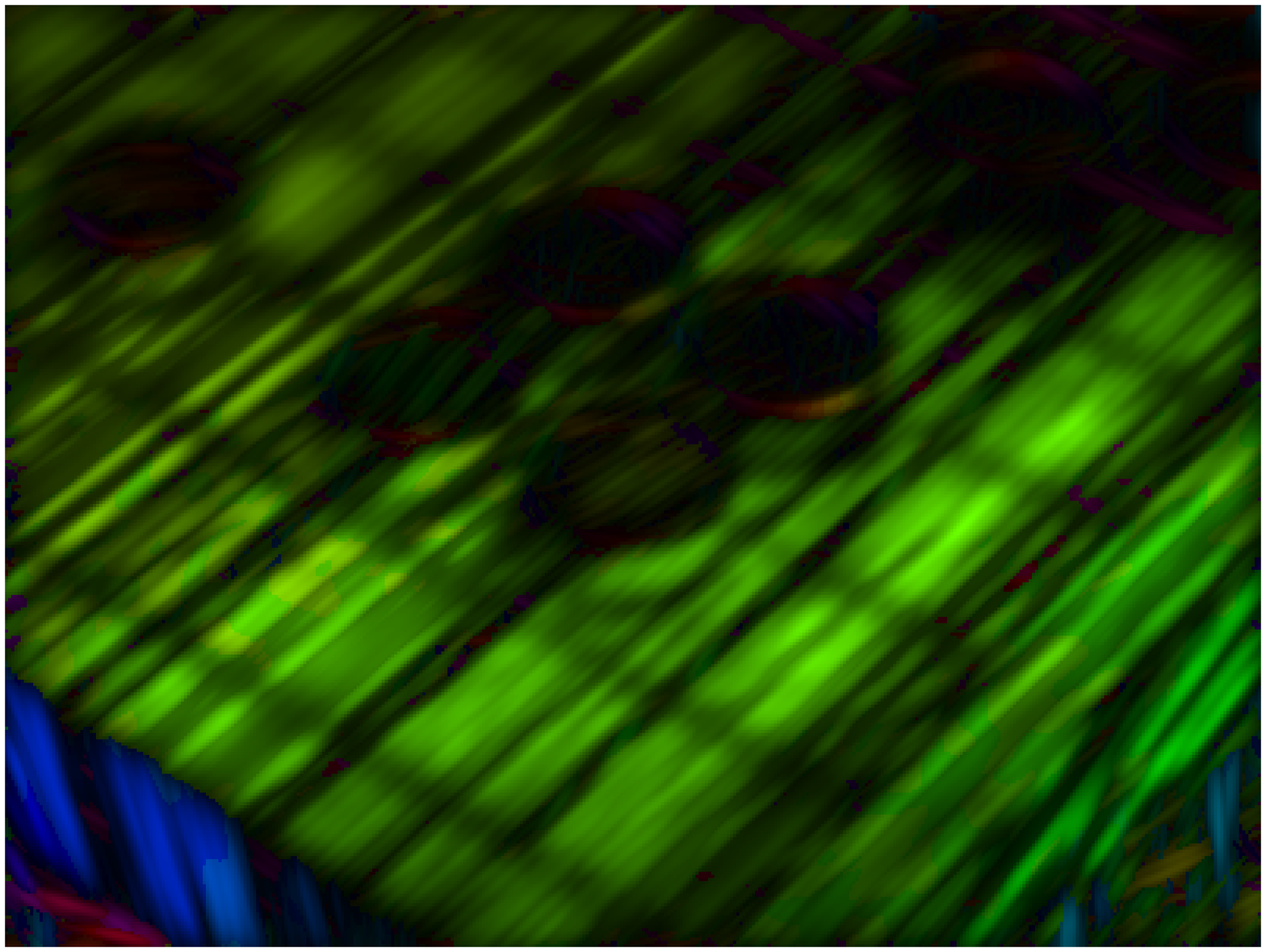}
\includegraphics[width=1.1in]{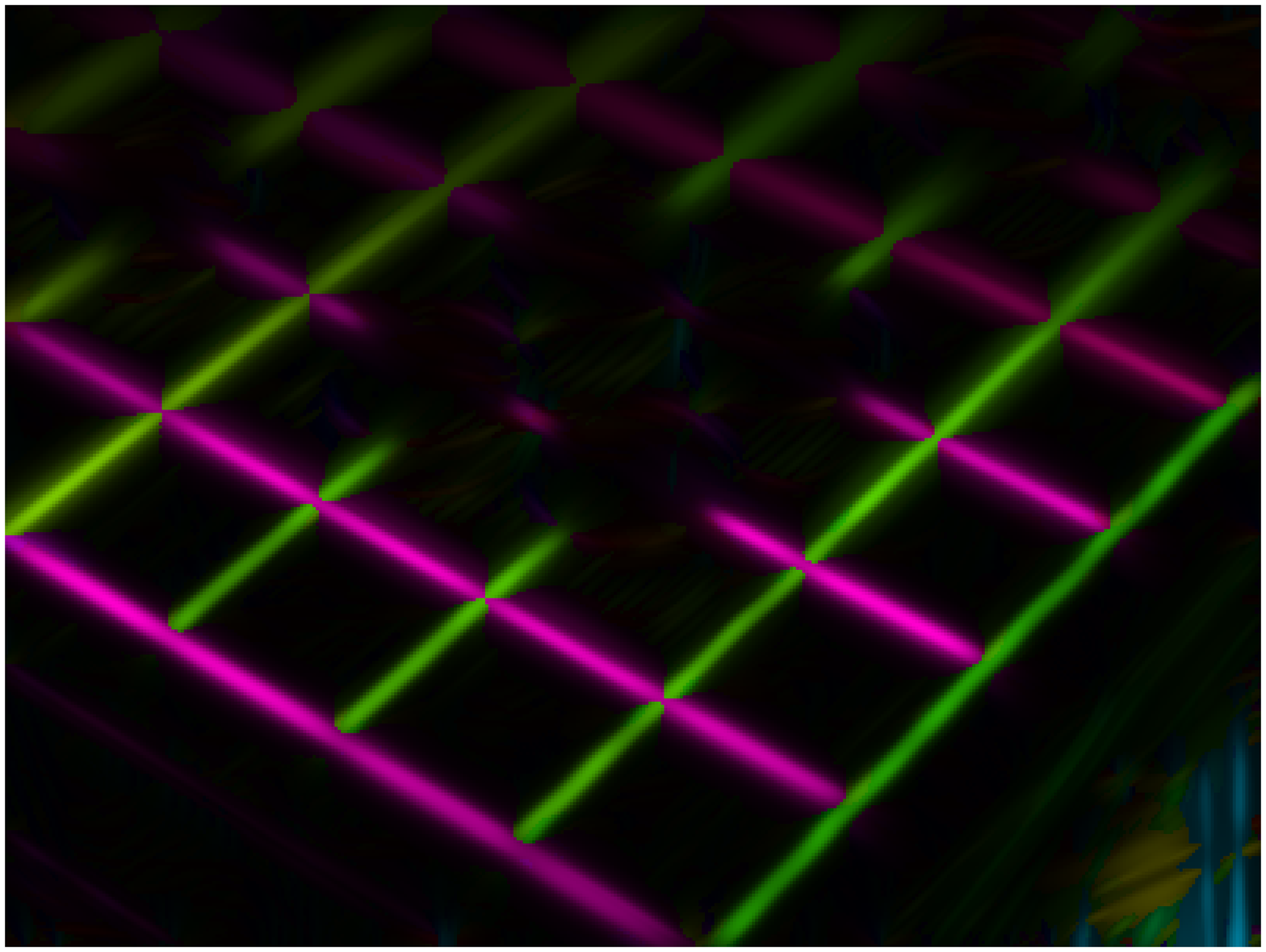}
\includegraphics[height=0.82in,width=1.1in]{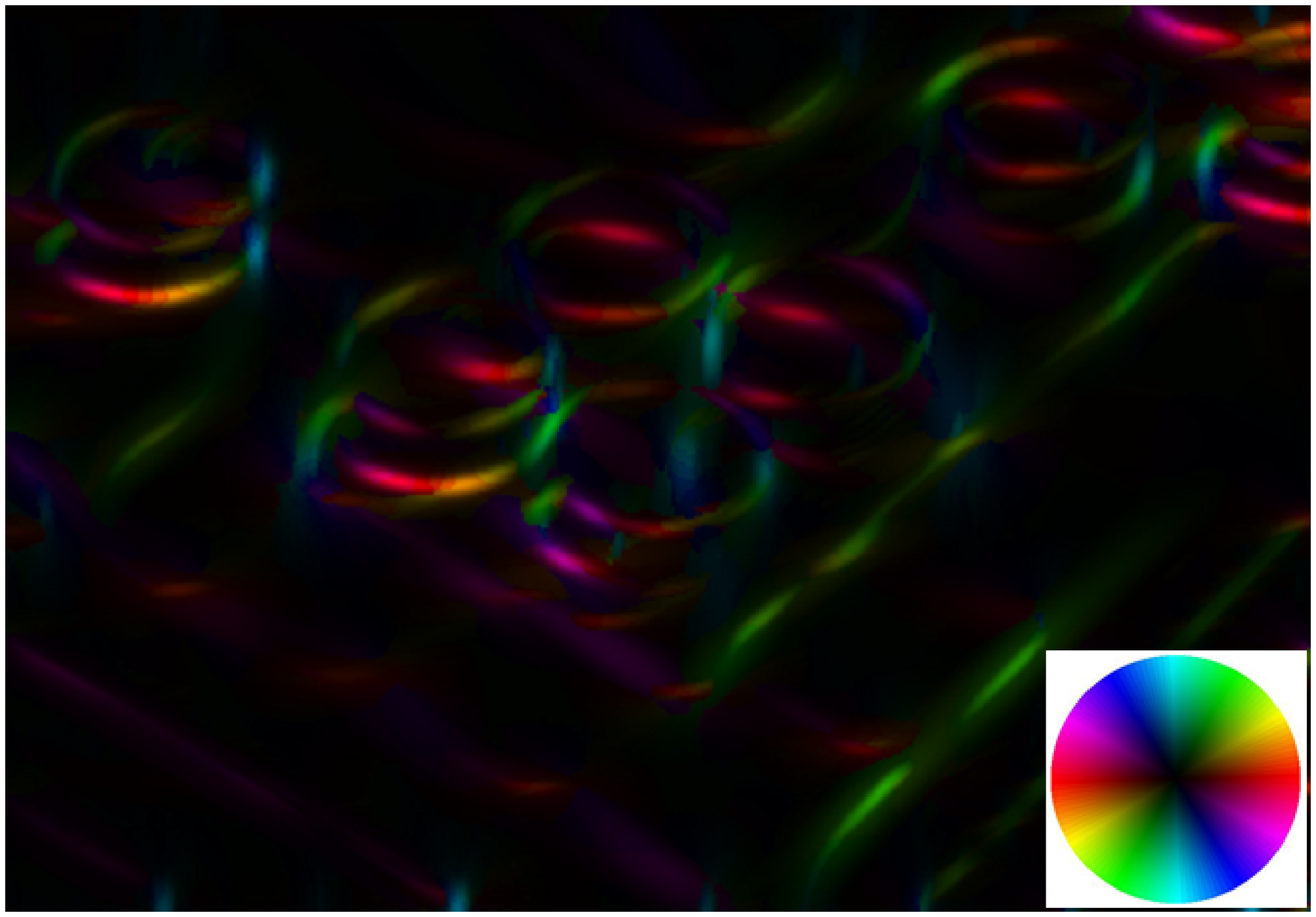}}
\caption {SOD of Game board image (Fig. \ref{fig:chess_in}). Our multi scale decomposition  of the image (top) and the corresponding orientation maps (bottom). (The contrast is enhanced for better visualization.)}\label{fig:chess_tex}
\end{figure}
\vspace{-15pt}
~\begin{figure}
\begin{tabular}{c}
\includegraphics[width=3.4in]{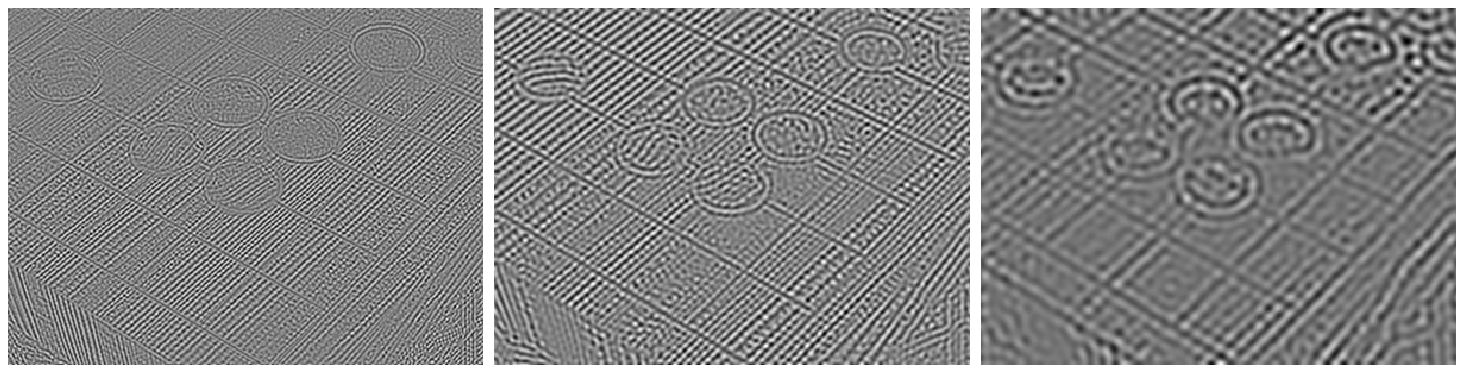}\\
\includegraphics[width=3.4in]{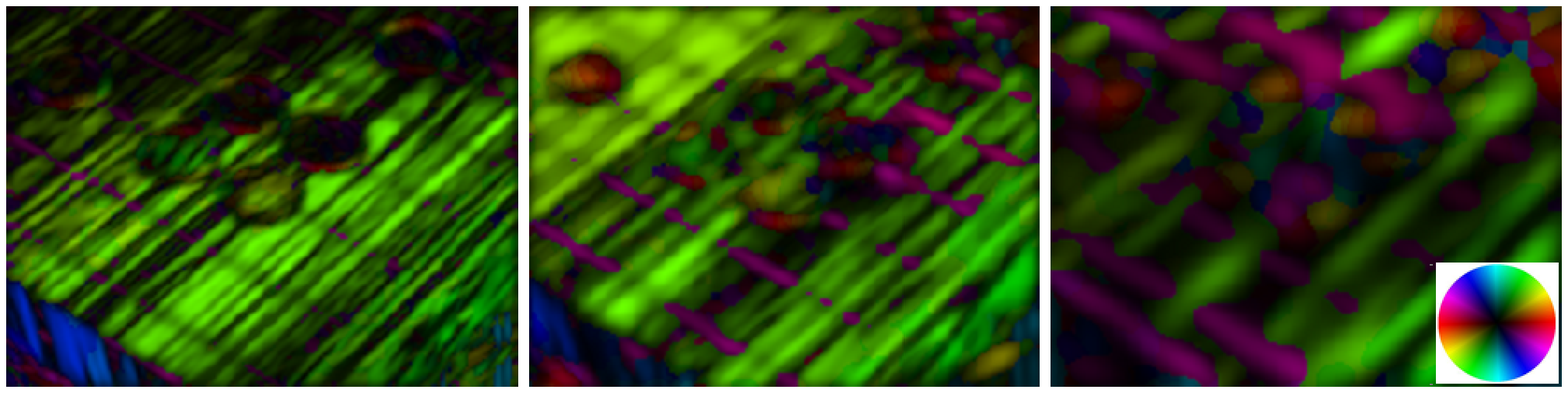}
\end{tabular}
\caption {Compared method \cite{gilles} decomposition result for game board image to 3 scales (top) and Gabor orientation visualization (bottom)}
\label{fig:chess_Gilles}
\end{figure}

\section{Separation Surface}
\label{sec:surf}

We now present the novel notion of the separation surface. The texture decomposition so far was done assuming that our texture is homogeneous and can be separated at a certain configuration which fits the whole image. However, that is not the situation in many natural images, in which, due to changing texture, lighting conditions, or perspective, the desired texture can not be decomposed in the same configuration for  the entire image. For that purpose we introduce the separation surface. It is a decomposition configuration, changing in a continuous manner in the image according to the texture.
We first characterize the different textures in the image to find the spectral time band of the desired texture. We then determine the exact spectral layer ($\phi$) for each pixel in a global manner to ensure smoothness of the extracted texture, under the assumption of texture being smooth and continuous. We define a band surrounding the surface to capture the entire features of the desired texture. We call it \textit{stratum}. In geology and related fields, a stratum is a layer of sedimentary rock or soil with internally consistent characteristics that distinguish it from other layers. Similarly, in our case, the stratum defines a certain texture, distinguished from other by its features. It consists of the spectral layers forming the desired texture for each pixel.
A figure simulating the separation surface can be seen in Fig. \ref{fig:surface}, as well as an image of natural rock stratum.




\subsection{Analytic Example}
We begin by examining an analytic example to understand the surface fitting process. It is well established that disks are elementary structures for the TV functional. They satisfy the eigenvalue problem \eqref{eq:ef_problem} which implies their shape stays the same during a TV gradient descent evolution (TV-flow \cite{tv_flow}). Moreover, the eigenvalue of a disc of radius $r$ and height $h$ is inverse proportional to those measures \cite{Meyer[1],tv_flow}:
\begin{equation}
\label{eq:disc}
\lambda \propto \frac{1}{h r},
\end{equation}
where in the spectral TV domain we have the discs appear at scale $t= \frac{1}{\lambda}  \propto hr$.
We would like to analyse the image in  Fig. \ref{fig:circles}, containing synthetic discs in varying size and contrast. In order to do so, we will recall two properties of spectral TV  \cite{Gilboa_spectv_SIAM_2014}:\\
\textit{Contrast change.}  
$$ f(x) \to af(x),\,\, \phi(t,x) \to \phi(t/a,x),\,\, S(t)\to S(t/a).$$
\textit{Spatial scaling (2D)}.
$$ f(x) \to f(ax),\,\, \phi(t,x) \to a\phi(at,ax),\,\, S(t)\to a^{-1}S(at).$$
For example, for an image which is half the size of the original image, we have a = 2. 

 The effect of the contrast change in our example is that the darker the discs (lower contrast), their scale is lower. The effect of the scale change  is that the smaller the discs, their scale is lower. In this example one can see how the scale and the maximal $\phi$ value gradually changes with size and contrast.
 Following this analytic example we can understand the spectral TV behaviour of different images and textures.

\begin{figure}
\centering
\begin{tabular}{ccc}
\subfloat[Input image]{\includegraphics[width=1.1in, height=0.7in]{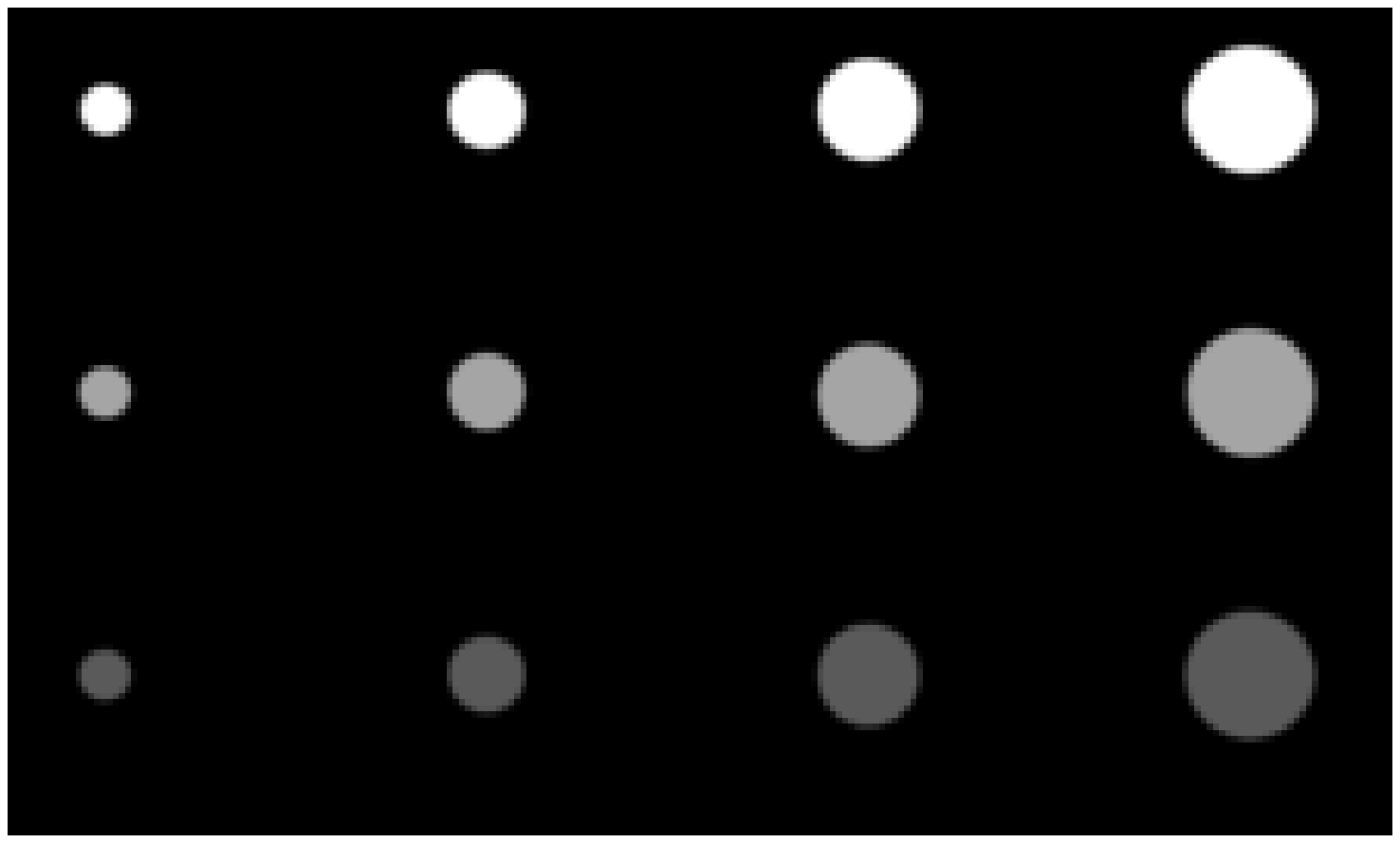}} &
\subfloat[Max Phi Value]{\includegraphics[width=1in]{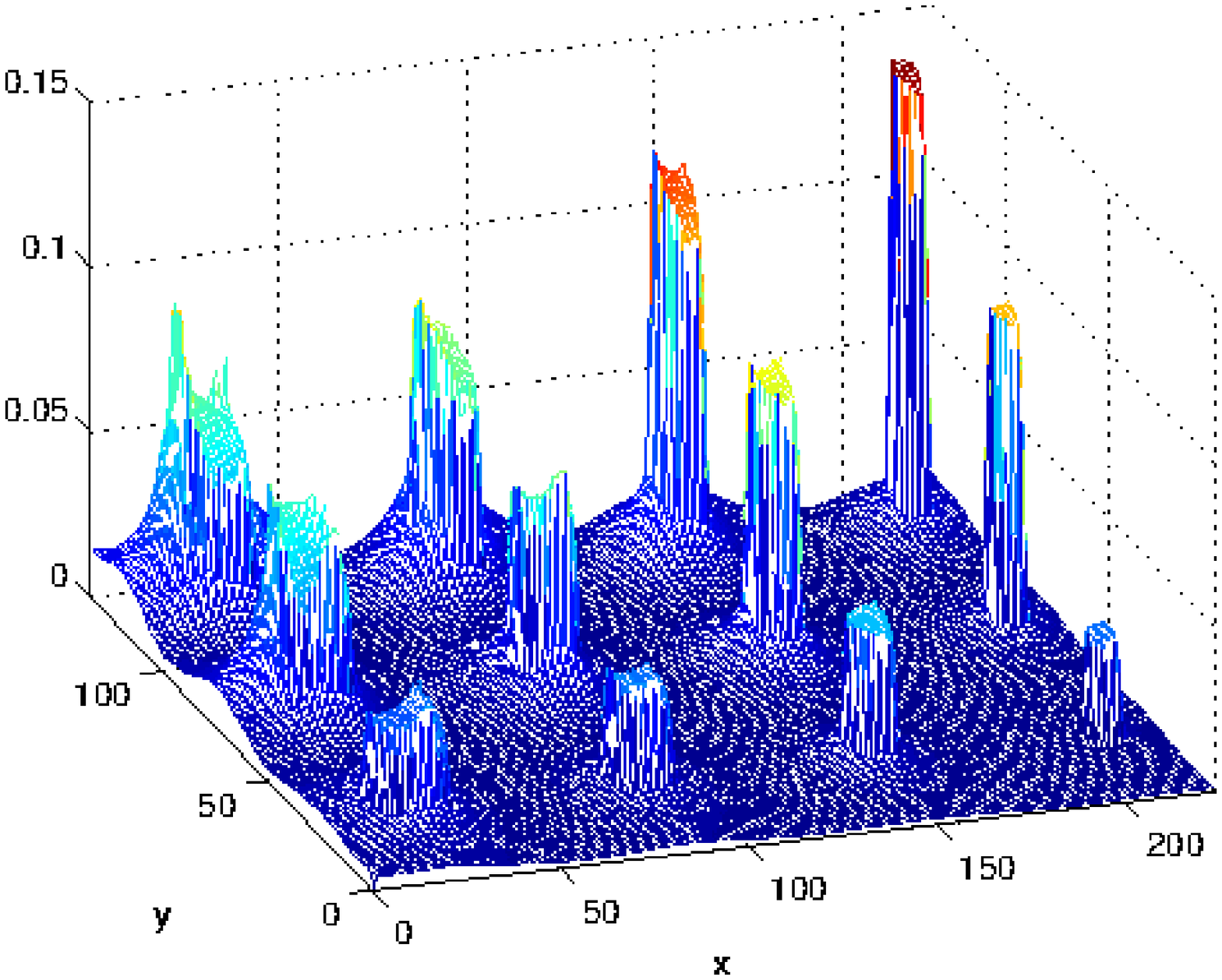}} &
\subfloat[Max. Time]{\includegraphics[width=1.1in]{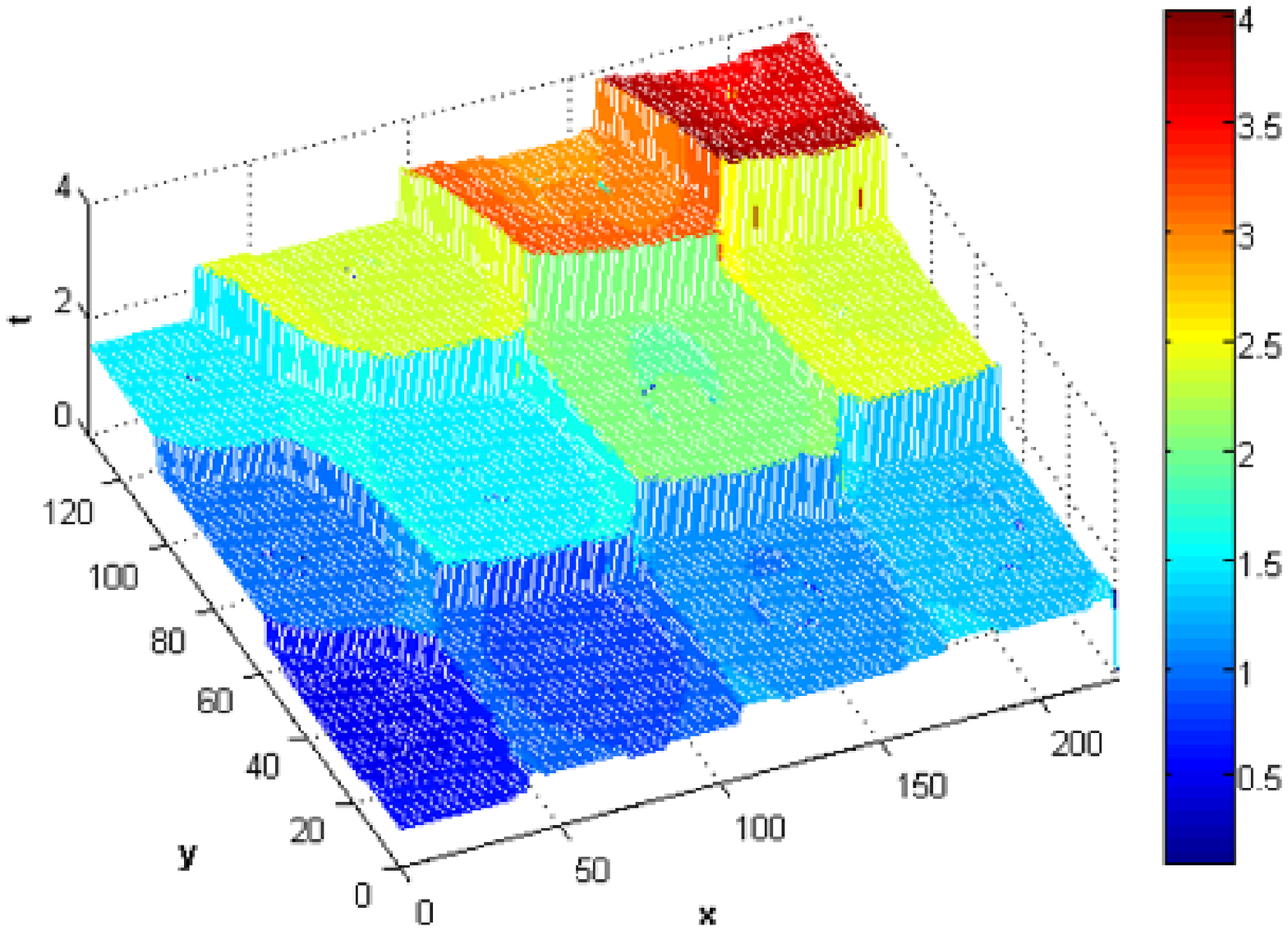}} \\
\end{tabular}
\caption {Input image of discs varying in size and contrast (a), the maximal $\phi$ value (b) and the evolution time  (c)}
\label{fig:circles}
\end{figure}

\begin{figure*}
\centering
\includegraphics[width=6.6in]{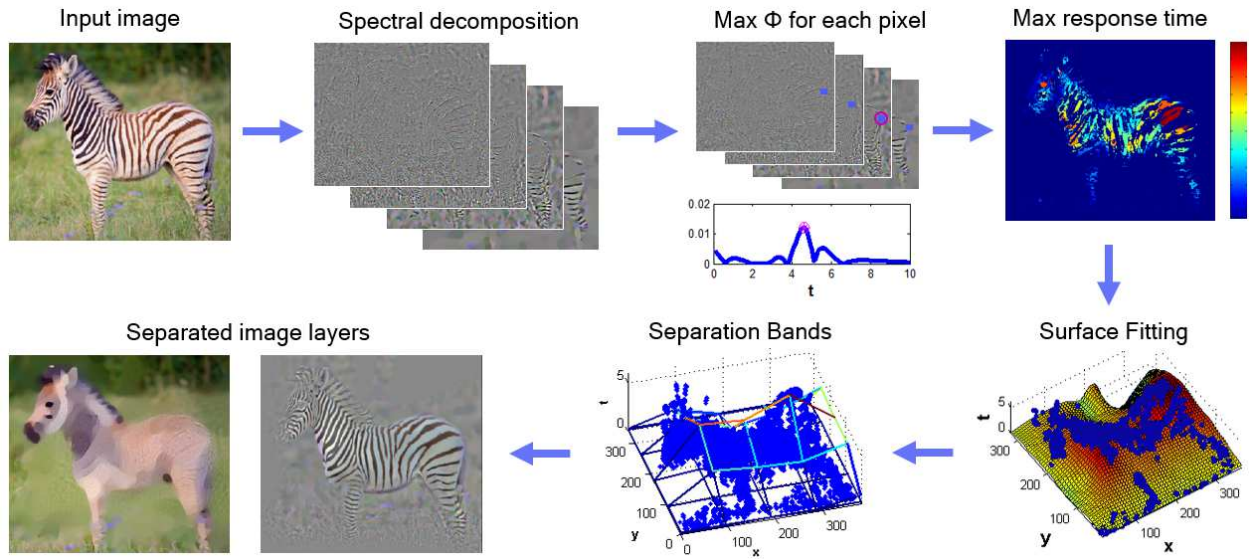}
\caption {Algorithm flow of the separation surface}
\label{fig:algorithm}
\end{figure*}

\subsection{Algorithm}
We present a separation surface algorithm using the TV spectral framework. The idea is that natural textures are not homogeneously distributed in scale and space, instead they continuously vary in the image. Under that assumption we suggest a decomposition configuration, changing in a continuous manner in the image according to the texture.
The spectral framework allows a straightforward way to achieve such complex decomposition.
The general approach can be described as follows:
\begin{enumerate}
\item Compute the spectral components $\phi(t)$ and the spectrum $S(t)$.
\item Manually or automatically, analyze the spectrum and choose the spectral components $\phi(t)$, $t_1 \leq t \leq t_2$, among which the desired texture stretches.
\item Create a salient time map $T(x)$: for each pixel $x$, take $t$ which maximizes $\phi(t,x)$ within the range $t \in [t_1,t_2]$.    
\item Filter $T(x)$ to enable good surface fitting.
\item Fit a surface using regression to the corresponding filtered time mapping.
\item Calculate the bands below and above the surface to get the integration times of the desired texture in each pixel.  
\item Reconstruct the desired texture layer by integrating over the texture stratum, using:
\begin{equation}
\label{eq:stratum}
f_H(x)=\int_{0}^\infty{}H(t; x) \phi (t; x)dt,\\
\end{equation}
$$H(t; x) = \left\{\begin{matrix}
1 &  (t;x) \in stratum \\
0 & else
\end{matrix}\right.$$

\end{enumerate}
\textit{Notes}
\begin{itemize}
\item We take the maximum $\phi$ among the selected spectral components under the assumption that in the scale range of the desired texture, it is dominant and therefore its response on those spectral components is high, and at most times, higher than the other patterns in those scales.
\item Filtering of the maximal $\phi$ can include omitting values on image boundaries, taking only $\phi$ values at limited percentiles (we used percentile range of 85-95) etc..
\item The band width is set according to the scale of the surface at each image location. 
The band is wider
at higher spectral scales, due to smear effect with time (Fig. \ref{fig:text1504_result}(c)).
\end{itemize}
Figure \ref{fig:algorithm} illustrates the proposed algorithm of the separation surface. Zebra image was taken, to extract the stripes texture. It can be vividly seen that for coarse stripes texture, later times are taken, and larger separation band accordingly. The output is the two separated image layers, of the zebra stripes and the image residual. One can see that all the stripes are extracted, coarse and fine, with high and low contrast, while successfully preserving the edges in the residual image.

\subsection{First Order Surface Examples}
Favourably, we work with first order surfaces, since it allows us to better regulate the data and dismiss outliers, and due to its robustness. Later we show that it can be generalized to surfaces of higher order. Let us examine a synthetic example. A texture was taken of the Brodatz texture database, and was added a pattern of circles horizontally varying in contrast, Fig. \ref{fig:text1504_result}. In this example, we can see in the separation band image that the base texture and the circles are not separated in scale, and the fitted separation plane and the band taken, manage to capture the entire circles features while including  just a bit of the base texture, and only on the left of the image, where the contrast of the circles is low and their spectral scale is similar to that of the base texture. Our separation result is very good and highly resembles the original textures. 
Comparisons to TV-G and to RGF are shown where in both cases the separation is not good and residual of the textures 
 vividly remains in both the layers. Another example is shown on Fig. \ref{fig:tiles} of street tiles, one can see the pattern is homogeneous in nature but in the image it is linearly changing due to perspective point of view. We decomposed that image using the spectral TV filtering, and took the max $\phi$ time mapping. We can see it captures the linear change in the vertical direction of the image. The upper side is far from the photographer, thus details are small and the evolution time is short. The bottom of the image is close, thus details are coarser and the resulting evolution time is longer. Taking the stratum accordingly, we capture the entire texture in one image and the structure in another image, with sharp edges, and no tiles remainder. We compared this result to the state-of-the-art RGF result  \cite{zhang2014rolling} which suggest a multi-scale decomposition scales, we show here the 2nd and 3rd scales. In both of them, there can be seen a difference between the lower and upper sides of the image. In the 2nd scale there are many tiles edges, mostly on the lower side of the image, and in the 3rd scale, the structure edges are already smeared, especially at the upper side of the image.

\begin{figure}
\vspace{-10pt}
\centering
\begin{tabular}{cc}
\subfloat[Synthesized input image]{\includegraphics[width=1.6in]{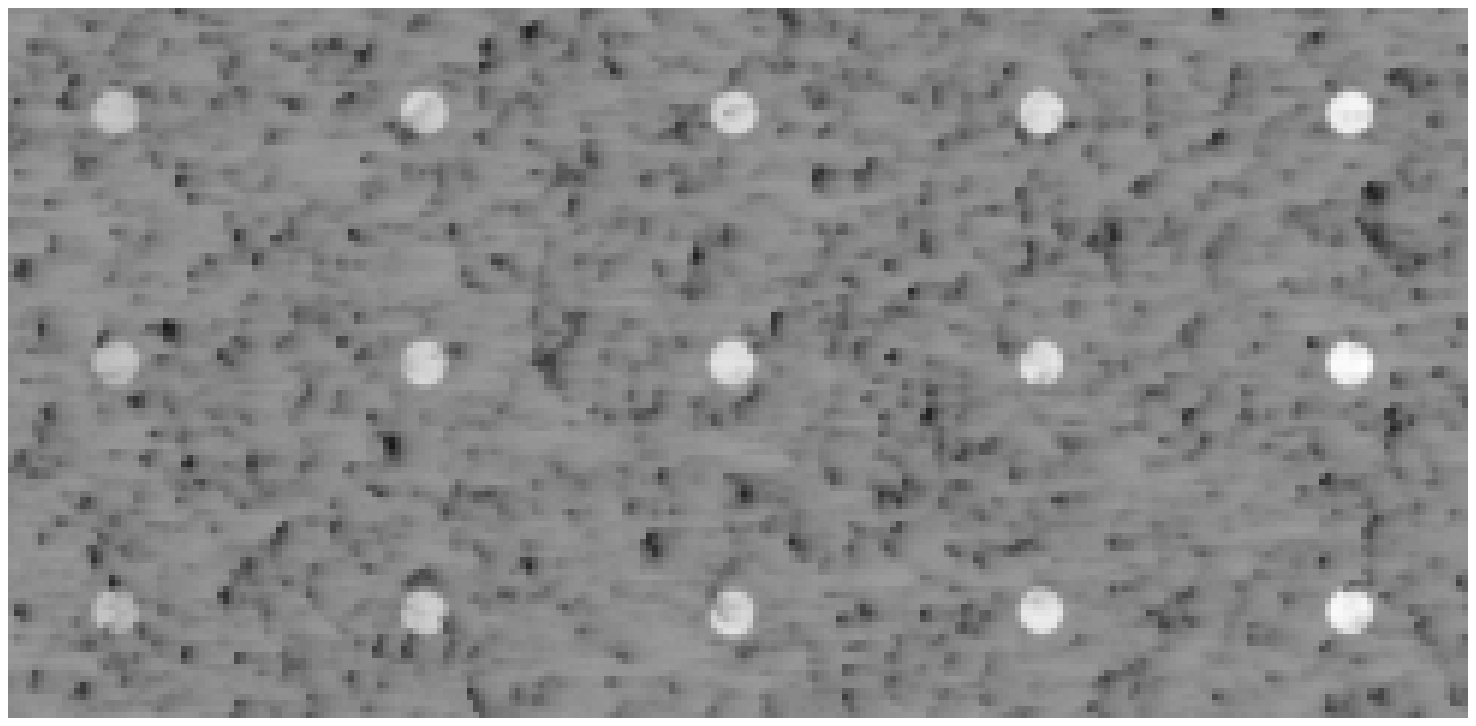}} &
\subfloat[Max. $\phi$ time]{\includegraphics[width=1.6in]{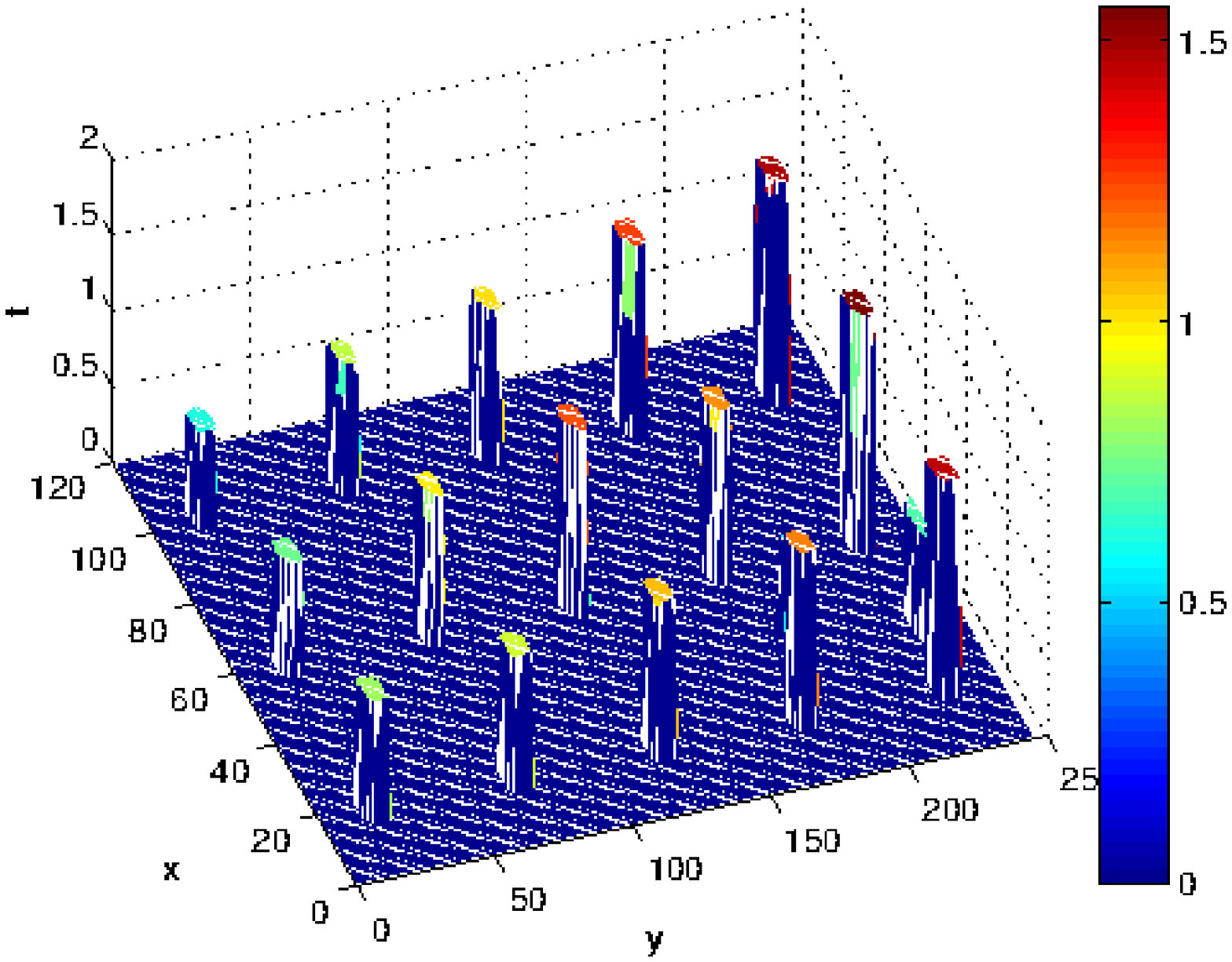}} \\
\subfloat[Band width]{\includegraphics[width=1.6in]{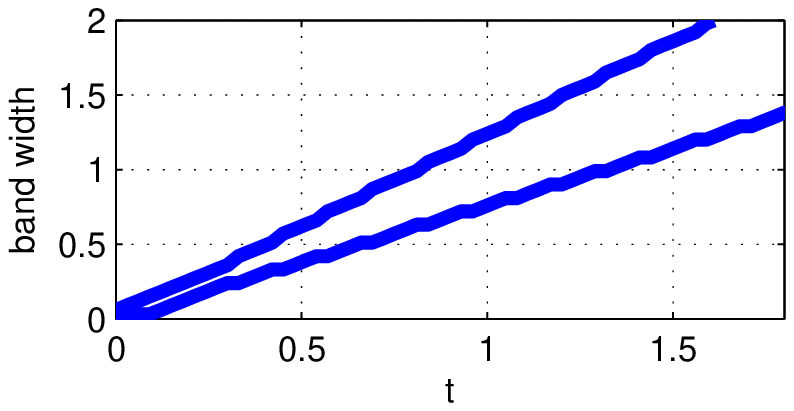}} &
\subfloat[Separation band]{\includegraphics[width=1.6in]{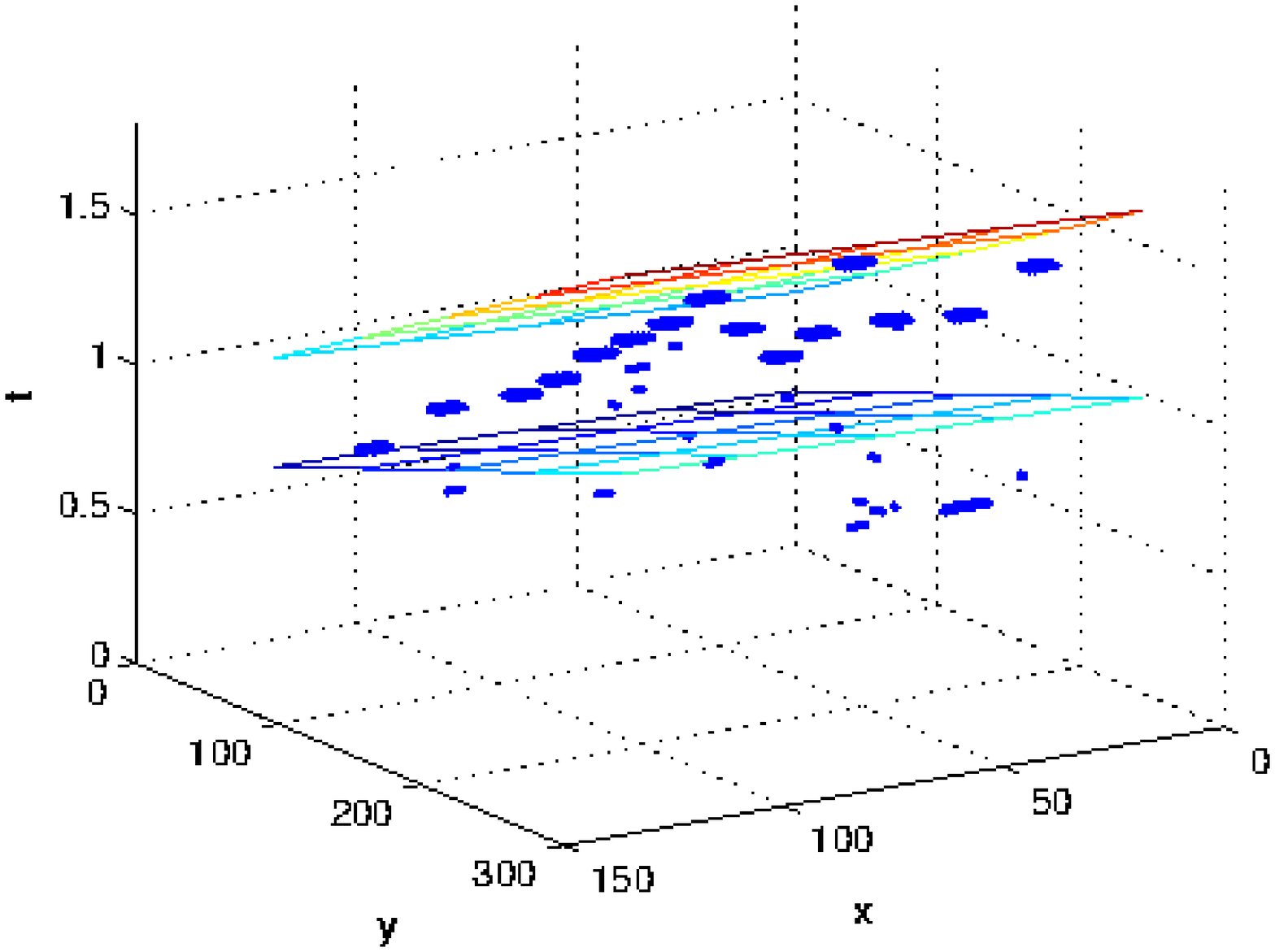}} \\
\subfloat[Texture 1 - original]{\includegraphics[width=1.6in]{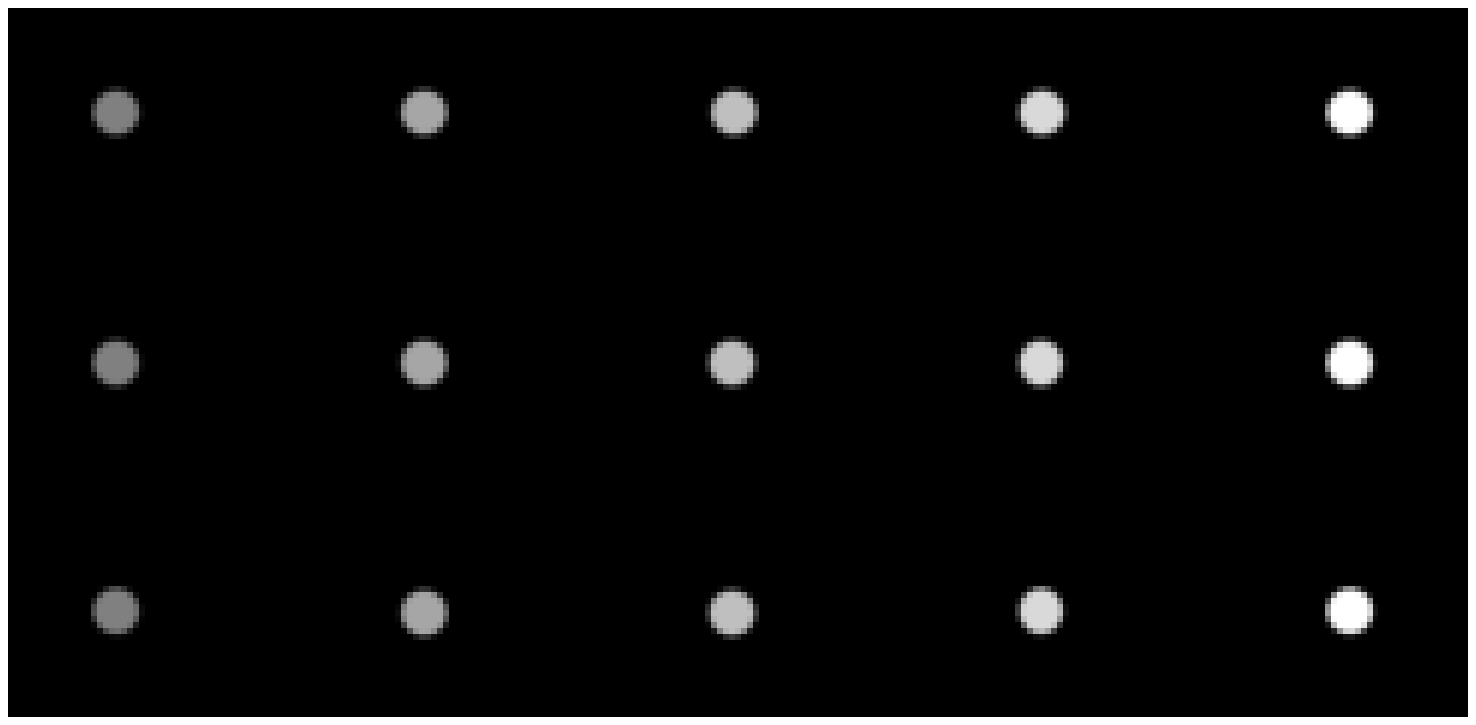}} &
\subfloat[Texture 2 - original]{\includegraphics[width=1.6in]{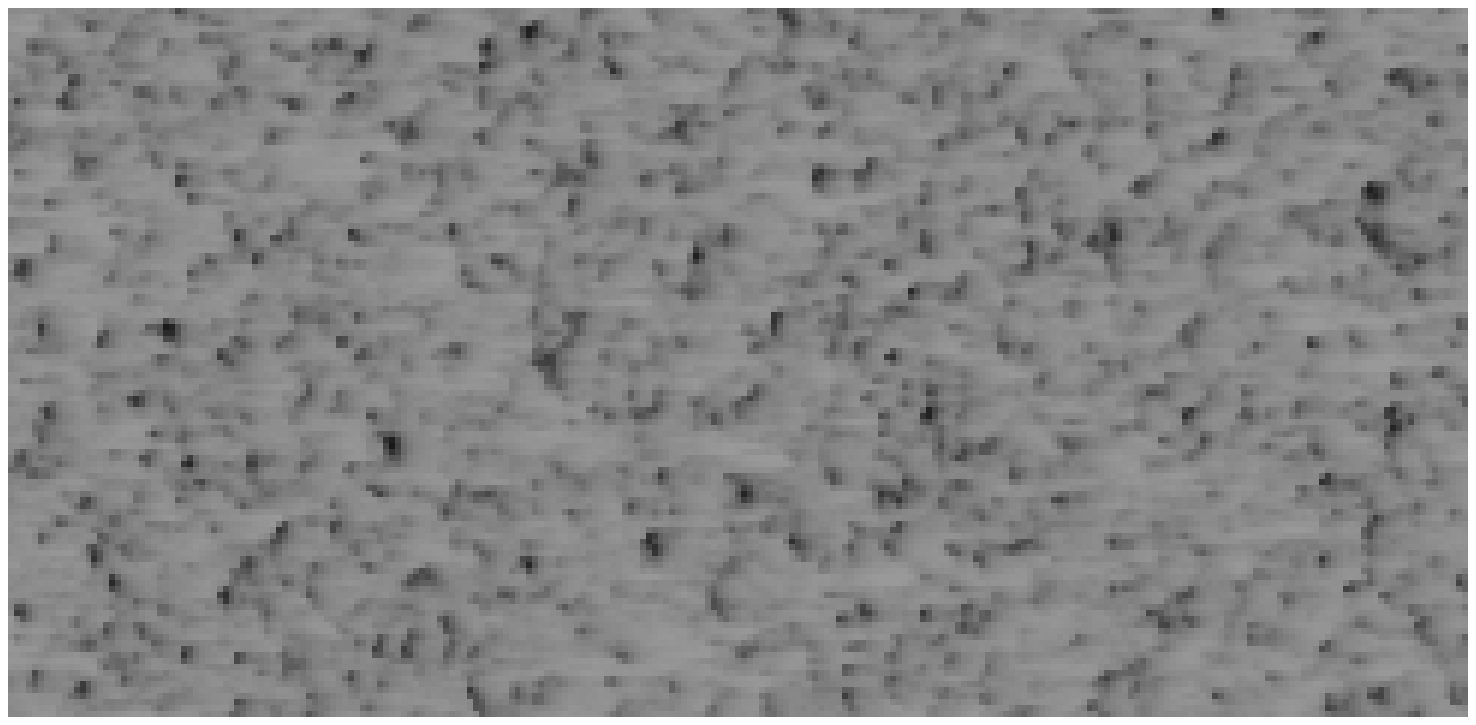}} \\
\subfloat[Decomposed layer 1 - proposed]{\includegraphics[width=1.6in]{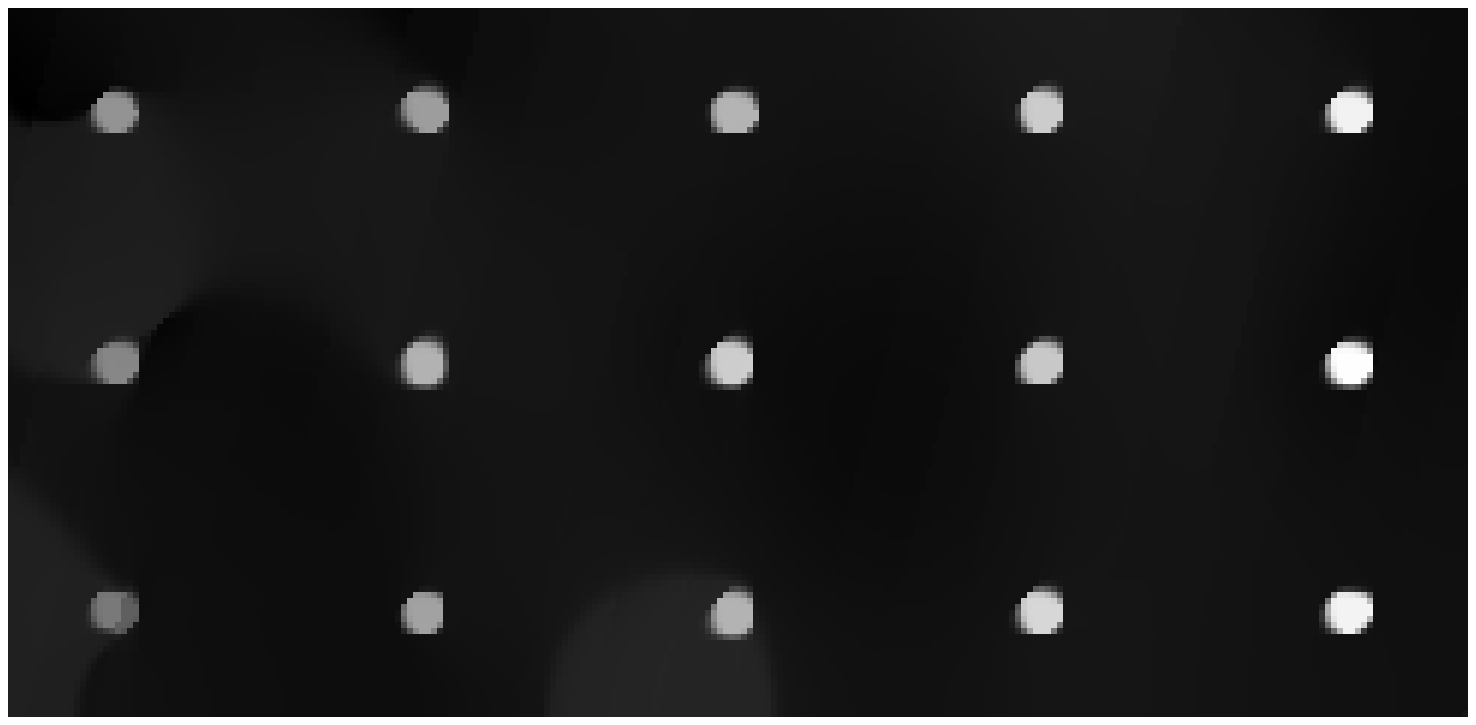}} &
\subfloat[Decomposed layer 2 - proposed]{\includegraphics[width=1.6in]{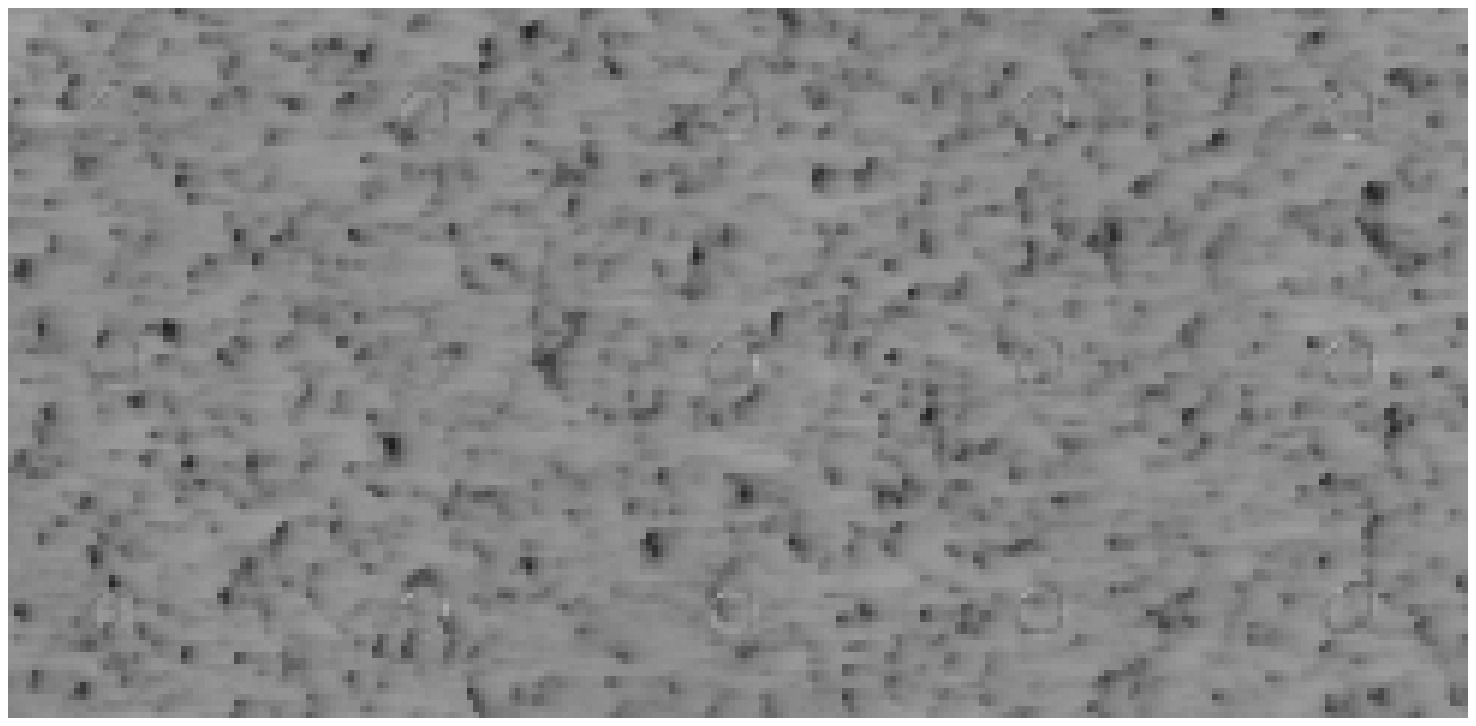}} \\
\subfloat[TV-G, layer 1]{\includegraphics[width=1.6in]{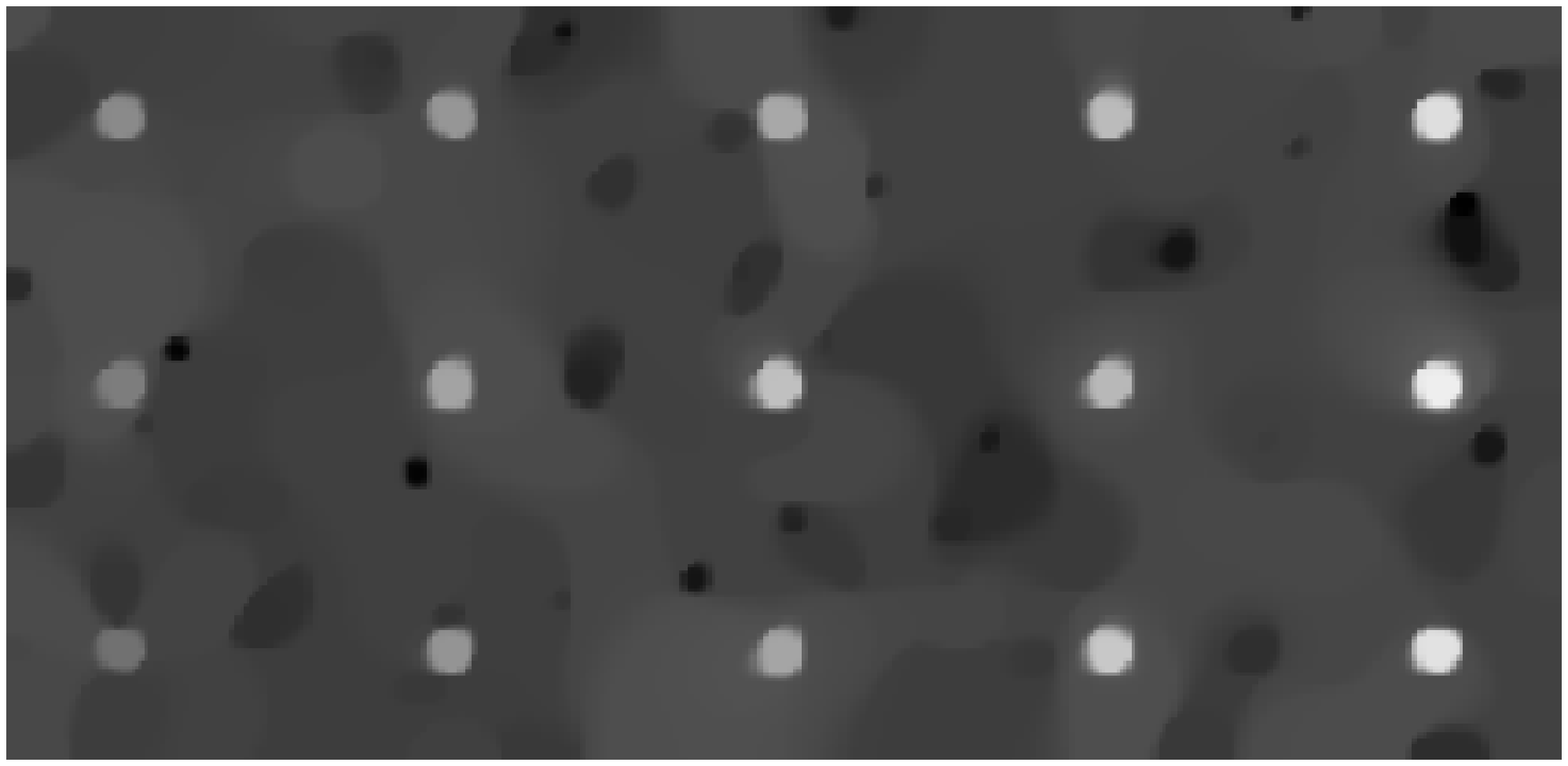}} &
\subfloat[TV-G, layer 2]{\includegraphics[width=1.6in]{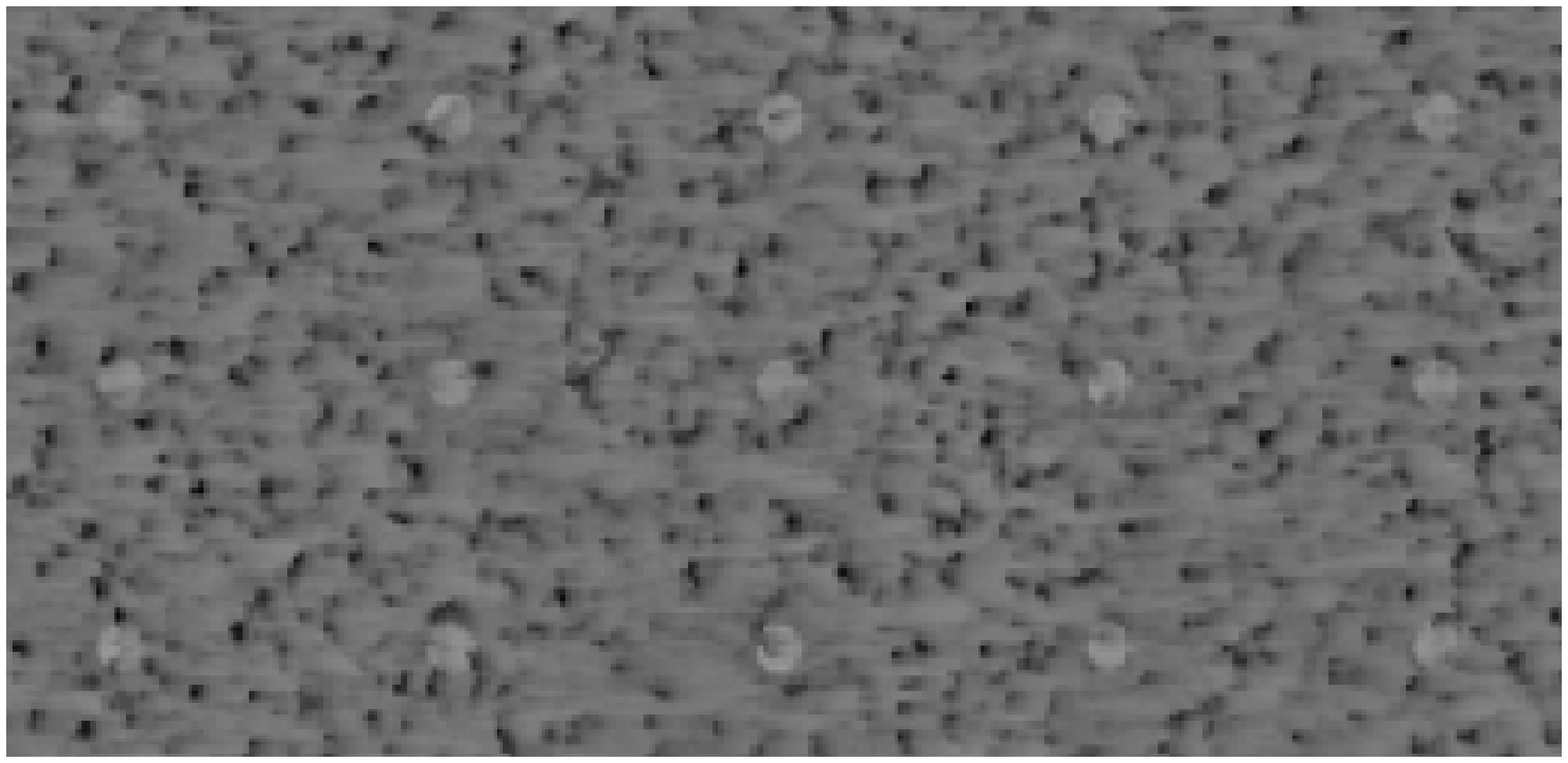}} \\
\subfloat[RGF, layer 1]{\includegraphics[width=1.6in]{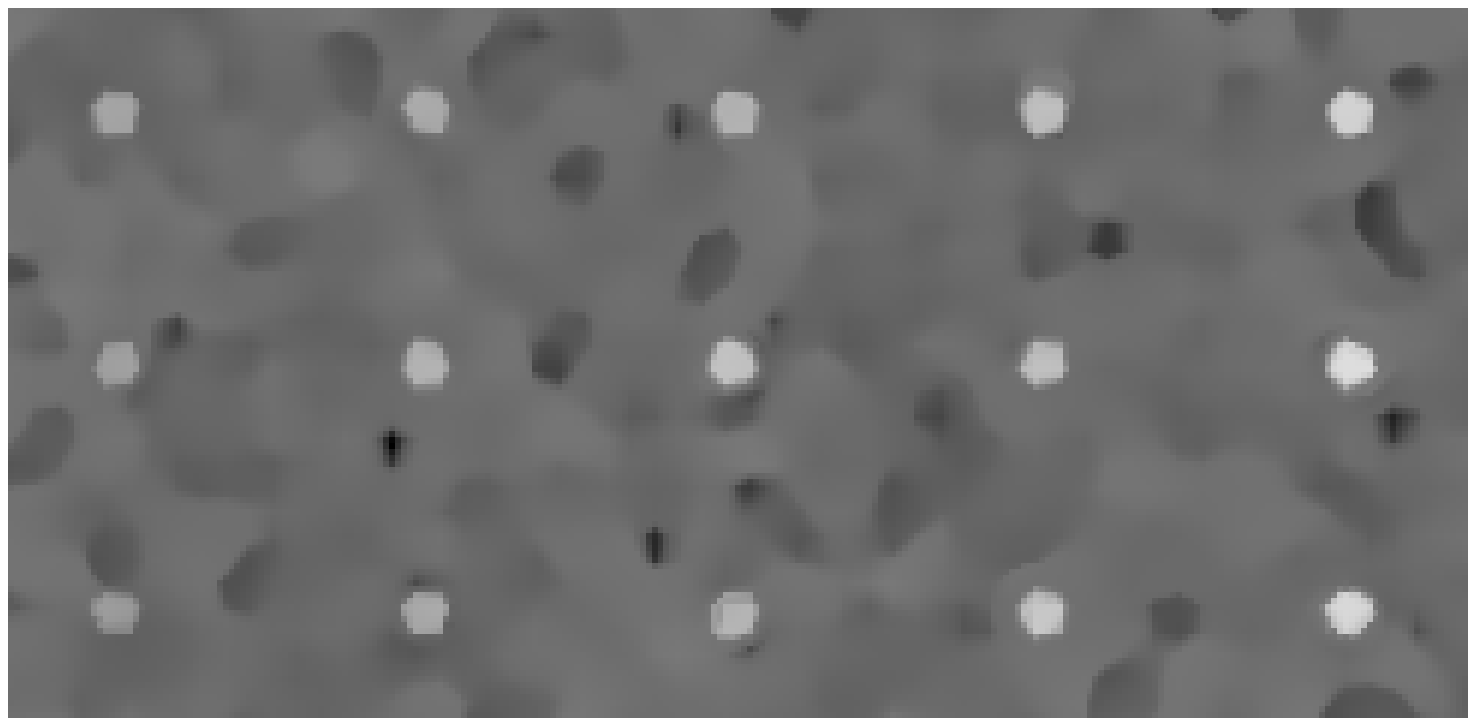}} &
\subfloat[RGF, layer 2]{\includegraphics[width=1.6in]{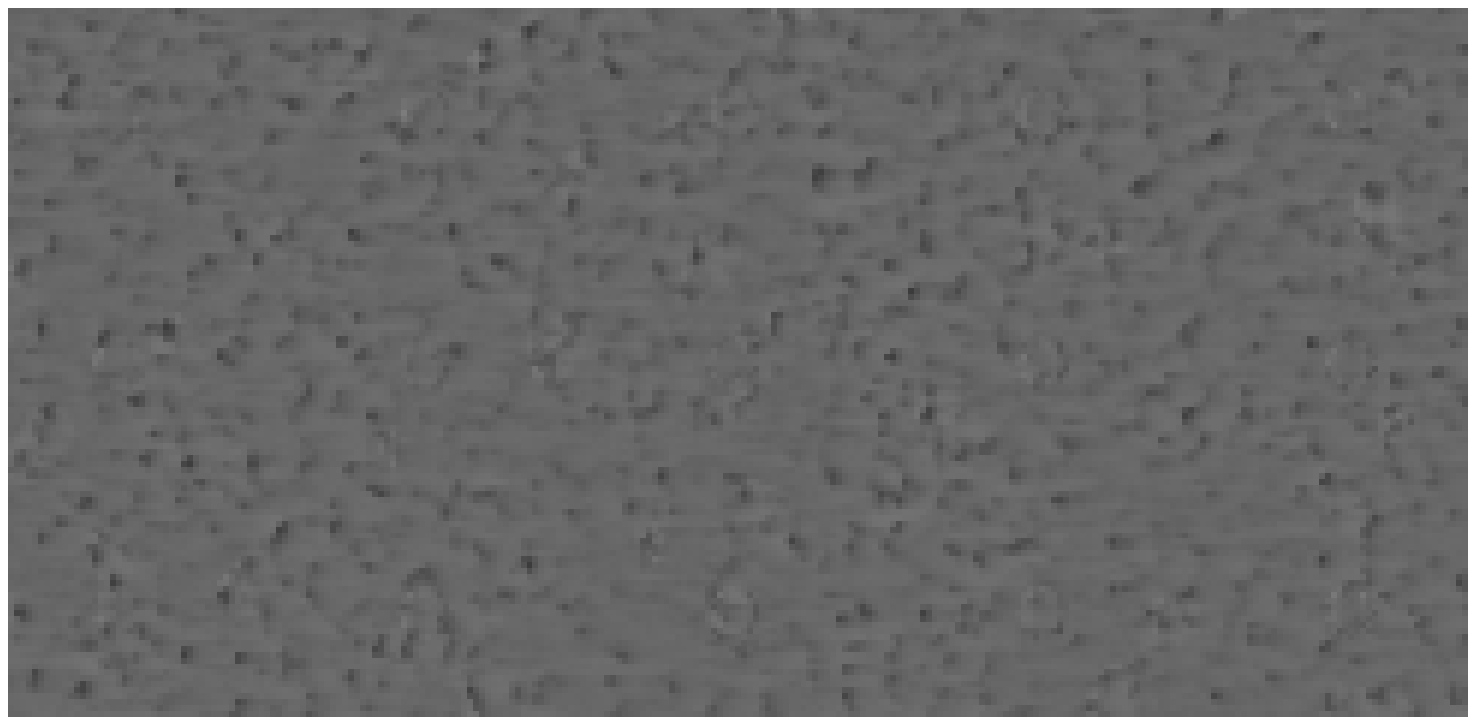}} \\
\end{tabular}
\caption {Separation plane of a synthetic example}
\label{fig:text1504_result}
\vspace{-10pt}
\end{figure}

\begin{figure}
\centering
\begin{tabular}{cc}
\subfloat[Input image]{\includegraphics[width=1.6in]{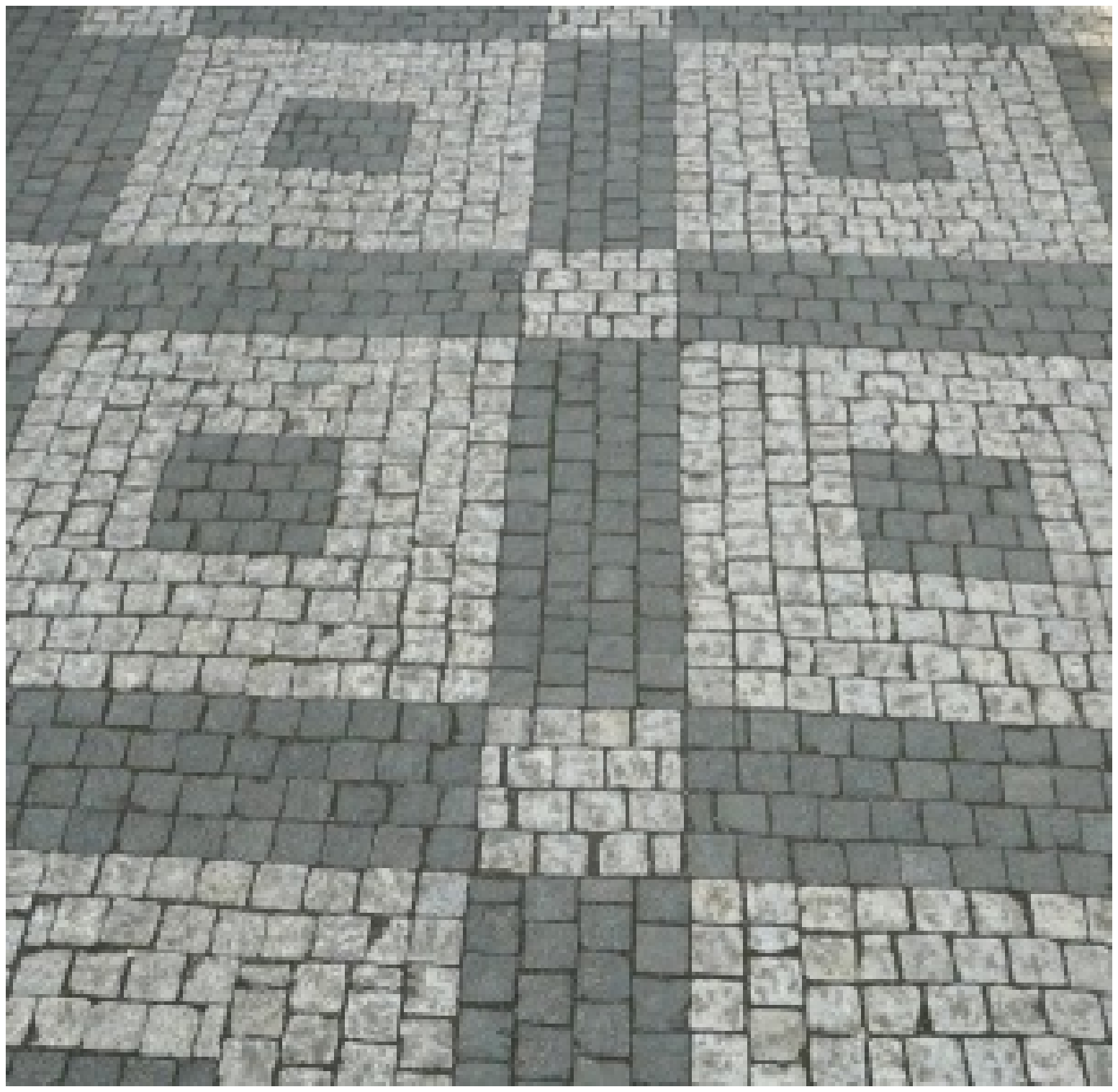}} &
\subfloat[Max. Phi time with separation band]{\includegraphics[width=1.6in]{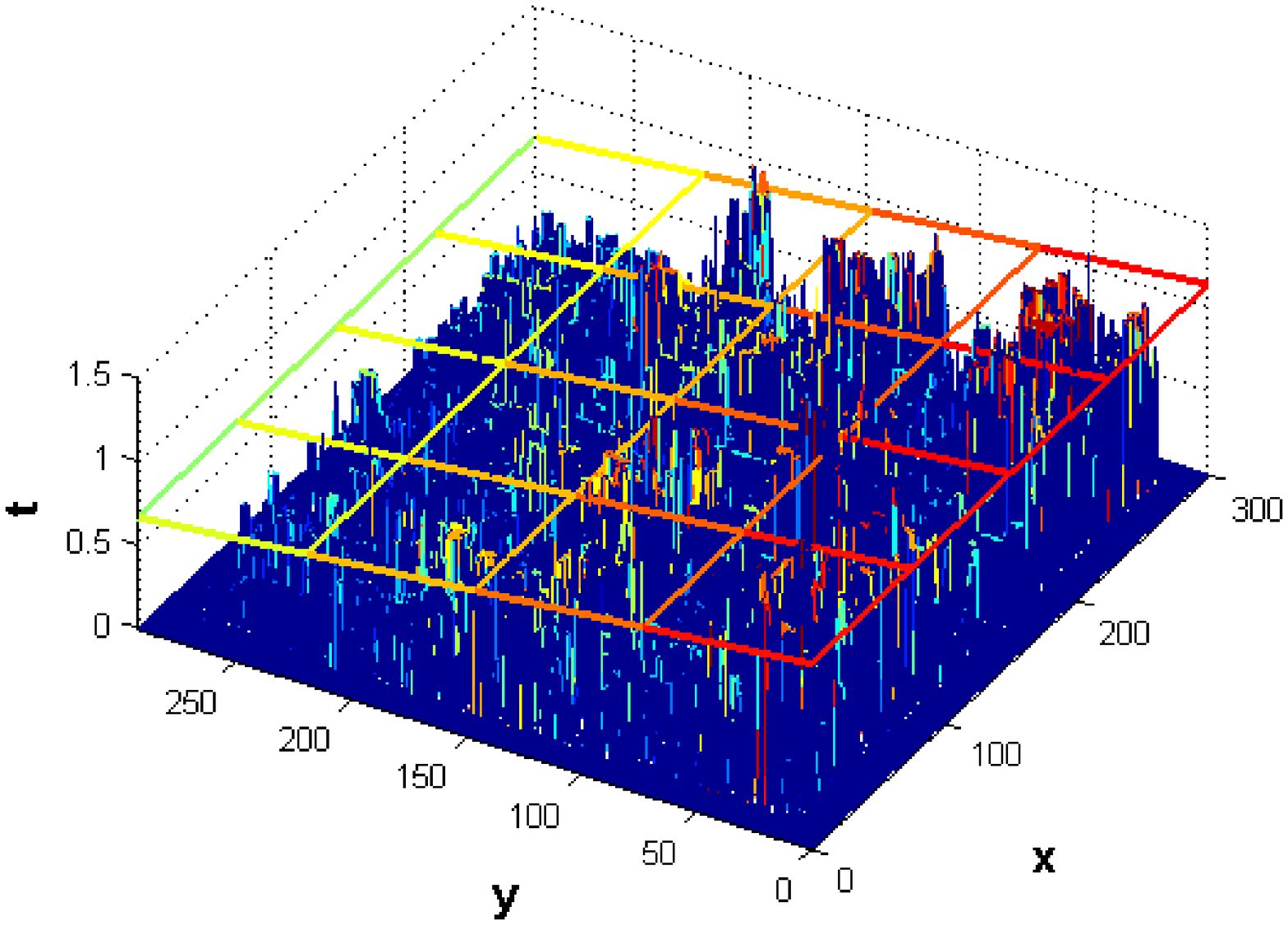}} \\

\subfloat[Proposed texture]{\includegraphics[width=1.6in]{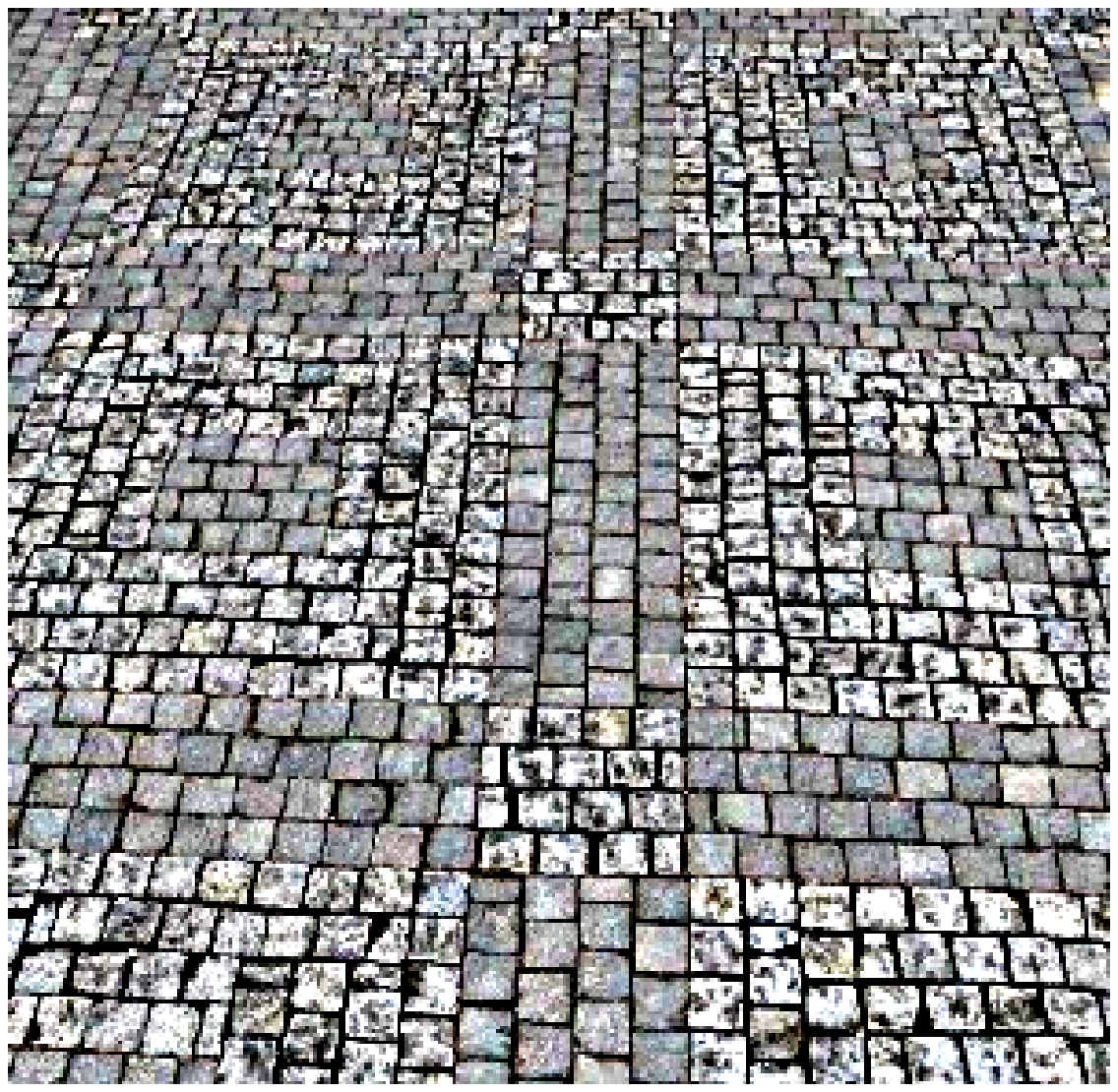}} &
\subfloat[Proposed structure]{\includegraphics[width=1.6in]{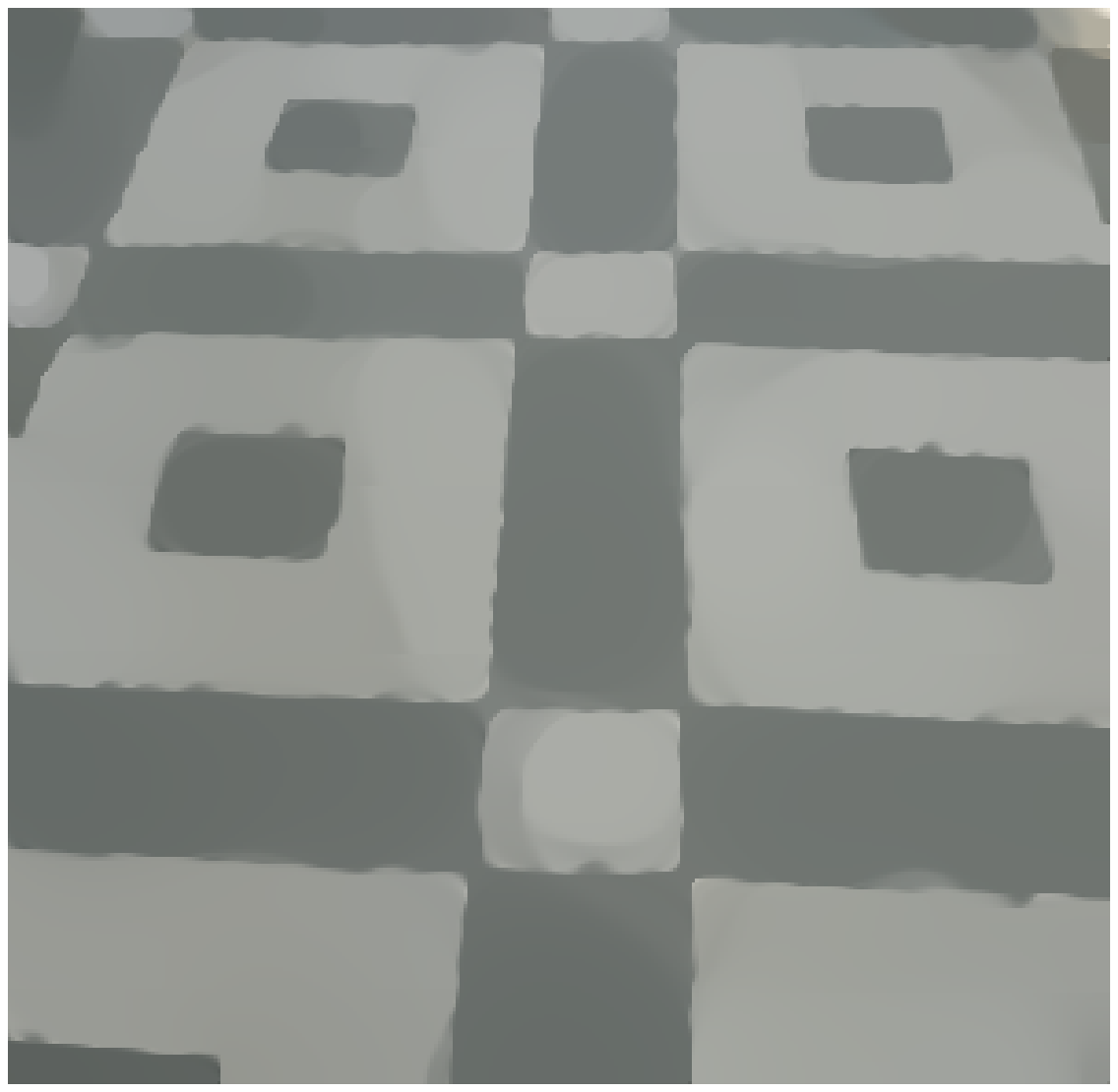}} \\

\subfloat[RGF decomposition: scale 2]{\includegraphics[width=1.6in]{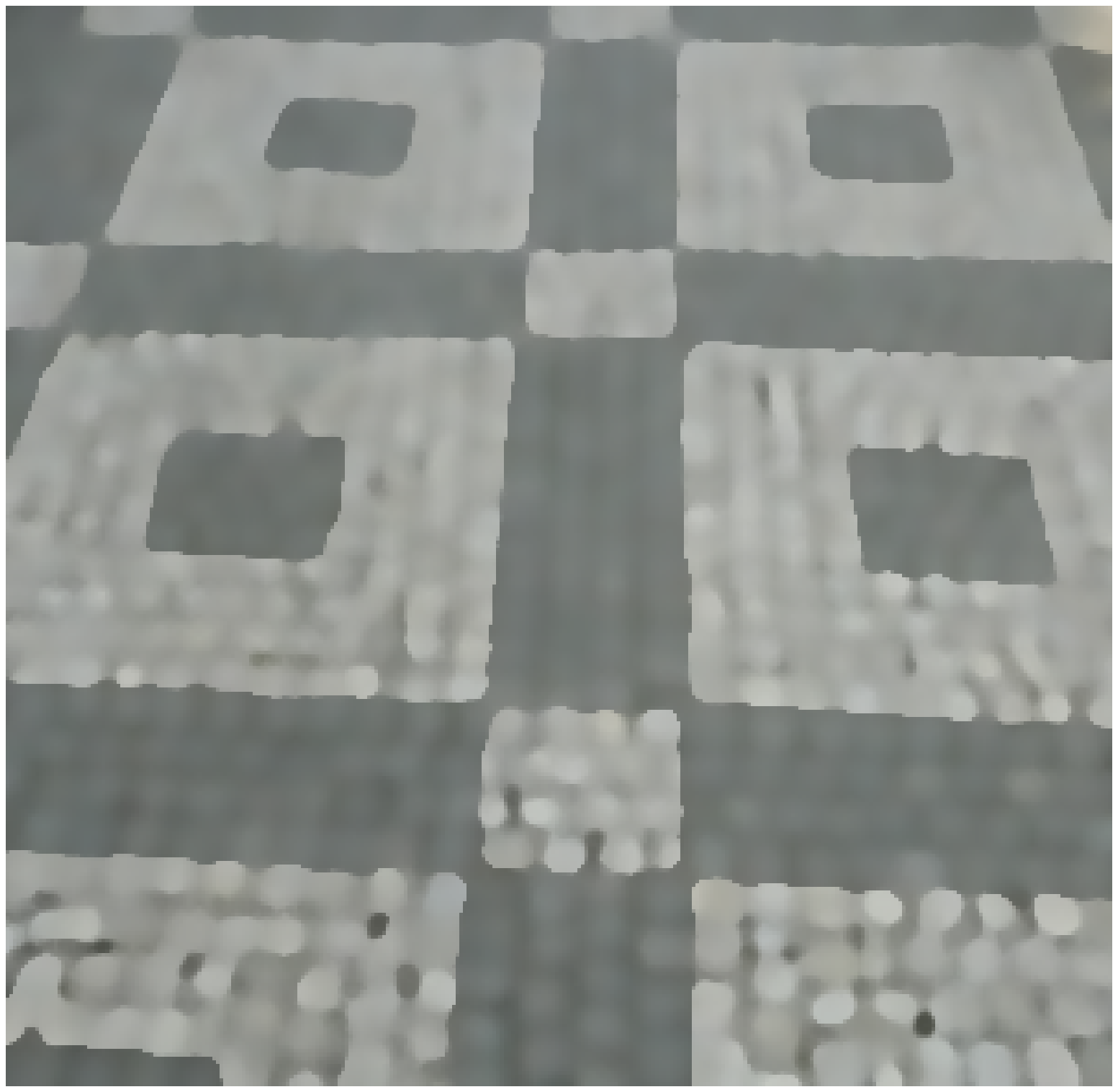}} &
\subfloat[RGF decomposition: scale 3]{\includegraphics[width=1.6in]{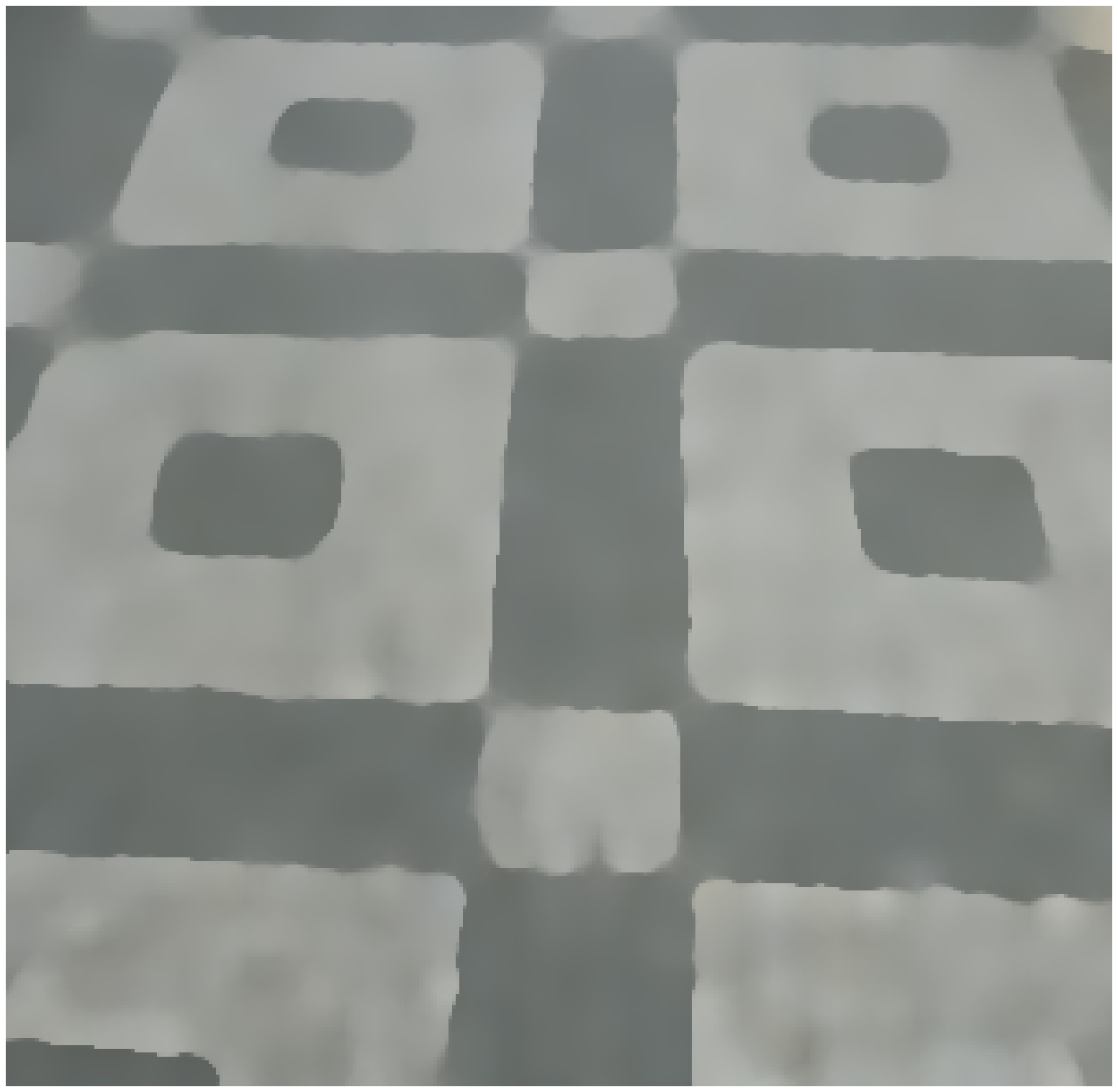}} \\
\end{tabular}
\caption {Multiscale separation of Street tiles image and comparison to RGF method}
\label{fig:tiles}
\end{figure}

\subsection{Generalization to Surfaces}

\begin{figure}
\centering
\begin{tabular}{cc}
\subfloat[Input image]{\includegraphics[height=1.1in]{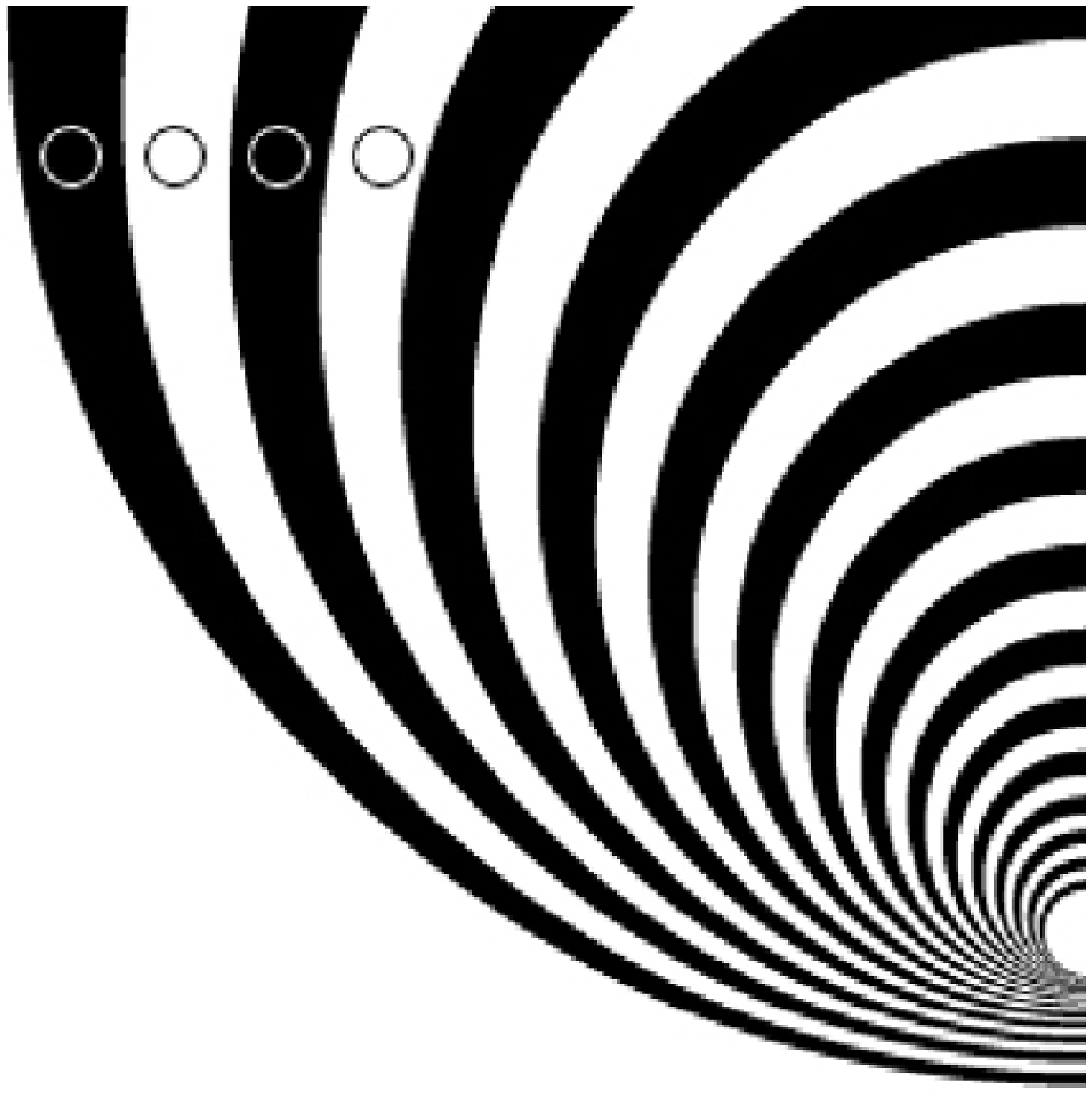}} &
\subfloat[Max. $\phi$ time]{\includegraphics[height=1.2in]{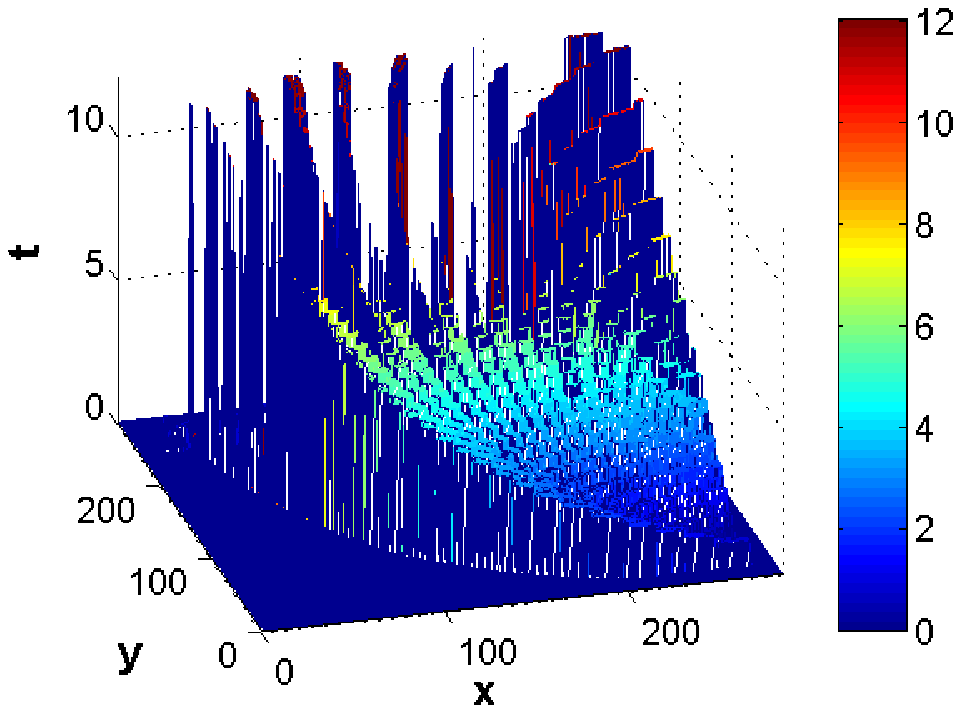}} \\
\subfloat[Texture surface]{\includegraphics[height=1.2in]{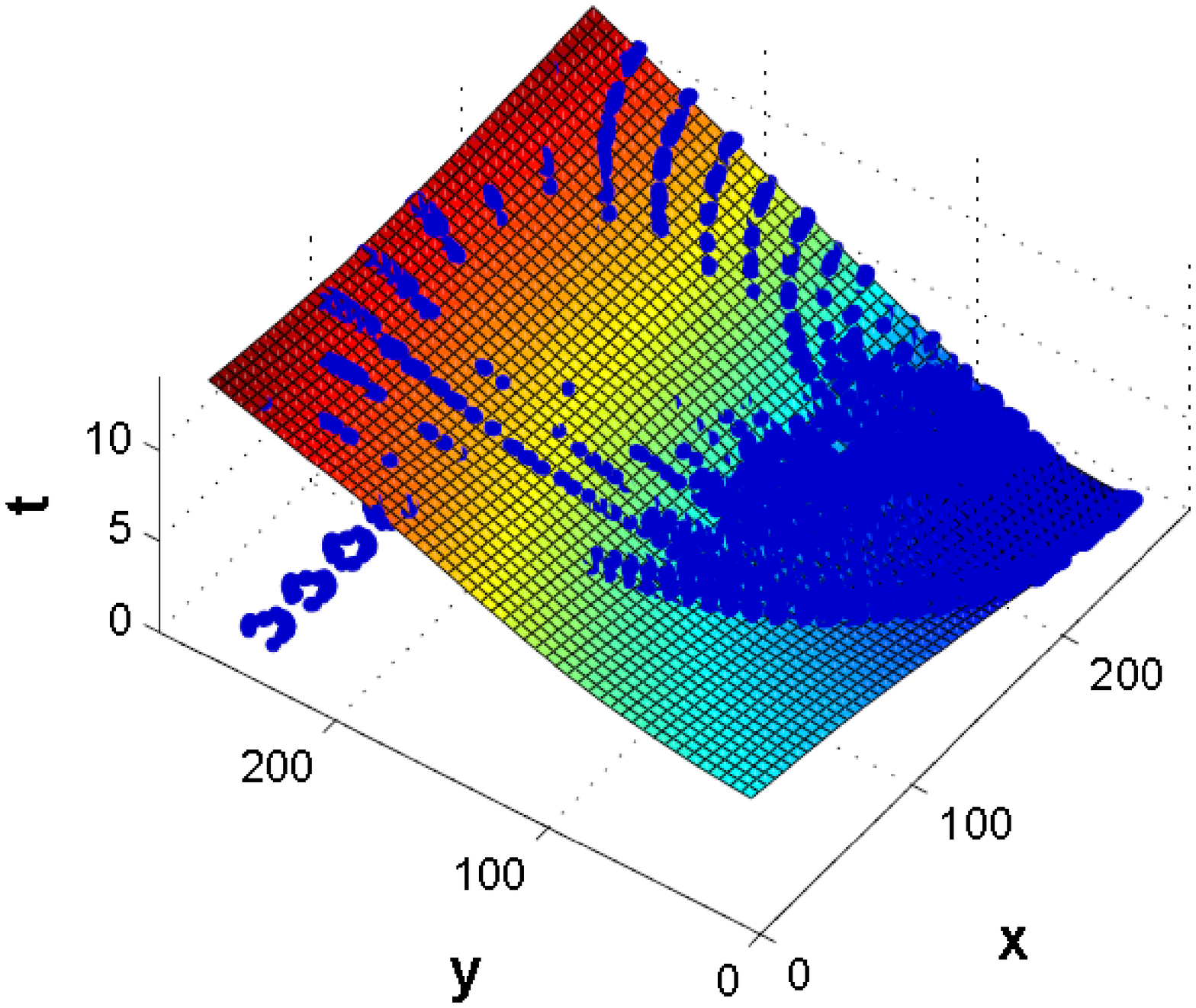}} &
\subfloat[Separation band]{\includegraphics[height=1.2in]{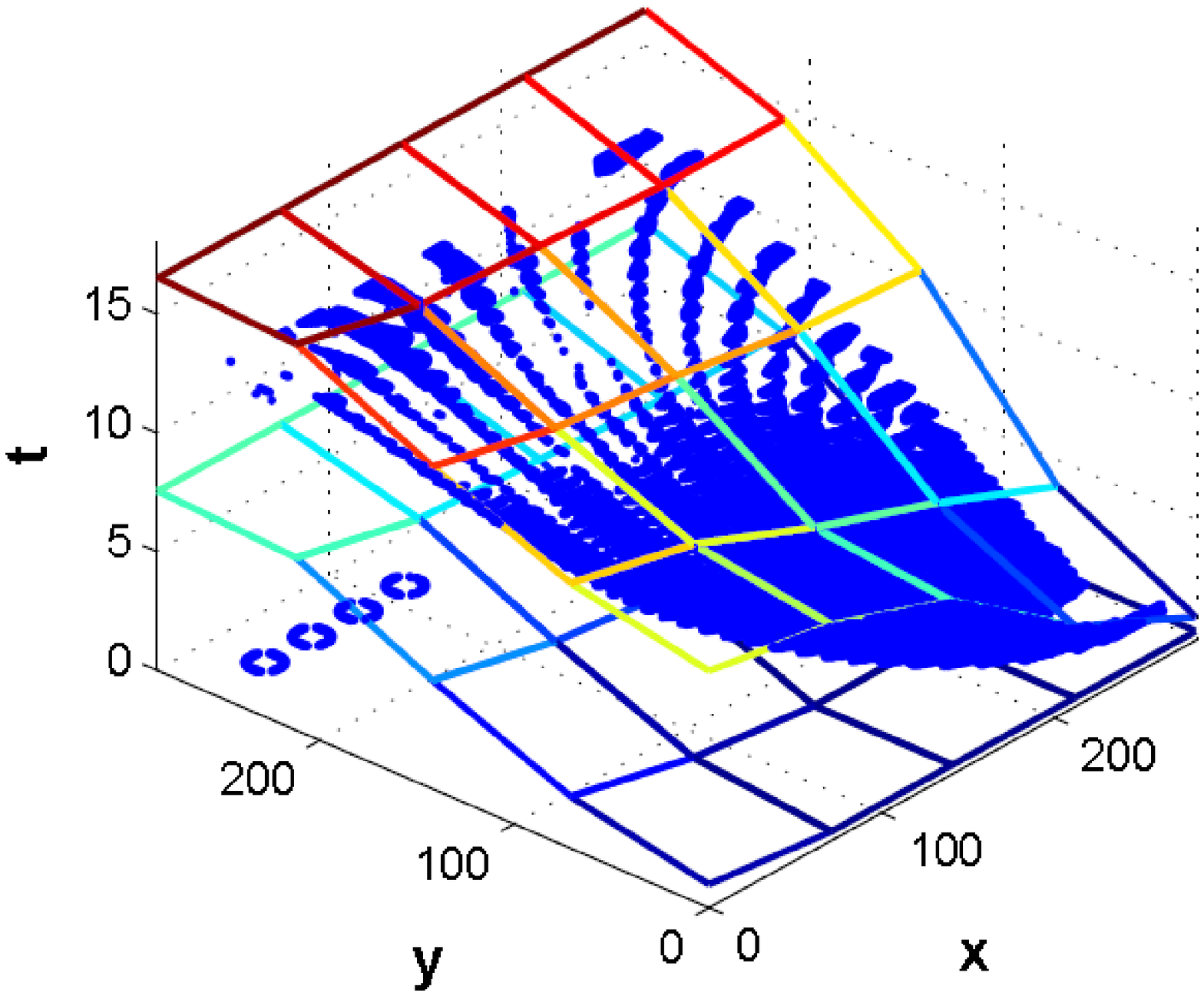}} \\
\subfloat[Decomposed layer 1 - proposed]{\includegraphics[height=1.1in]{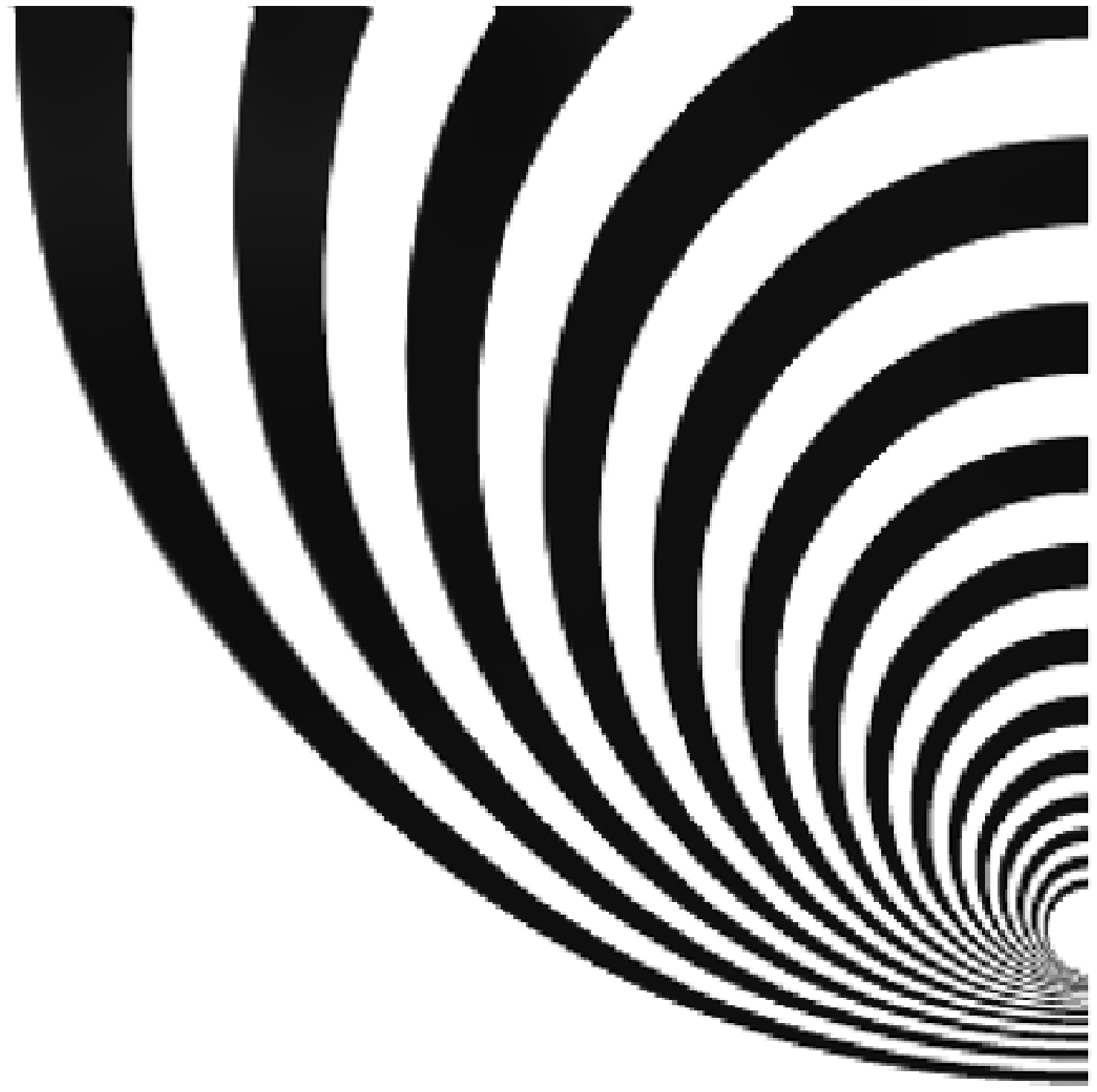}} &
\subfloat[Decomposed layer 2 - proposed]{\includegraphics[height=1.1in]{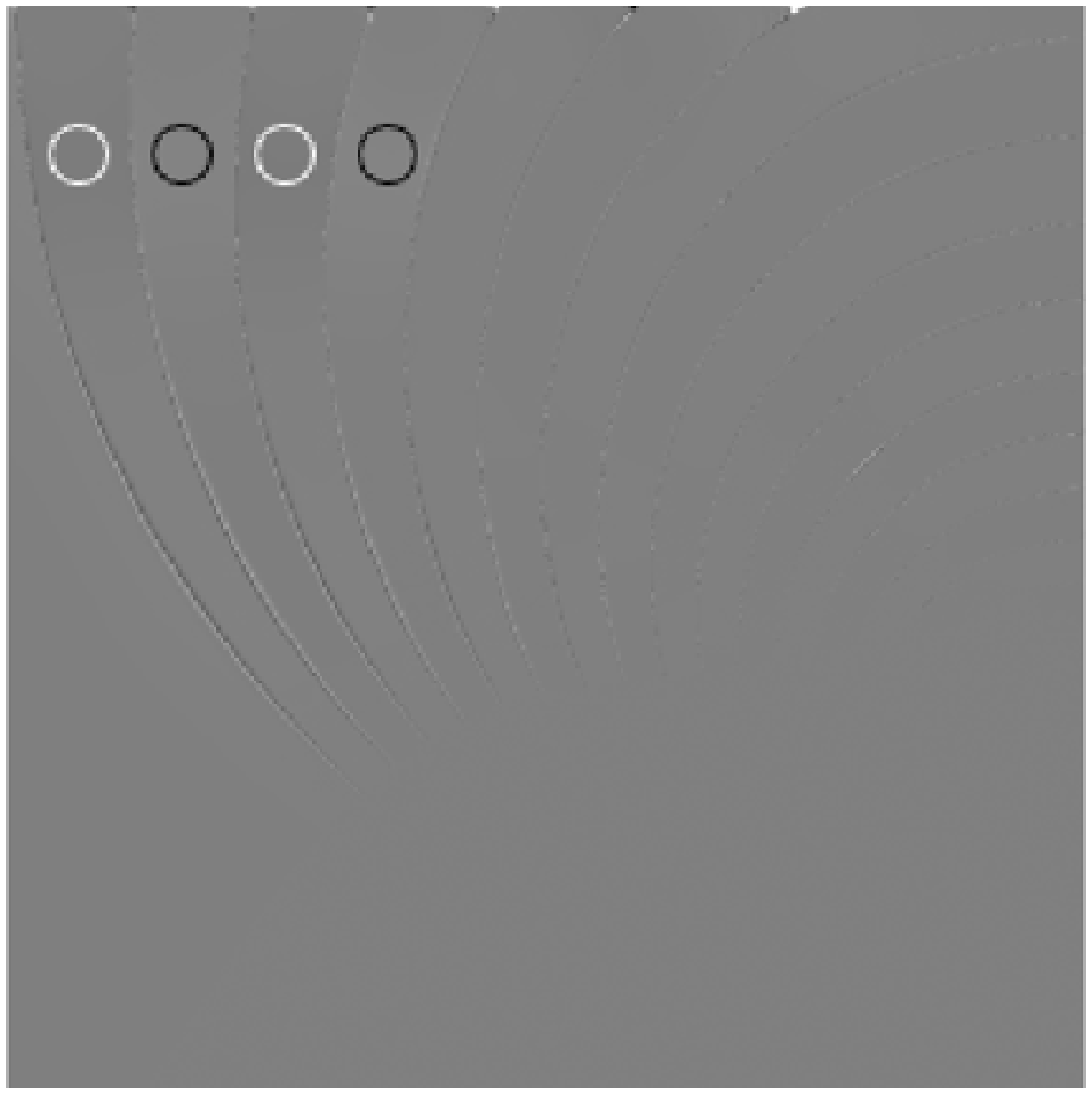}} \\

\subfloat[$t_s=0.6$, layer 1]{\includegraphics[height=1.1in]{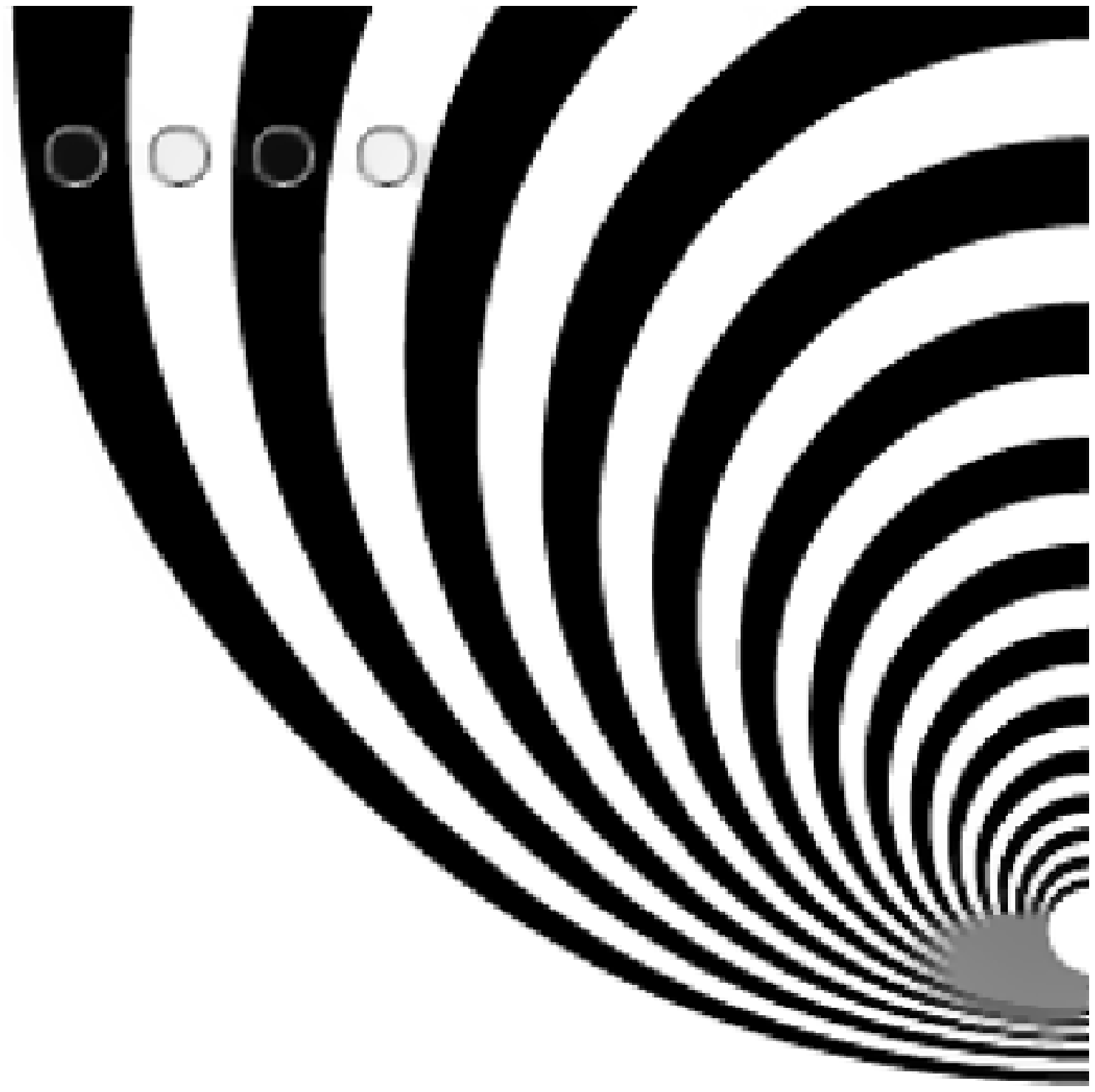}} &
\subfloat[$t_s=0.6$, layer 2]{\includegraphics[height=1.1in]{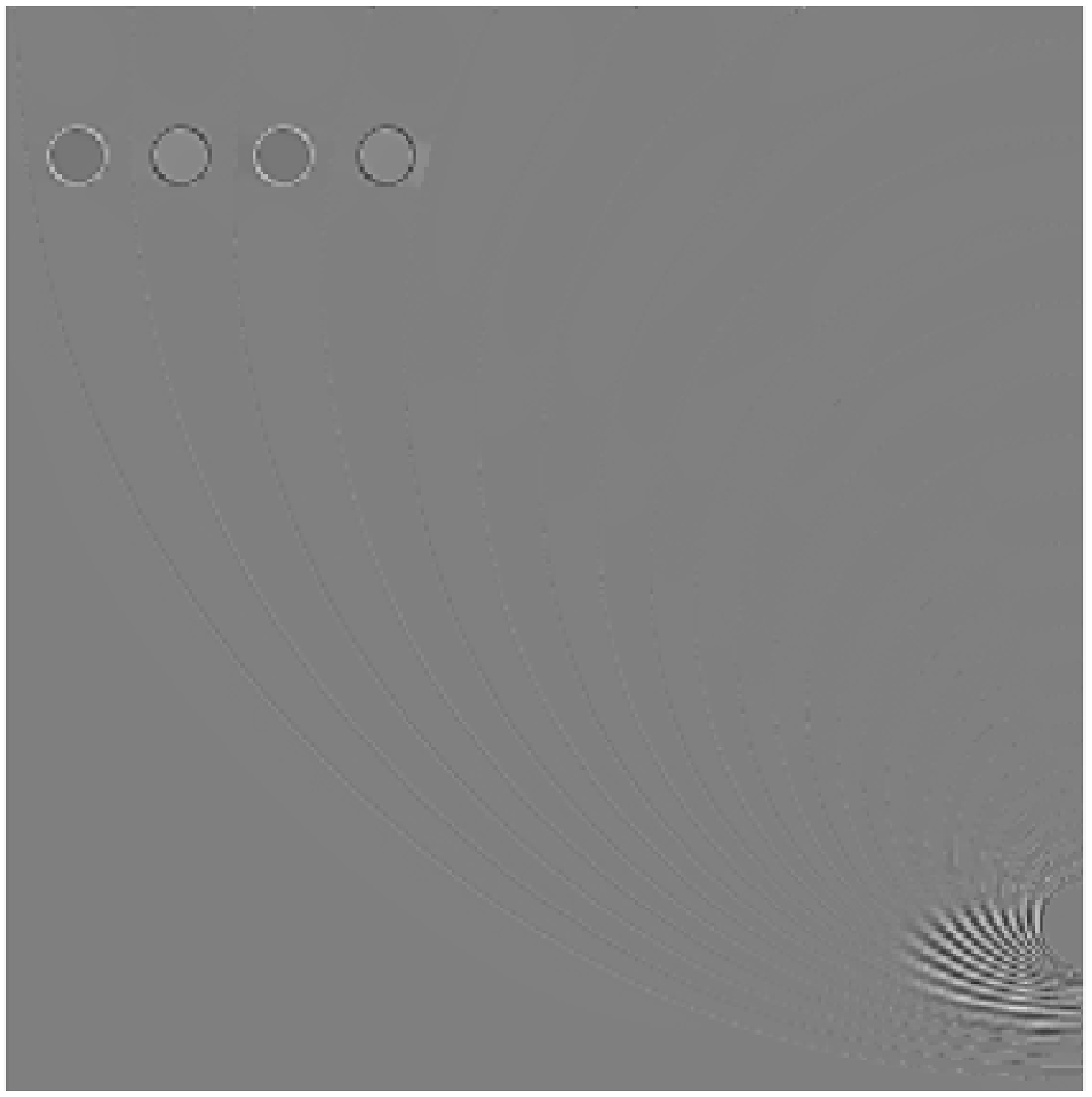}} \\
\subfloat[TV-G, layer 1]{\includegraphics[height=1.1in]{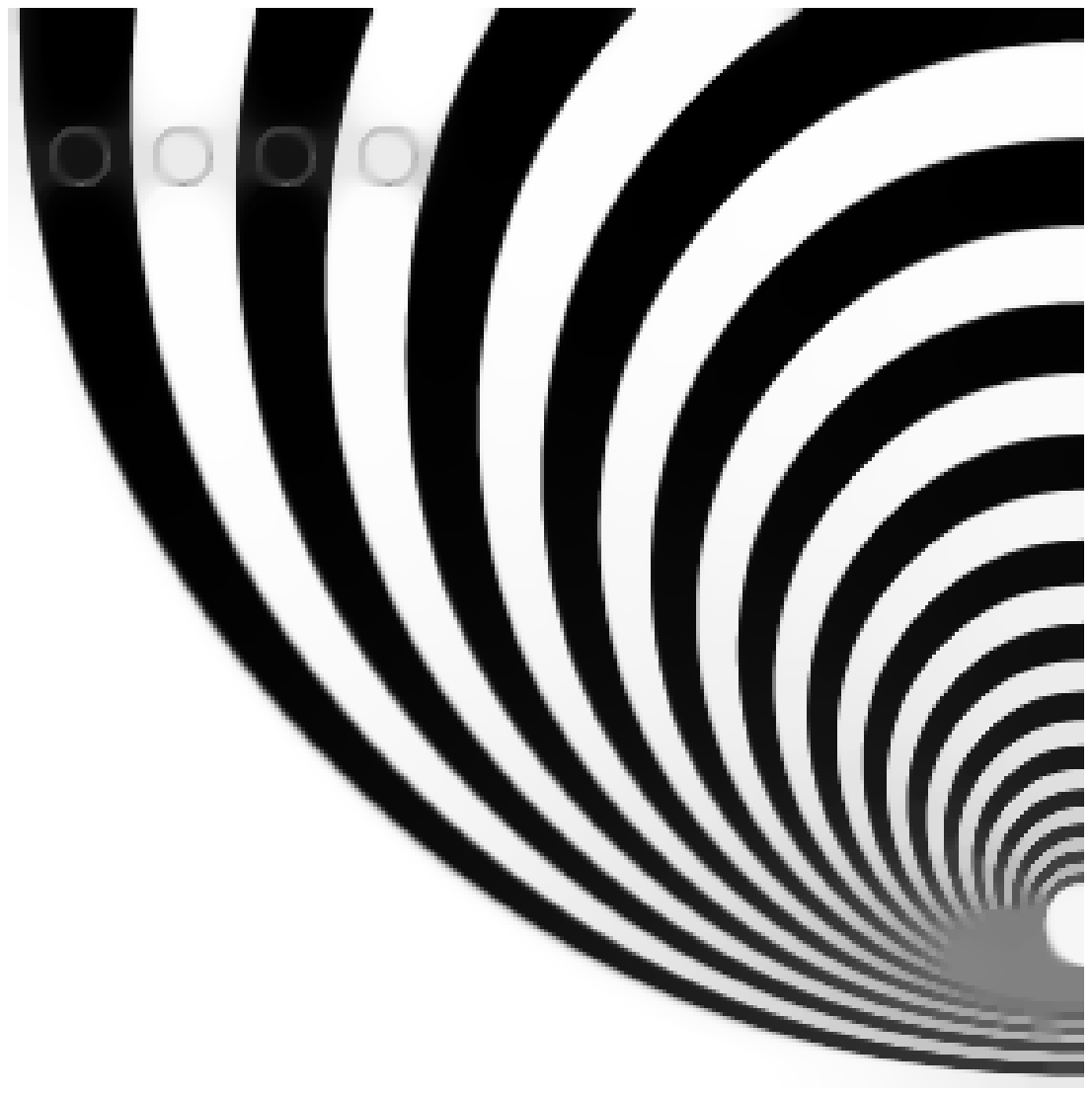}} &
\subfloat[TV-G, layer 2]{\includegraphics[height=1.1in]{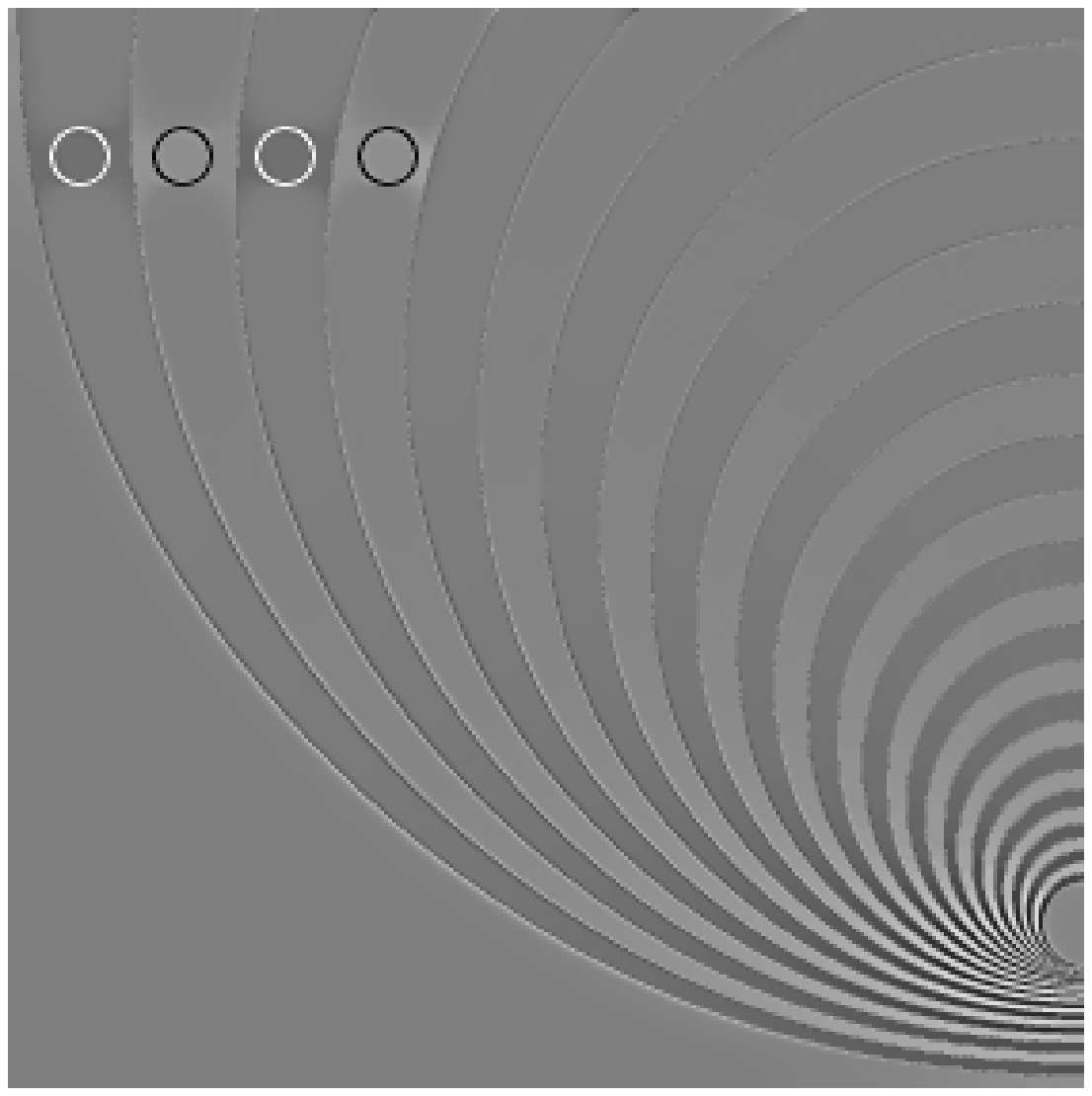}} \\
\subfloat[RGF, layer 1]{\includegraphics[height=1.1in]{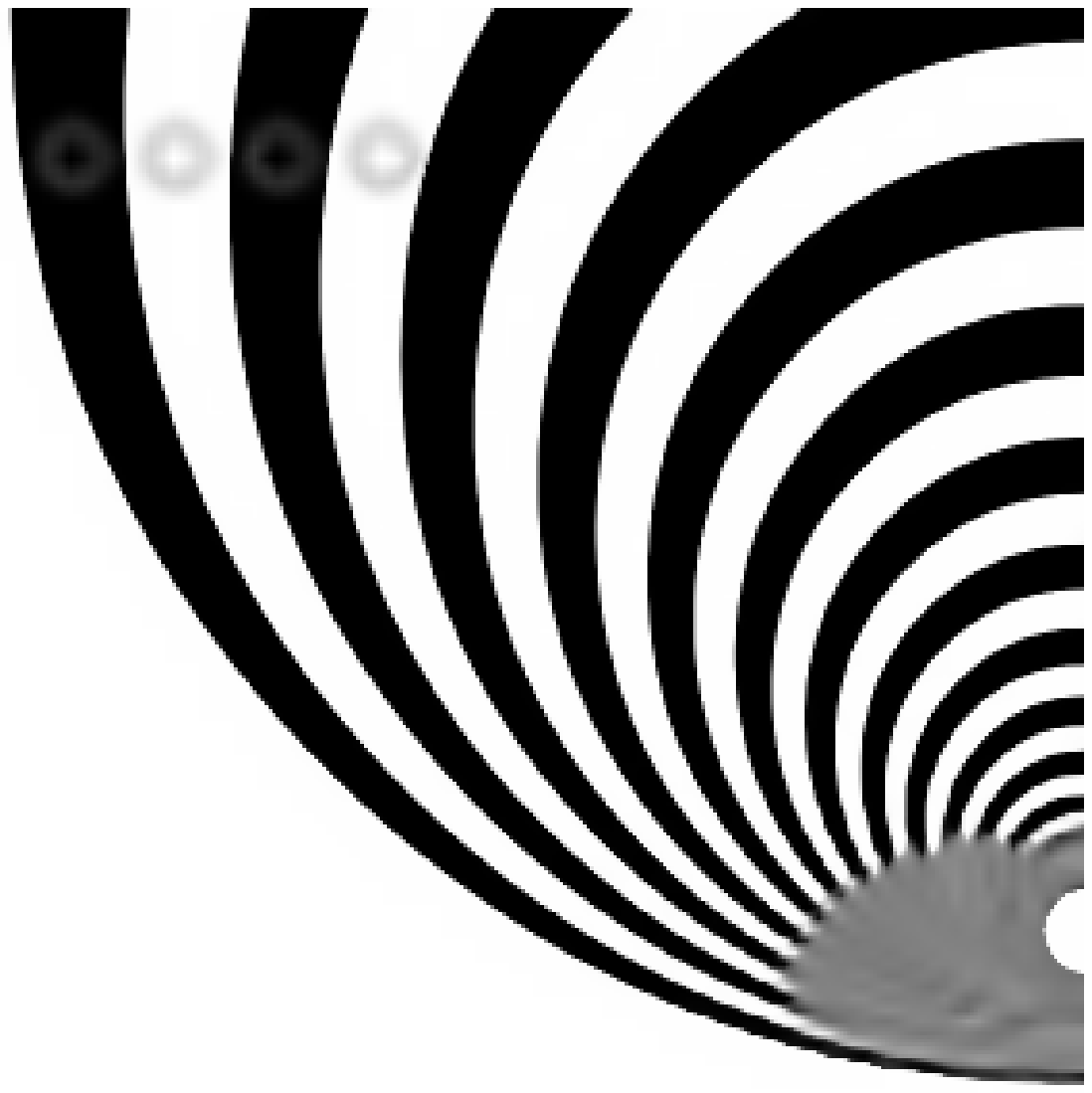}} &
\subfloat[RGF, layer 2]{\includegraphics[height=1.1in]{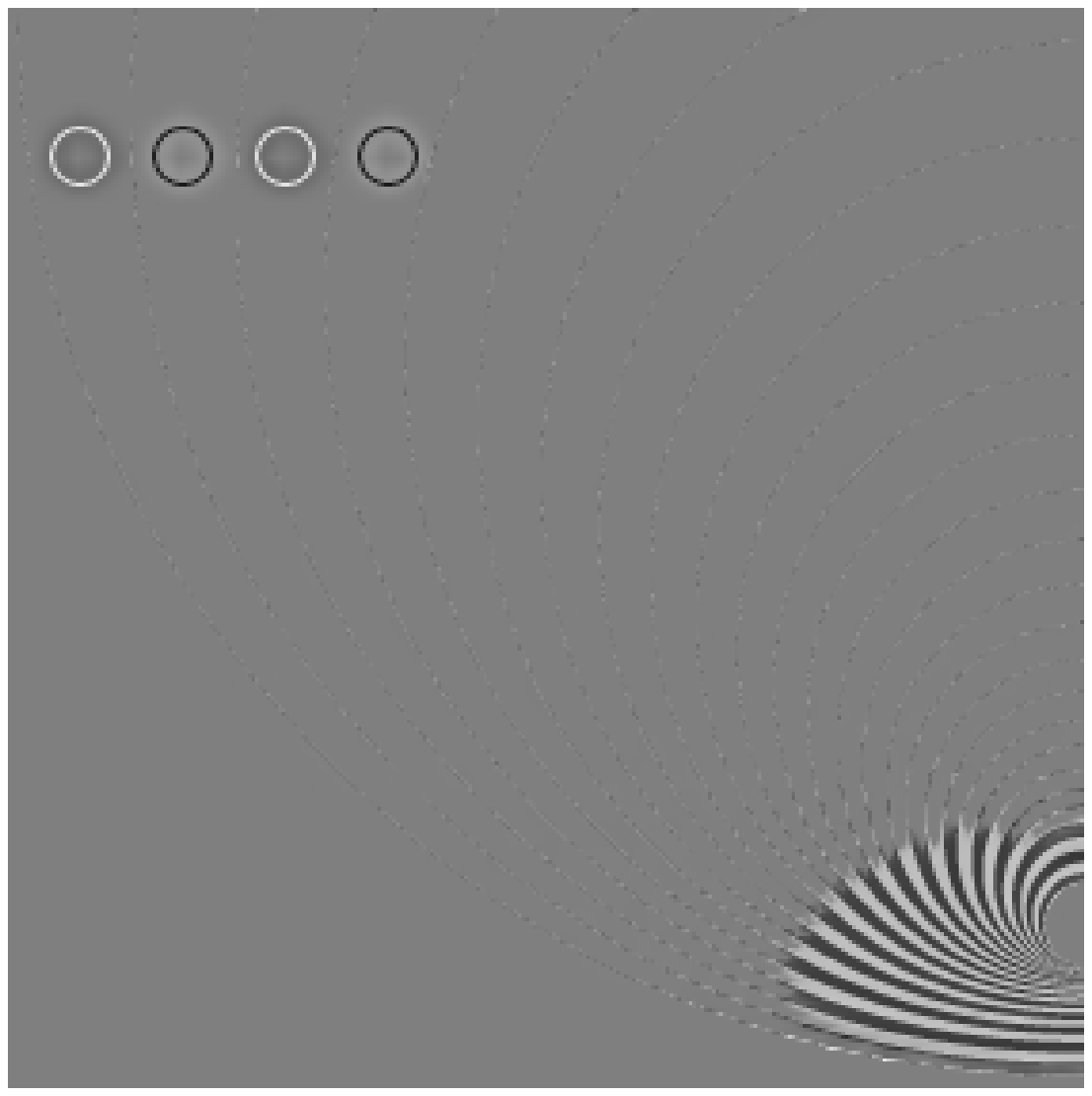}} \\
\end{tabular}
\caption {Decomposition of concentric circles image using a separation surface and comparison to TV-G and RGF.}
\label{fig:conc_result}
\end{figure}

We now show the general approach of separation surface or stratum extraction. We use this general approach when desired texture is more complex and does not linearly change in the image space. In this case we will perform the exact same actions to find the separation plane, but instead of fitting a plane to the texture filtered max. time data, we fit a surface.
We present here a simple method for surface fitting, using local linear regression, in Fig \ref{fig:conc_result}. In this example, we added 4 disks to an image of concentric circles with changing radius. We can see in the separation band image that the disks and the fine scales circles are at the same spectral scale, however, their spatial location in the image is different, and so, using a separation surface, the two textures are perfectly separated. We can see a comparison to separation at a specific time (scale) for the whole image, as was done in our previous work \cite{horesh2015multiscale}. In this case, while not all the disks features are captured in layer 2, the circles features are already contained in that layer, meaning that separating at any certain scale can not  bring to a perfect separation. More comparisons are to TV-G and to the state of the art RGF \cite{zhang2014rolling}, which both fail  getting good separating result.
Another example can be seen in the algorithm description (Fig. \ref{fig:algorithm}), where in the zebra image, the stripes were very nicely separated from its structure.
The separation surface can be optimally found  in different manners, according to the texture's features. One of them is to use the Gabor filters in order to find a separation surface for textures with definite texture orientation.
Another option is to use the Gaussian Mixture Model in order to differentiate textures with different distribution in space and time, and adapt the separation surface to it.



\section{Applications}
\label{sec:app}

 \subsection{Texture Manipulation}
Following an efficient texture separation, we can manipulate the different image layers in order to create sub-images, with enhanced texture or reduced texture. For example, in Fig \ref{fig:faces_enhance} we separate the faces only, maintaining the sharp stones' borders, and then, manipulate their level in the output image to get enhanced  or depressed facial features. In Fig. \ref{fig:chess_enhance} another texture manipulation example is shown, where the wood texture in the game board image is depressed and enhanced to a desired level. In the zebra image in Fig. \ref{fig:zebra_enhance} the stripes were extracted as depicted in Fig. \ref{fig:algorithm}, then by using a mask of the zebra itself, the stripes were enhanced and then inverted, so that the brown and white colors were replaced. Note that because of the stratum definition, including mostly the stripes, the other zebra features, such as nose and eyes maintained their color.

\begin{figure}
\centering
\begin{tabular}{cc}
\subfloat[Input Image]{\includegraphics[width=1.4in]{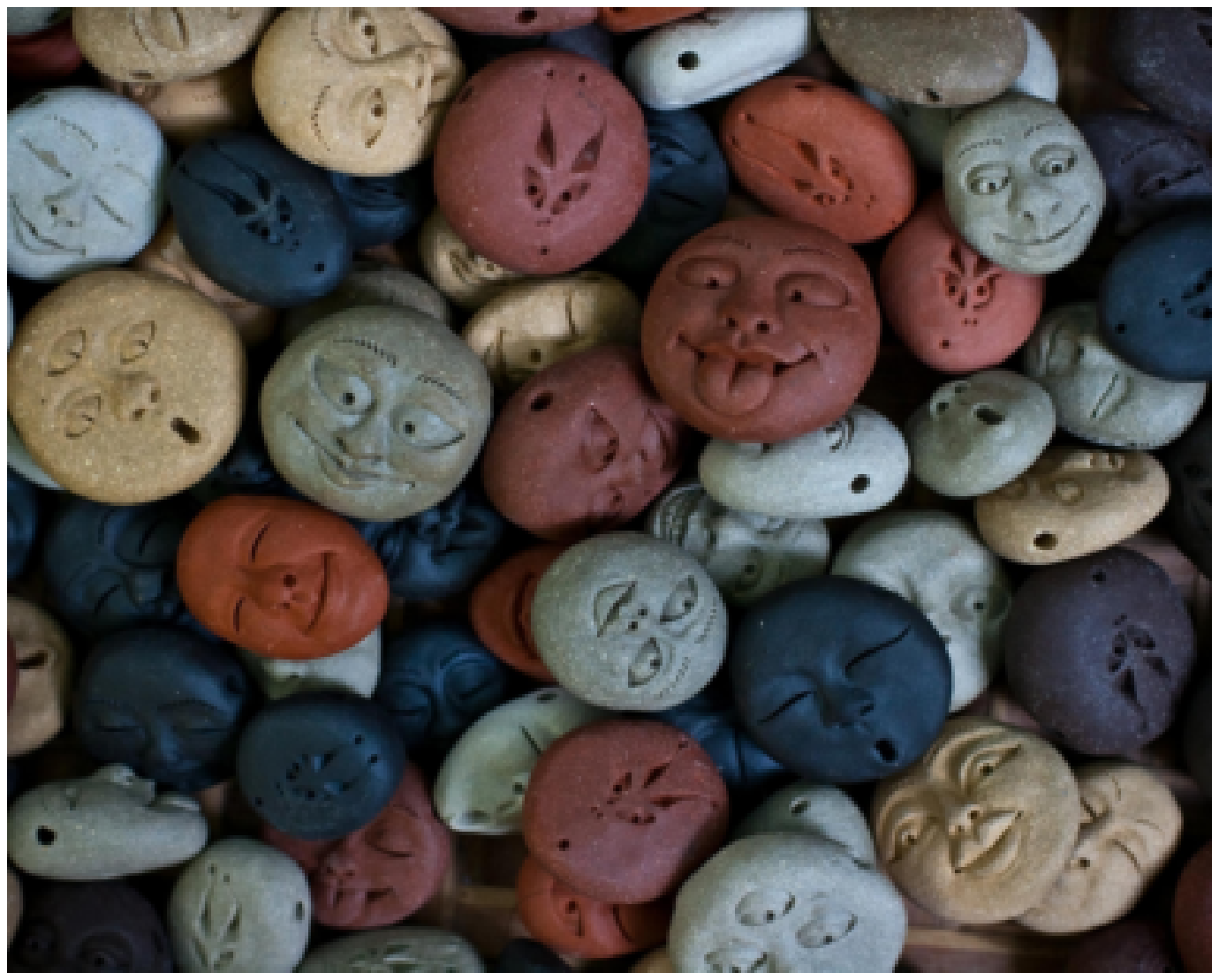}} &
\subfloat[Desired Texture]{\includegraphics[width=1.4in]{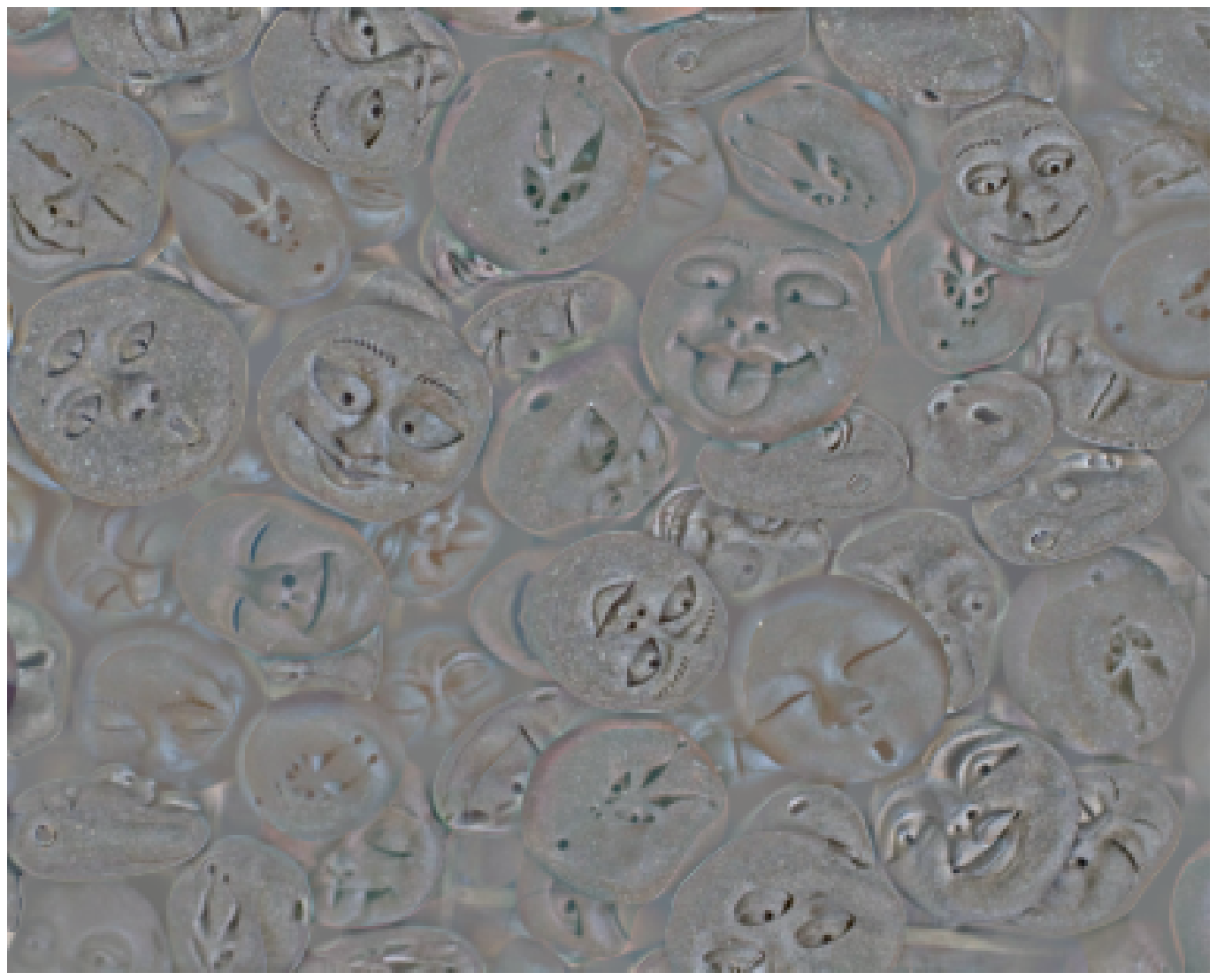}} \\
\subfloat[Attenuated texture]{\includegraphics[width=1.4in]{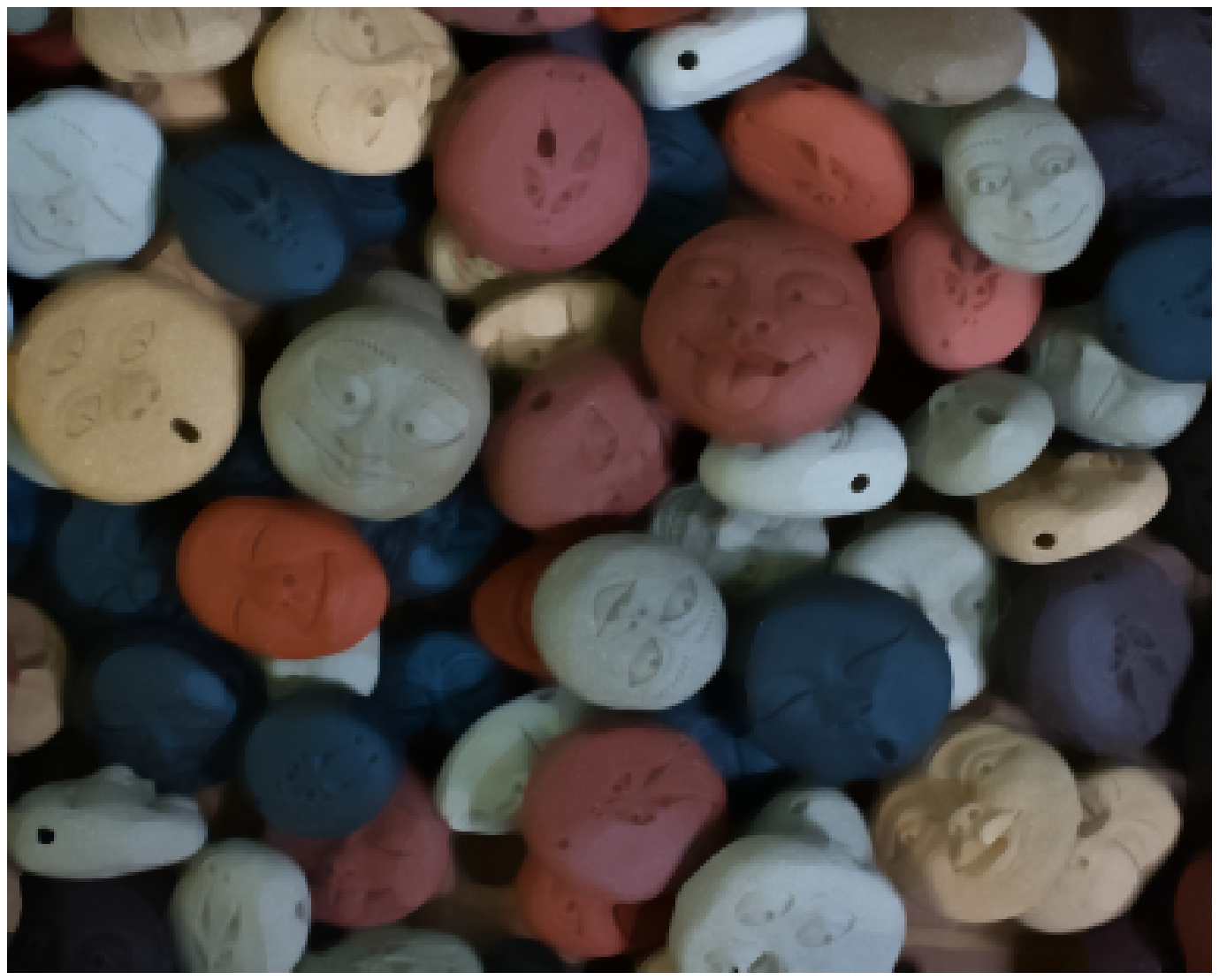}} &
\subfloat[Enhanced texture]{\includegraphics[width=1.4in]{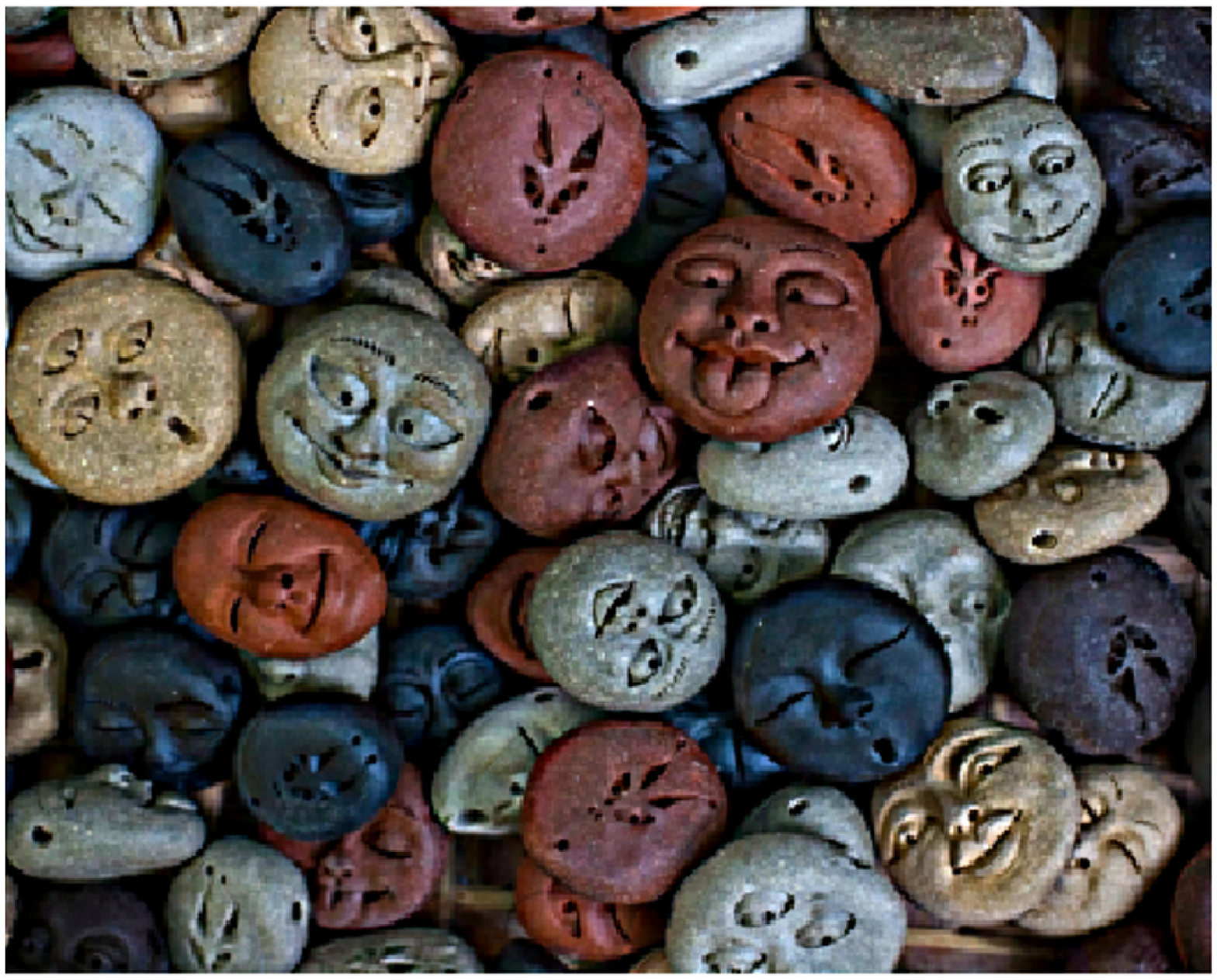}} \\
\end{tabular}
\caption {Example of manipulation of faces' features  in the image of Faces on stones}
\label{fig:faces_enhance}
\end{figure}

\begin{figure}
\centering
\begin{tabular}{cc}
\subfloat[Input Image]{\includegraphics[width=1.4in]{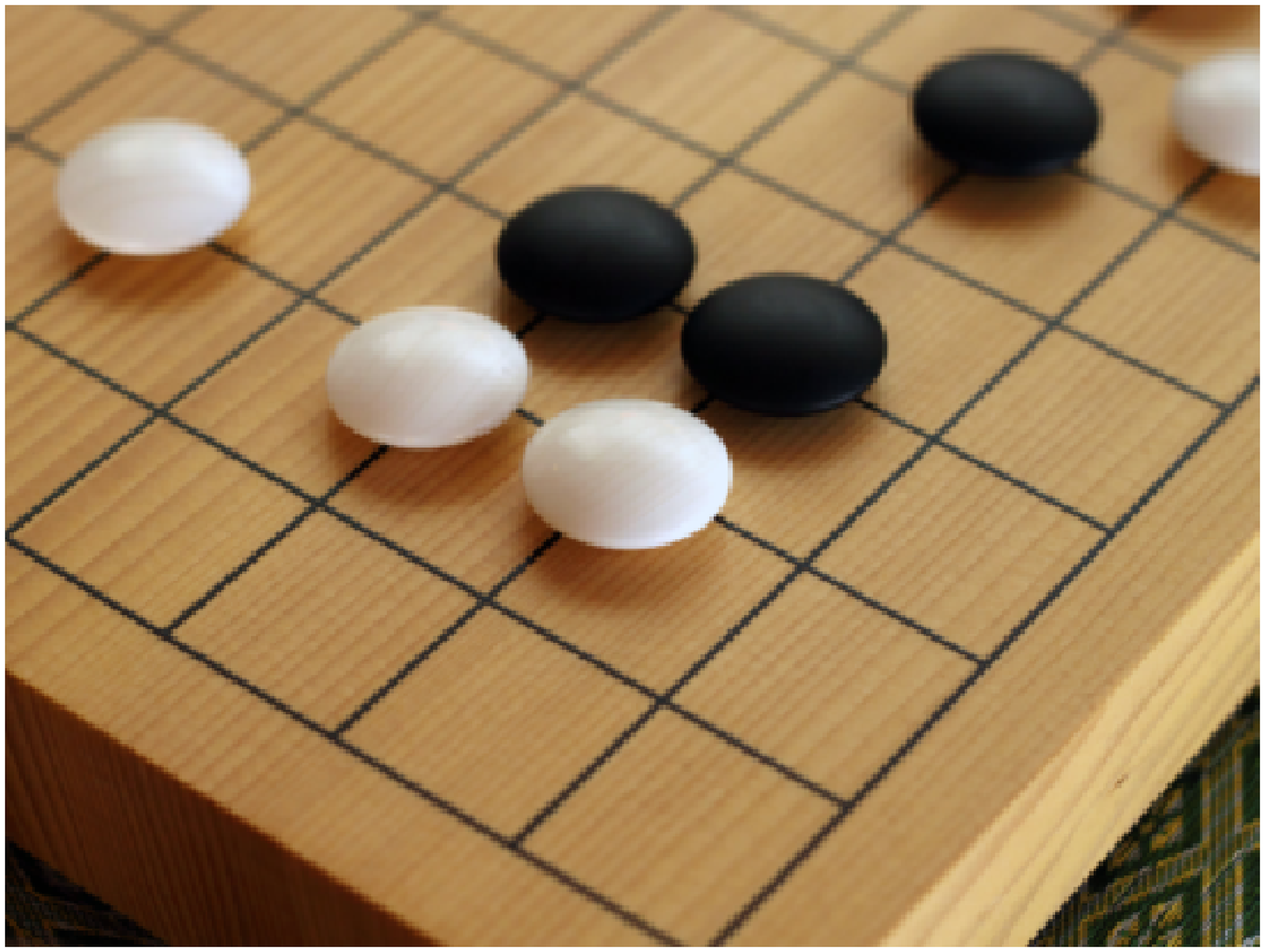}} &
\subfloat[Desired texture]{\includegraphics[width=1.4in]{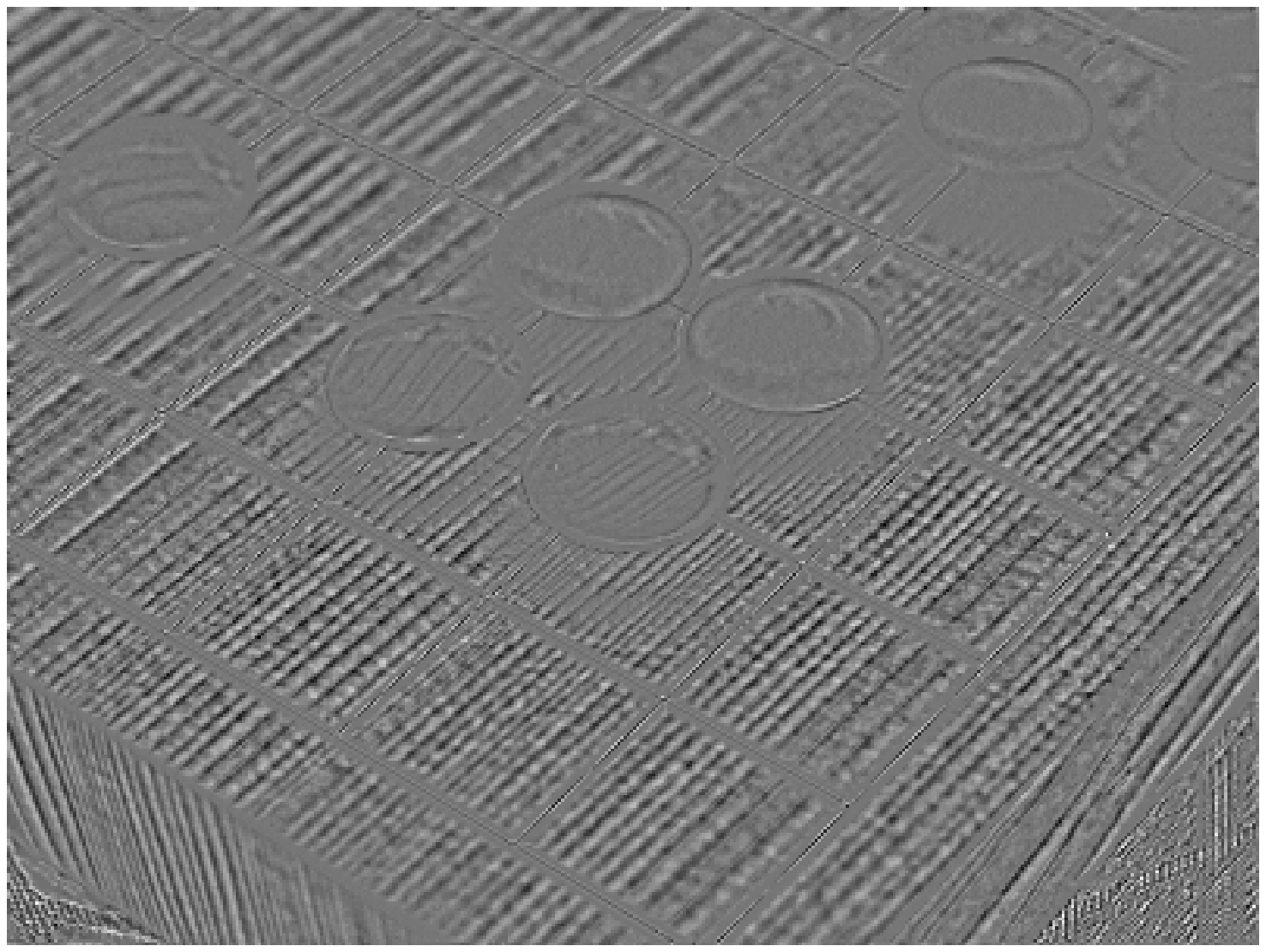}} \\
\subfloat[Attenuated texture]{\includegraphics[width=1.4in]{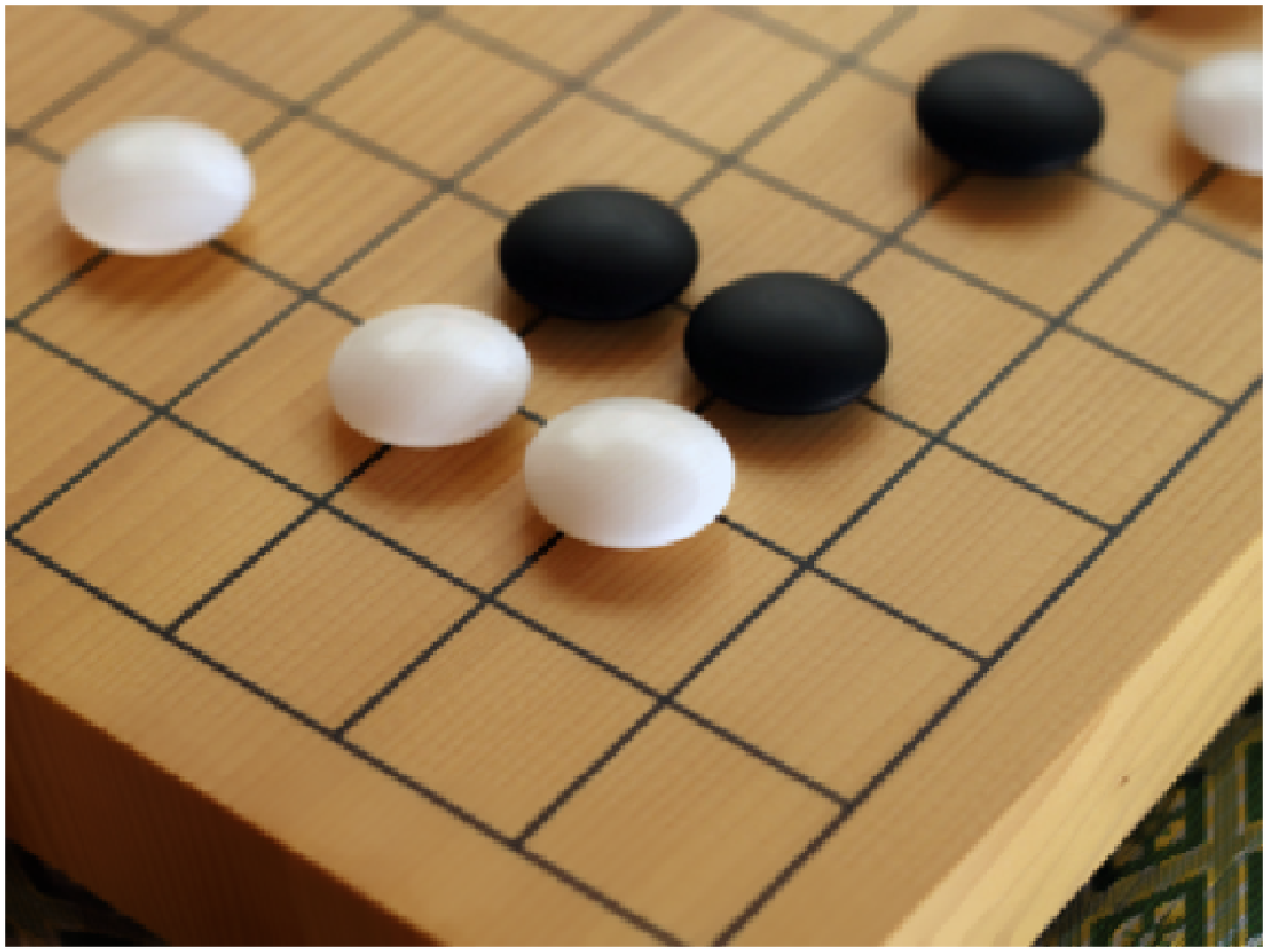}} &
\subfloat[Enhanced texture]{\includegraphics[width=1.4in]{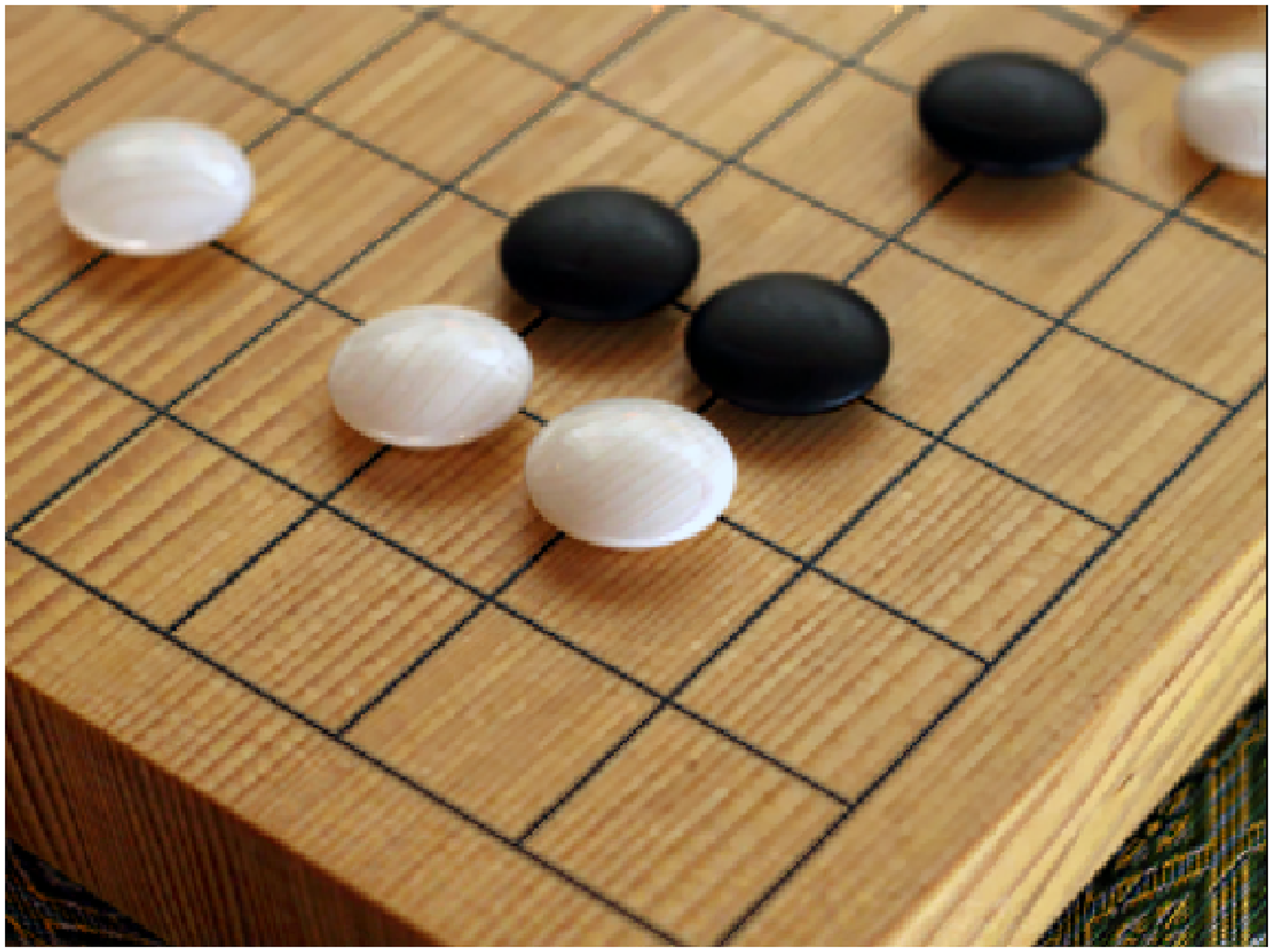}} \\
\end{tabular}
\caption { Example of manipulation of the wood texture in the game board image }
\label{fig:chess_enhance}
\end{figure}

\begin{figure}
\vspace{-10pt}
\centering
\begin{tabular}{ccc}
\subfloat[Input image]{\includegraphics[width=1in]{figs/zebra_in}} &
\subfloat[Enhanced stripes]{\includegraphics[width=1in]{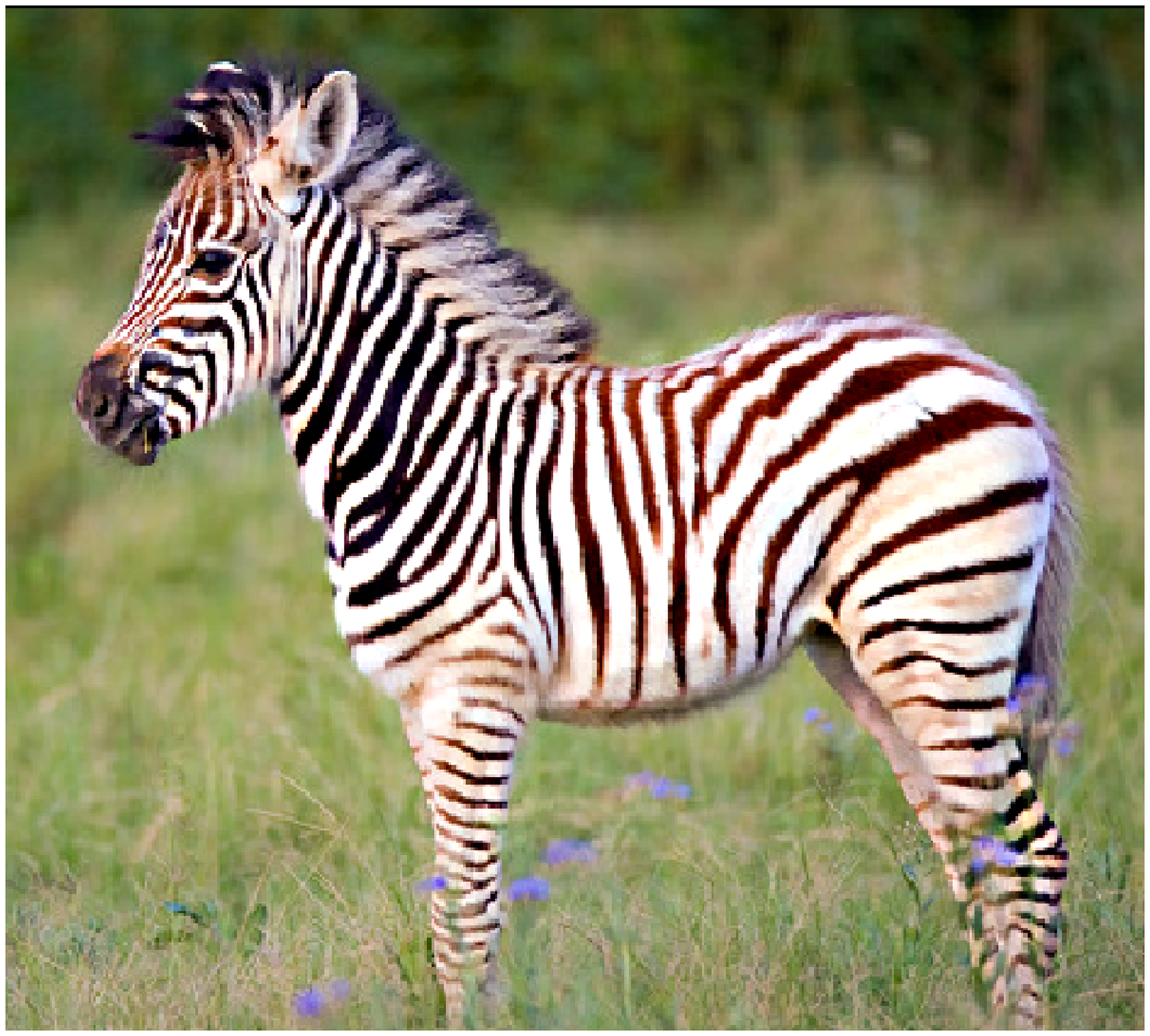}}  &
\subfloat[Inverted stripes]{\includegraphics[width=1in]{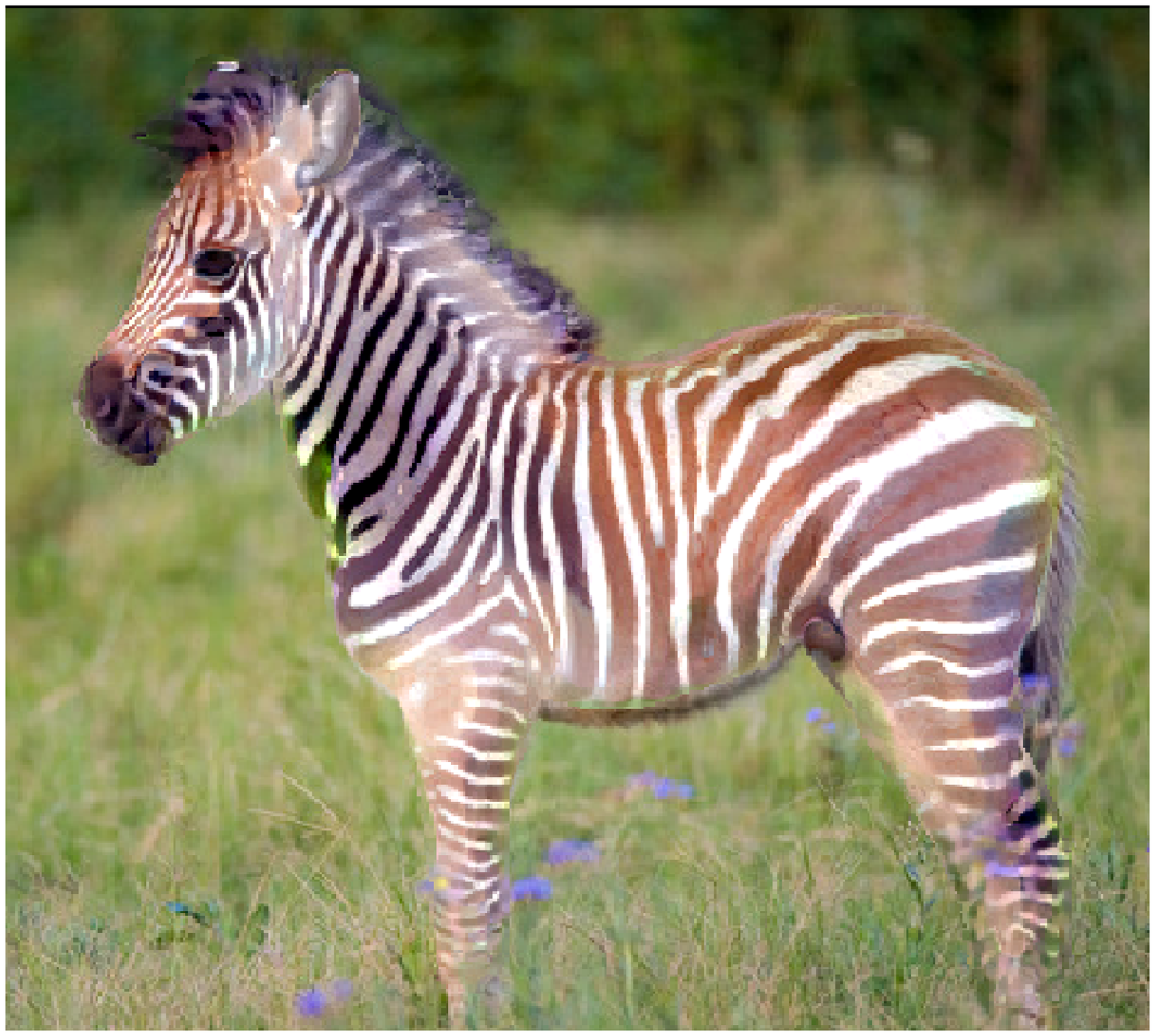}} \\
\end{tabular}
\caption {Stripes extraction and manipulation in the Zebra image.}
\label{fig:zebra_enhance}
\end{figure}

\subsection{Texture Donation}
Another application is texture donation, in which a degraded image is enhanced and its fine texture is recovered by matching a prior out of a set, using our scale-orientation descriptor, to get a perfect patch match and visually good recovery of the image.
In Fig. \ref{fig:Hair_donation} we can see a hair example, on the left image a degraded hair sample is shown, in the middle is the matching hair texture donor, selected out of many hair priors.  Its mirror image was taken for a texture match. On the right, the recovered hair image, composed of the degraded texture patch and decomposed fine scale texture of the donor patch. We can see the recovered patch inside the original hair image to see how natural it looks.

\begin{figure}
\vspace{-10pt}
\centering
\includegraphics[width=3.4in]{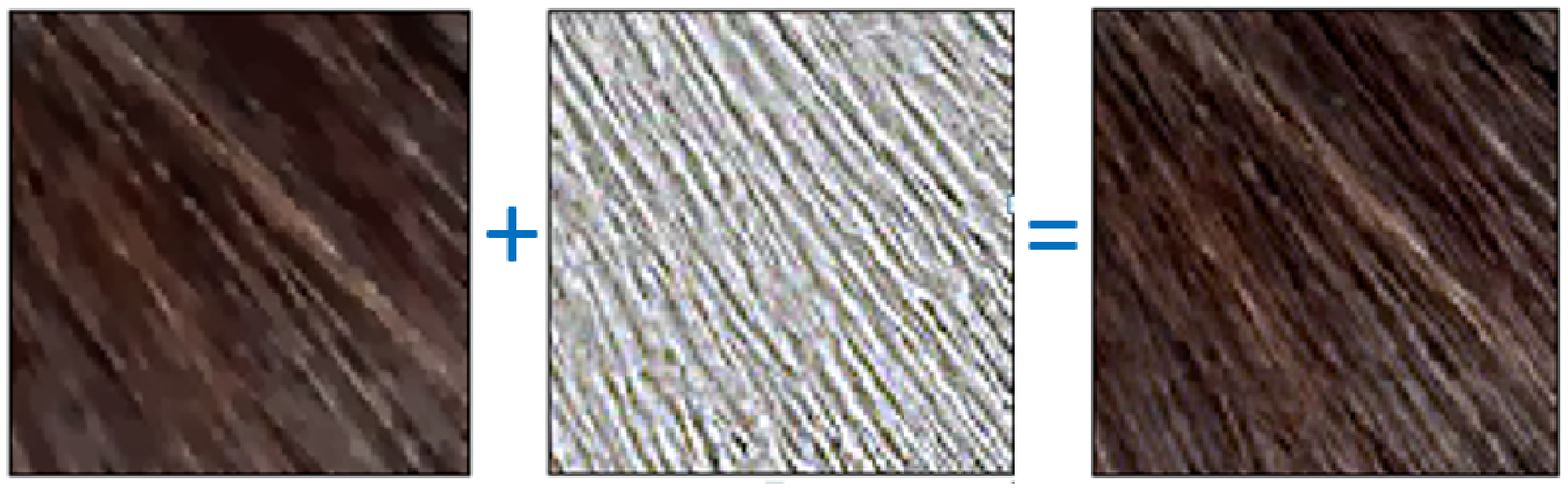}
\centering
\begin{tabular}{ccc}
\subfloat[Degraded]{\includegraphics[width=1in]{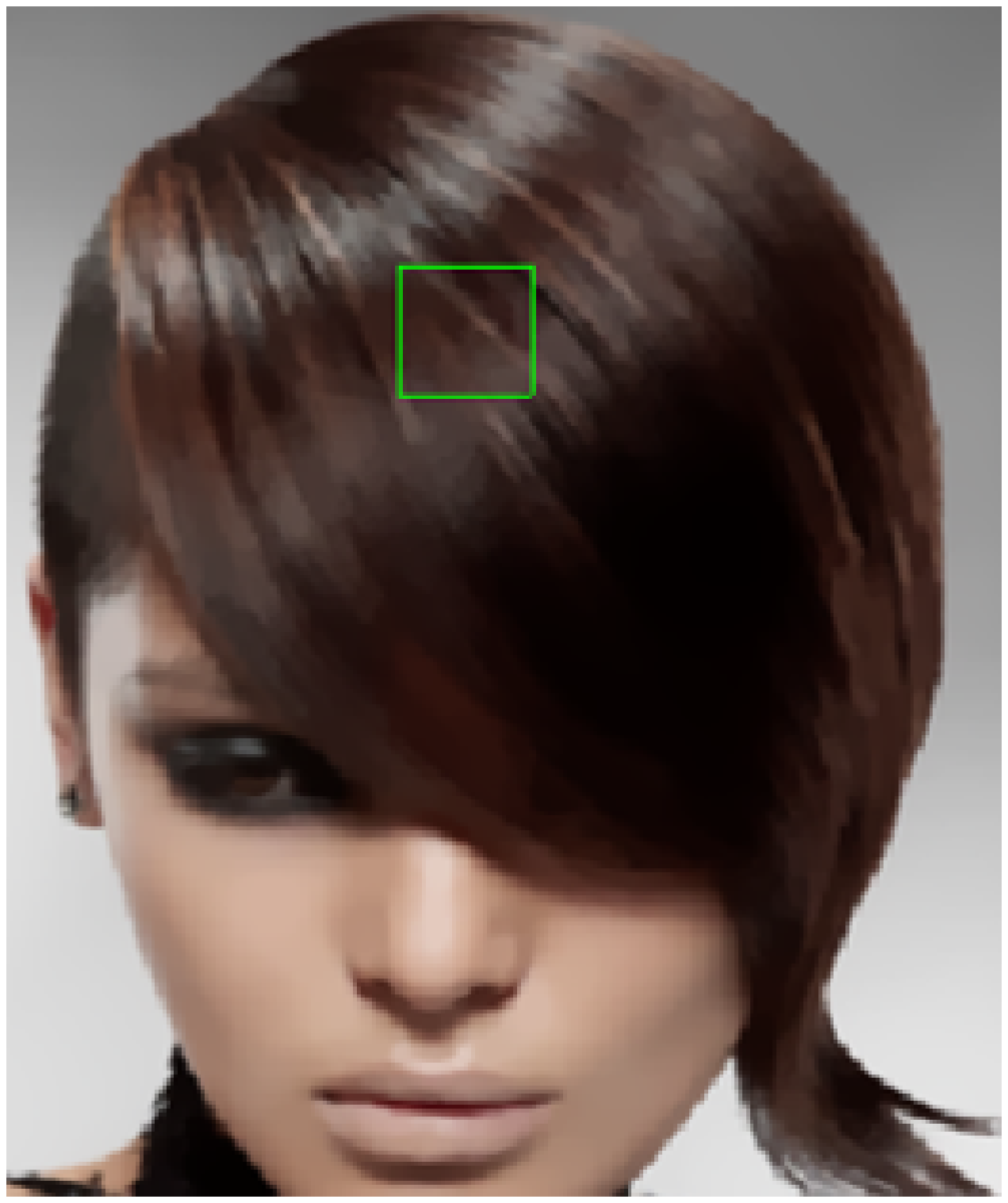}} &
\subfloat[Donor]{\includegraphics[width=1in]{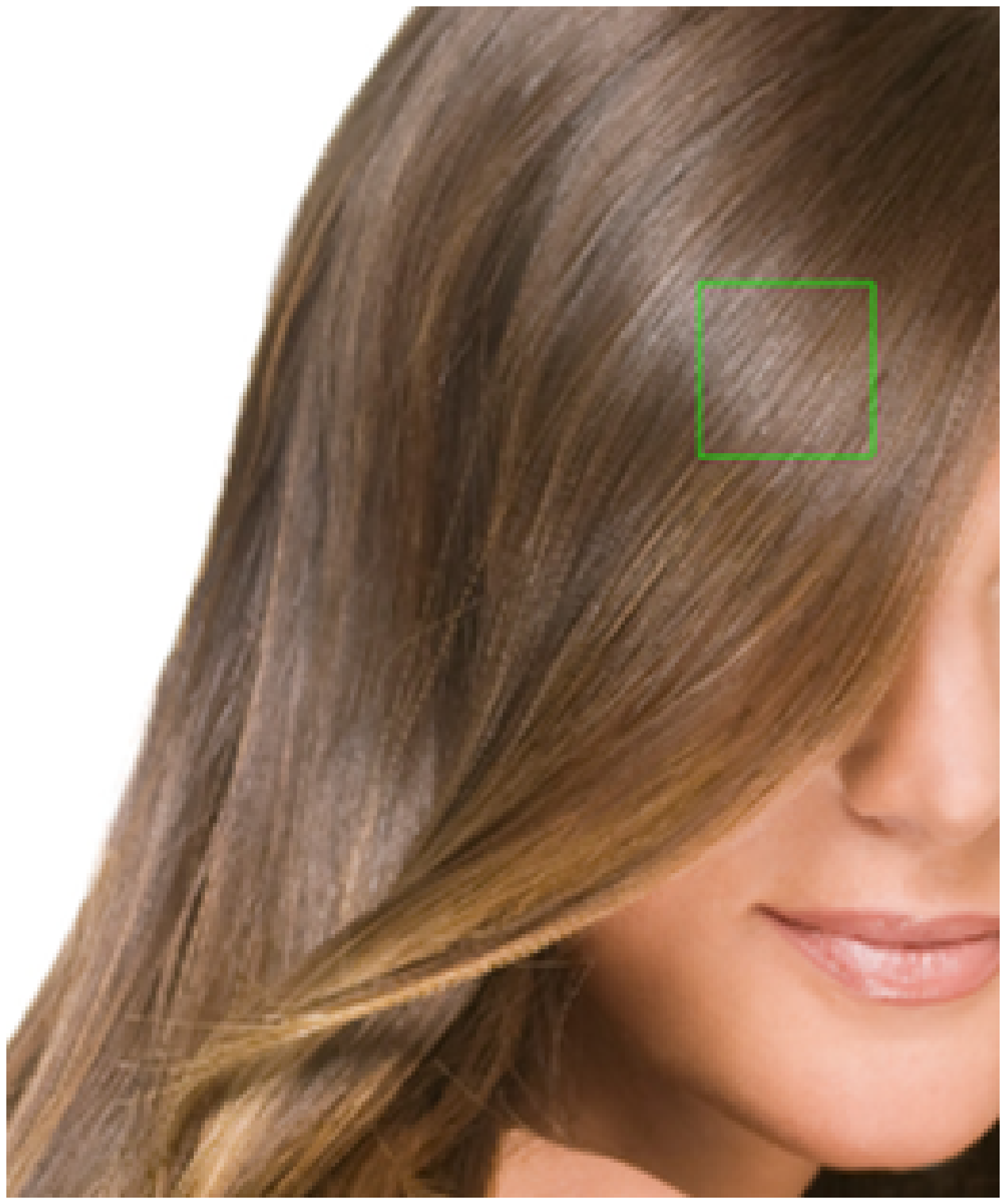}}  & 
\subfloat[Recovered]{\includegraphics[width=1in]{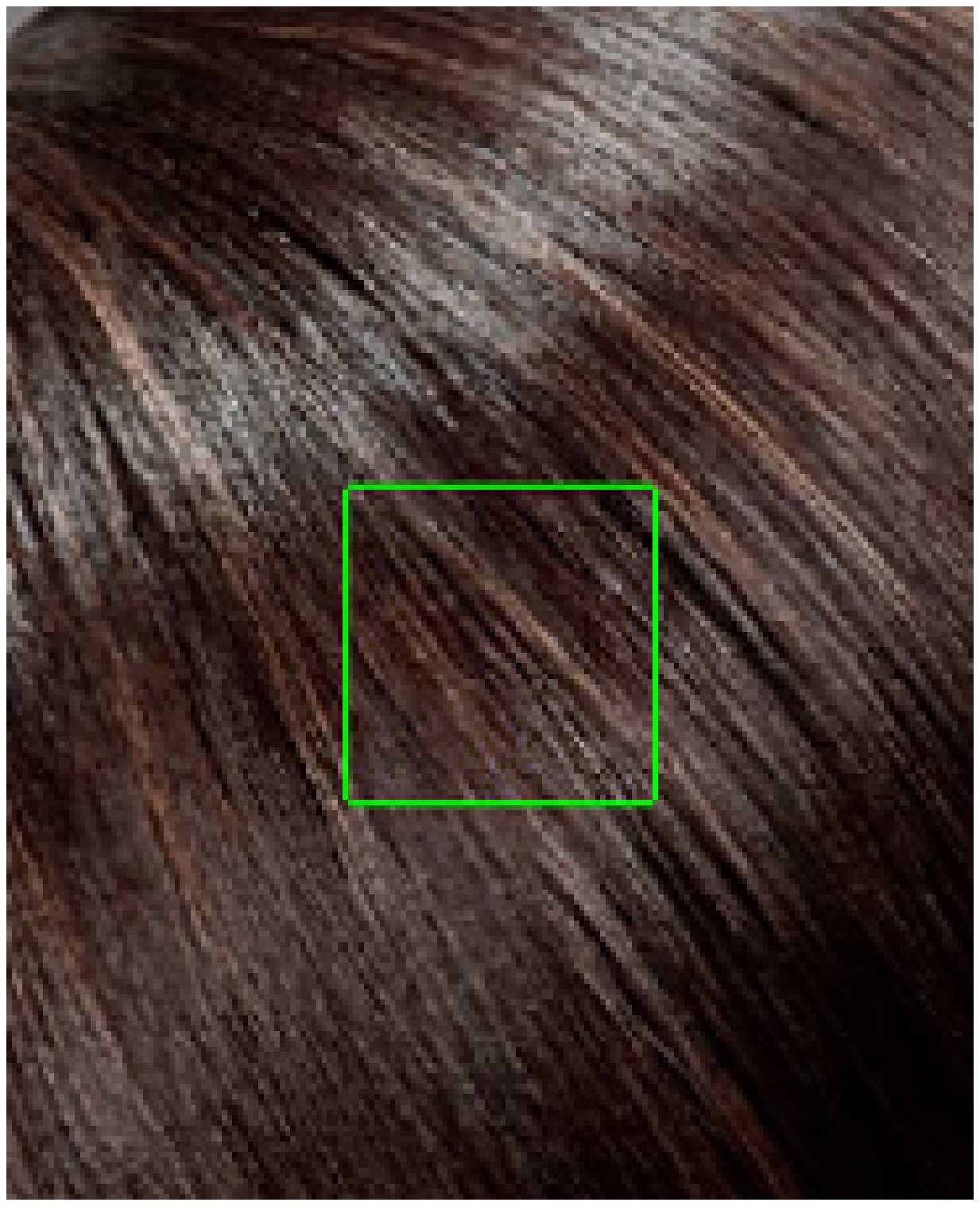}} \\
\end{tabular}
\caption {Hair texture donation example.}
\label{fig:Hair_donation}
\end{figure}
\vspace{-10pt}

\section{Conclusion}
\label{sec:conc}
A novel concept of separation surface was introduced, depicting a  stratum of a desired texture, not necessarily homogeneous in space and scale. The surface was found using regression of the maximal responses in the spectral TV domain. The surface can be fitted using methods other than maximal response, such as the Gabor filters for orientated textures, or the Gaussian mixture model for a mixture of textures with different distribution in space and scale. Image decomposition using a separation surface can be very beneficial in cases where the texture varies within the image, while preserving its characteristic. It can separate mixed textures in a highly accurate manner, compared to state-of-the-art methods. An application of texture manipulation was presented, in which the selected textures in the image can be attenuated, enhanced or even inverted, in a naturally looking formation. In a future work, we plan to examine additional methods to automatically form the surface. We would also like to broaden the use of
 multiscale decomposition to other applications.


%




%


\ifCLASSOPTIONcaptionsoff
  \newpage
\fi



\bibliographystyle{IEEEtranS}

\bibliography{./IEEE_bib}

%


%

\begin{IEEEbiography}{Dikla Horesh} received the B.Sc. degree in Biomedical engineering from the Technion, Israel, in 2006. She is currently working towards the M.Sc. degree  in Electrical engineering from the Technion, Israel.
\end{IEEEbiography}

\begin{IEEEbiography}{Guy Gilboa} is a faculty member at the Electrical Engineering Department, Technion, Israel, since 2013.
\end{IEEEbiography}







\end{document}